\title{Restricted Tweedie Stochastic Block Models}%\query{Q1}
\date{}
\author{Jie Jian \thanks{Corresponding author: j5jian@uwaterloo.ca} \hspace{1em}  Mu Zhu \hspace{1em}   Peijun Sang \\
    Department of Statistics and Actuarial Science, University of Waterloo}
\newcommand{\var}{\textnormal{var}}
\newcommand\bm[1]{\boldsymbol #1}
\DeclareMathAlphabet\mathbfcal{OMS}{cmsy}{b}{n}
\def\E{\mathbb{E}}
\def\P{\mathbb{P}}
\def\real{\mathbb{R}}
\def\scsum#1#2{\sum\limits_{#1}^{#2}}%JJ
\def\scprod#1#2{\prod\limits_{#1}^{#2}}%JJ
\DeclareMathOperator*{\argmax}{arg\,max} %JJ
\def\ind{\mathds{1}(z_i=k,z_j=l)}
\def\indCi{\mathds{1}(c_i=k)}
\def\indCiCj{\mathds{1}(c_i=k,c_j=l)}
\newtheorem{mycondition}{Condition}[section]
\newtheorem{Lemma}{Lemma}
\newtheorem{Remark}{Remark}
\newtheorem{theorem}{Theorem}
\newcommand{\myred}[1]{\textcolor{black}{#1}}
\newcommand{\JJ}[1]{\textcolor{black}{#1}}
\begin{document}
\maketitle

% Abstract, keywords, and classification codes
\begin{abstract}
The stochastic block model (SBM) is a widely used framework for community detection in networks, \JJ{where the network structure is typically represented by an adjacency matrix}. However, conventional SBMs \JJ{are not directly applicable} \myred{to an adjacency matrix that consists of non-negative zero-inflated continuous edge weights}. To model the  international trading network, where edge weights represent trading values between countries, we propose an innovative SBM based on a restricted Tweedie distribution.
Additionally, we incorporate nodal information, \JJ{such as} the geographical distance between countries, and account for its dynamic effect on edge weights. Notably, we show that given a sufficiently large number of nodes, estimating \JJ{this covariate} effect \JJ{becomes} independent of community labels of each node when computing the maximum likelihood estimator of parameters in our model. This result enables \JJ{the development of} an efficient two-step algorithm that separates the estimation of covariate effects from other parameters.
\JJ{We demonstrate the effectiveness of our proposed method through extensive simulation studies and an application to real-world international trading data.}

%Extensive simulation studies are conducted to demonstrate the performance of our proposed model. 
%Furthermore, we apply our model to the international apple trading data. 

    \textbf{Keywords:} Stochastic block model, community detection, network analysis, compound Poisson-Gamma distributions, dynamic effects.
\end{abstract}

\section{Introduction}
\label{sec-sbm-intro}

\subsection{Background}
A community can be conceptualized as a collection of nodes that exhibit similar connection patterns in a network. 
Community detection is a fundamental problem in network analysis, with wide applications in social network \citep{bedi2016community}, marketing \citep{bakhthemmat2021communities}, recommendation systems \citep{gasparetti2021community}, and political polarization detection \citep{guerrero2017community}. 
%, whereby connectivity between two nodes is determined solely by their respective community affiliations.
%that are heavily connected within groups but sparsely connected between groups. 
Identifying communities in a network not only enables nodes to be clustered according to their connections with each other, but also reveals the hierarchical structure that many real-world networks exhibit. Furthermore, it can facilitate network data processing, analysis, and storage \citep{lu2018community}.

\JJ{Among the various methods for detecting communities in a network, the Stochastic Block Model (SBM) stands out as a probabilistic graph model. It is founded based on the \textit{stochastic equivalence} assumption, positing that the connecting probability between node $i$ and node $j$ depends solely on their community memberships \citep{holland1983stochastic}.}
%Among various approaches of detecting communities in a network, the Stochastic Block Model (SBM) is represented as a probabilistic graph that is built upon the \textit{stochastic equivalence} assumption that the connecting probability between node $i$ and node $j$ solely depends on their community memberships \citep{holland1983stochastic}. 
If we assume that given the community memberships of two nodes $i$ and $j$, denoted by $c_i$ and $c_j$, the edge weight between them is Bernoulli distributed. 
In particular, letting $Y_{ij}$ denote this weight, the adjacency matrix $Y = (Y_{ij})$ is generated as
\begin{equation} \label{eq-VSBM}
    Y_{ij}\mid c_i=k,c_j=l \sim \text{Bernoulli}(B_{kl}),
\end{equation}
where $B_{kl}$ denotes the probability of connectivity between the nodes from the $k$th and $l$th communities. 

%As indicated in the above expression, the SBM is built upon a probabilistic framework, which makes rigorous statistical inference feasible, as we can quantify the uncertainty in the estimated communities. 
%and make principled decisions based on the data. 
%Research on theoretical performances of the SBM becomes increasingly popular; see \cite{lee2019review} for example. 
As indicated in \eqref{eq-VSBM}, an SBM provides an interpretable representation of the network's community structure. %The estimated parameters of the SBM can be used to identify the most important nodes or communities, and to understand the relationships between them. 
Moreover, an SBM can be efficiently fitted with various algorithms, such as maximum likelihood estimation and Bayesian inference \citep{lee2019review}. In recent few years, there has been extensive research on theoretical properties of the estimators obtained from these algorithms \citep{lee2019review}. 

%% JJ add the motivating example here:
% \JJ{Given the remarkable capability of the SBM in detecting the latent community structure, we are motivated to leverage its strengths to tackle an interesting problem: clustering countries into different groups based on their international trading patterns. However, when we apply the existing SBM in international trading application, there are still three fundamental challenges.}

\JJ{In this paper, we are motivated to leverage the remarkable capability of the SBM in detecting latent community structures to tackle an interesting problem---clustering countries into different groups based on their international trading patterns. However, in this application, we encounter three fundamental challenges that can not be addressed by existing SBM models.}

\subsection{Three main challenges}
\label{sec:three}

\subsubsection{Edge Weights}

\JJ{The classical SBM, as originally proposed by \cite{holland1983stochastic}, is primarily designed for binary networks, as indicated in \eqref{eq-VSBM}. However, in the context of the international trading network, we are presented with richer data, encompassing not only the presence or absence of trading relations between countries but also the specific trading volumes in dollars. These trading volumes serve as the intensity and strength of the trading relationships between countries. In such cases, thresholding the data to form a binary network would inevitably result in a loss of valuable information. }

\JJ{In the literature, several methods have been developed to extend the modelling of edge weights beyond the binary range. 
%These methods can be broadly categorized into three types. The first category employs Bayesian frameworks. \cite{aicher2013adapting, aicher2015learning} adopt a Bayesian approach, modeling edge weights using distributions from the exponential family. They express the prior distribution as a product of parameterized conjugate distributions. \cite{ludkin2020inference} takes a different approach by allowing for arbitrary distributions in modeling edge weights. They sample the posterior distribution using a reversible jump Markov Chain Monte Carlo (MCMC) method.
Some methods leverages distributions capable of handling edge weights. 
%In contrast, the second type of methods leverages distributions capable of handling valued edges. 
For instance, \cite{aicher2013adapting, aicher2015learning} adopt a Bayesian approach to model edge weights using distributions from the exponential family. \cite{ludkin2020inference} allows for arbitrary distributions in modeling edge weights and sample the posterior distribution using a reversible jump Markov Chain Monte Carlo (MCMC) method. \cite{ng2021weighted} and \cite{NathanielStevens2021} use a compound Bernoulli-Gamma distribution and a Hurdle model to represent edge weights respectively. \cite{haj2022estimation} apply the binomial distribution to networks with integer-valued edge weights that are bounded from above. 
In contrast, there is a growing interest in multilayer networks, where edge weights are aggregated across network layers. Notable examples of research in this area include the work by \cite{macdonald2022latent} and \cite{chen2022community}.}

\JJ{However, the above approaches cannot properly deal with financial data that involve non-negative continuous random variables with a large number of zeros and a right-skewed distribution. }

\subsubsection{Incorporating nodal information}

\JJ{Many SBMs assume that nodes within the same community exhibit stochastic equivalence. However, this assumption can be restrictive and unrealistic, as real-world networks are influenced by environmental factors, individual node characteristics, and edge properties, leading to heterogeneity among community members that affects network formation.
%Many SBMs commonly employ a strong assumption that all nodes within the same community exhibit stochastic equivalence. This presumption, though widely adopted, can be considered somewhat restrictive and unrealistic. This is because in reality the formation of a network is influenced not solely by community labels but also by environmental factors, individual node characteristics, or edge properties, which can introduce heterogeneity among community members, subsequently impacting the formation of network. 
Depending on the relationship between communities and covariates, there are generally three classes of models, as shown in Figure~\ref{fig:sbm_3relationships}. Models (b) and (c) have been previously discussed by \cite{huang2023pairwise}. 
We are also particularly interested in model (c), where latent community labels and covariates jointly shape the network structure. In our study on international trading networks, factors such as the geographical distance between countries, along with community labels, play critical roles in shaping trading relations. Neglecting these influential factors can significantly compromise the accuracy of SBM estimations.}

\begin{figure}[ht]
     \centering
     \begin{subfigure}[b]{0.3\textwidth}
         \centering
         \includegraphics[width=0.9\textwidth]{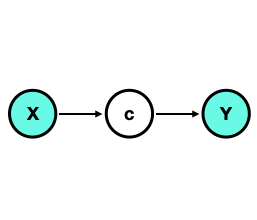}
        \caption{Covariates-driven}
        \end{subfigure}
     \hfill
     \begin{subfigure}[b]{0.3\textwidth}
         \centering
\includegraphics[width=0.8\textwidth]{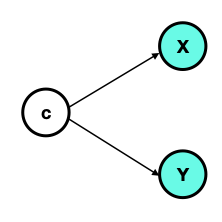}
         \caption{Covariates-confounding}
     \end{subfigure}
     \hfill
     \begin{subfigure}[b]{0.3\textwidth}
         \centering
\includegraphics[width=0.8\textwidth]{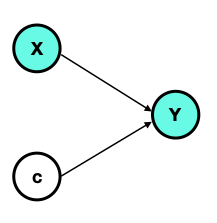}
         \caption{Covariates-adjusted}
     \end{subfigure}
        \caption{Three network models with covariates. The symbols $X$, $Y$ and $c$ represent covariates, network connection and community memberships, respectively. A shaded/unshaded cell means the corresponding quantity is observable/latent.} 
         \label{fig:sbm_3relationships}
\end{figure}

%The strong presumption that all nodes in the same community are stochastically equivalent is adopted in most SBMs. 
% However, this assumption ignores heterogeneity in community members caused by nodal information, which could be predictive of the community assignment. \JJ{(?)}

Various works in the past have considered the incorporation of nodal information. For instance, \cite{roy2019likelihood} and \cite{choi2012stochastic} considered a pairwise covariate effect in the logistic link function when modelling the edge between two nodes. In contrast, \cite{ma2017exploration} and \cite{hoff2002latent} incorporated the pairwise covariate effect but with a latent space model. Other research considering covariates in an SBM includes \cite{tallberg2004bayesian}, \cite{vu2013model} and \cite{peixoto2018nonparametric}. Moreover, \cite{mariadassou2010uncovering} and \cite{huang2023pairwise} addressed the dual challenge of incorporating the covariates and modeling the edge weights by assuming that each integer-valued edge weight follows a Poisson distribution and accounting for the pairwise covariates into the mean. 
 %To better understand how changes in the environment or from individual traits impact the network structures, incorporating nodal information in SBM breaks the restriction brought by only using the connection information as inputs. 

 \JJ{While the aforementioned literature has made significant progress in incorporating covariate information into network modeling, the complexity escalates when we confront the third challenge — the observed network is changing over time. This challenge necessitates a deeper exploration of how covariates influence network formation dynamically — a facet that remains unaddressed in the existing literature.}

\subsubsection{Dynamic network}

\JJ{Recent advances in capturing temporal network data demand the extension of classic SBMs to dynamic settings, as previous research predominantly focused on static networks. }

\JJ{Researchers have attempted to adapt SBMs to dynamic settings, employing various strategies such as state-space models, hidden Markov chains, and change point detection.
% Given an adjacency matrix that evolves over time, recent model-based approaches to identifying the community structures include state-space models, Markov chains and change point detection. 
 \cite{fu2009dynamic} and \cite{xing2010state} extended a mixed membership SBM for static networks to dynamic networks by characterizing the evolving community memberships and block connection probabilities with a state space model. 
Both \cite{yang2011detecting} and \cite{xu2014dynamic} studied a sequence of SBMs, where the parameters were dynamically linked by a hidden Markov chain. \cite{Matias2017} applied Markov chains to the evolution of the node community labels over time. \cite{Bhattacharjee2020} proposed a method to detect a single change point such that the community connection probabilities are different constants within the two intervals separated by it. 
\cite{xin2017continuous} characterized the occurrence of a connection between any two nodes in an SBM using an inhomogeneous Poisson process. 
\cite{zhang2020TVN} proposed a regularization method for estimating the network parameters at adjacent time points to achieve smoothness. }

\subsection{Our contributions}

The main contribution of this paper is to extend the classical SBM to address the three challenges mentioned above.
%%BEGIN MZ rewrite: MZ rewrites below ...
%%We consider generalizing the assumption that given the node communities each edge follows a Bernoulli distribution to a compound Poisson-Gamma distribution and incorporating the nodal information in a dynamical manner. This generalized distribution can accommodate edges that can take zero or any positive continuous values. Then we apply the proposed model to an international trading network, where each edge between two countries represents their trading value in dollar, rendering the proposed model more appropriate than the classical one. %Successful answers to the latter two individual challenges have already been provided separately. However, if we assume that the effect of the covariate in the network is time-varying, then these two seemingly independent challenges can be solved simultaneously. 
%%to ...
Given the community membership of each node, we generalize the assumption that edges in the network follow Bernoulli distributions to that they follow compound Poisson-Gamma distributions instead (Section~\ref{sec-TWsbm-meth}). This allows us to model edges that can take on any non-negative real value, including exactly zero itself. Later in Section~\ref{sec:application}, we apply the proposed model to an international trading network, where each edge between two countries represents the dollar amount of their trading values, for which our model is more appropriate than the classical one.
%%END MZ rewrite
%%BEGIN MZ rewrite: 
%%To address the last challenges introduced in Section \ref{sec:three}, we take the nodal information into consideration and assume that the effect of the nodal covariates in the network is time-varying.
%%By making this assumption, we achieve a time-varying network through the dynamic effect of nodal covariates, and employ smoothing splines to estimate the dynamic coefficients. %We will combine their solutions and apply them to a weighted time-varying network where each link has a either zero or positive continuous value associated with it and the value changes over time.
Moreover, not only do we incorporate nodal information in the form of covariates, we also allow the effects of these covariates to be time-varying (Section~\ref{sec-TWsbm-meth}). 
%%END MZ rewrite

%MZ minor edits below, without specific notes
We use a variational approach (Section~\ref{sec:estimation}) to conduct statistical inference for such a time-varying network. We also prove an interesting result (Section~\ref{sec:theory}) that, asymptotically, the covariate effects in our model can be estimated irrespective of how community labels are assigned to each node. This result also allows us to use an efficient two-step algorithm (Section~\ref{sec:estimation}), separating the estimation of the covariate effects and that of the other parameters---including the unknown community labels. A similar two-step procedure is also used by \cite{huang2023pairwise}. 

%In this paper, we develop statistical methods to characterize the dynamic changes in discrete time-varying weighted networks via the stochastic block model by incorporating the pairwise nodal covariates information. We consider a dynamic network under the discrete setting where the community label of each node is fixed along the whole time interval, while the connection probability of two nodes is changing over time. We assume that the connection probability is composed of two parts, a constant part whose value only depends on the communities of the dyad and a term including the node-pair covariate which captures the evolution of the network. The effect of the node-pair covariate on the connection probability is time-varying, and is modeled by the varying-coefficient framework. The objective of our model is to infer the community structure of the networks and estimate the model parameters.

%%BEGIN MZ comment out: MZ find most of what follows has already been said above
%%The rest of the paper is organized as follows. In Section~\ref{sec-TWsbm-meth}, we develop the restricted Tweedie SBM. In Section~\ref{sec:theory}, we \red{prove} that, asymptotically as the sample size goes to infinity, the estimation of covariate coefficients does not depend on community labels, which leads us to an efficient, two-step algorithm in Section~\ref{sec:estimation}. In Section~\ref{sec:simulation}, we study the performance of model with simulation. Finally in Section~\ref{sec:application}, we apply our model to analyze some international trading data .
%%END MZ comment out

\section{Methodology}
\label{sec-TWsbm-meth}

%%MZ edits below ... various, but only highlghting key ones with red color
In this section, we first give a brief review of a rarely-used distribution, the Tweedie distribution, which can be used to model network edges with zero or positive continuous weights. Next, we propose \myred{a general SBM} using the Tweedie distribution \myred{in three successive steps}, each addressing a challenge mentioned in Section \ref{sec:three}. 
%Because Tweedie has good ways to be reparameterized which makes it suitable for generalized linear model, and it has well-established computational techniques. These makes Tweedie distribution a [good] tool to adjust/loose the restriction in the classic SBM, based on which we propose three extensions of classic SBM.
%Using Tweedie distribution, we can model the stochastic block models when the edge values in the network is either 0 or positive continuous.
More specifically, 
we start with a vanilla model, a variation of the classic SBM where each edge value between two nodes now follows the Tweedie distribution rather than the Bernoulli distribution. We then incorporate covariate terms into the model, before we finally arrive at a time-varying version of the model by allowing the covariates to have dynamic effects that change over time.

\subsection{Tweedie distribution}
\label{subsec-TW}

Let $N$ be a random variable following the Poisson distribution with mean $\lambda$. Conditional on $N = n$, $Z_1, \ldots, Z_n  \overset{iid}{\sim} \text{Gamma} (\alpha,\gamma)$. Define 
% $$
%     Y=\begin{cases}
%         0, & \rm{if}  N=0,\\
%         Z_1+Z_2+\cdots+Z_N, & \rm{if} N=1,2,3,\cdots.\\
%     \end{cases}
% $$
\begin{equation*}
    Y = \left\{\begin{aligned}
0, \quad\quad\quad\quad\quad\quad\quad\,\,\,\,&\quad \rm {if}\quad N=0,\\
        Z_1+Z_2+\cdots+Z_N, &\quad \rm{if}\quad N=1,2,3,\cdots.
\end{aligned}\right.
\end{equation*}
Then, $Y$ has a compound Poisson-gamma distribution, with a nonzero probability mass at $0$. As $Y=0$ if and only if $N=0$, $\P(Y=0)=\P(N=0)=\exp{(-\lambda)}$. Conditional on $N=n>0$, $Y$ follows a gamma distribution with mean $n\alpha\gamma$ and variance $n\alpha\gamma^2$. 
%%MZ further condensed the above, and added some motivating context below
\myred{In the context of international trading (also see Section~\ref{sec:application} below), $N$ may be the number of trades in a given year; $Z_1, \dots, Z_N$ may be the dollar amount of each trade; then, $Y$ is the simply total trading amount from that year.}
%%END MZ addition

The compound Poisson-gamma distribution, known as a special case of the Tweedie distribution \citep{tweedie1984index}, is related to an exponential dispersion (ED) family. If $Y$ follows an ED family distribution with mean $\mu$ and variance function $V$, then $Y$ satisfies $\var (Y)=\phi V(\mu)$ for some dispersion parameter $\phi$. The Tweedie distribution belongs to the ED family with $V(\mu)=\mu^\rho$ for some constant $\rho$. Specified by different values of $\rho$, the Tweedie distribution includes the normal ($\rho=0$), the gamma ($\rho=2$) and the inverse Gaussian distribution ($\rho=3$), and the scaled Poisson distribution ($\rho=1$). Tweedie distributions exist for all values of $\rho$ outside the interval $(0,1)$. Of special interest to us here is the \myred{restricted} Tweedie distribution with $1<\rho<2$, which is the aforementioned compound Poisson–gamma distribution with a positive mass at zero but \JJ{a continuous distribution of positive values elsewhere.}
%continuously-valued elsewhere. 
\myred{We add the word ``restricted'' to describe the Tweedie distribution when $\rho$ is constrained to lie on the interval $(1,2)$; it will become clearer later in Section~\ref{sec:estimation} that this particular restriction also simplifies the overall estimation procedure somewhat.}

%\blue{JJ: Most paper on the compound Poisson-gamma model first introduces the exponential dispersion family, and then claims that Tweedie is a special member of ED family, and Poisson-gamma distribution is a special case of Tweedie distribution when $\rho\in(1,2)$. I don't understand why ED family is necessary information here (maybe due to the special expectation and variance relation?) rather than giving the formula of Tweedie distribution directly, but I still introduce it.}

\myred{Specifically, the aforementioned compound Poisson–gamma distribution with parameters $(\lambda,\alpha,\gamma)$ can be reparameterized as a restricted Tweedie distribution, with parameters $(\mu,\phi,\rho)$ satisfying $1 < \rho < 2$ and the following relationships:}
\begin{align*}
   \lambda=\dfrac{\mu^{2-\rho}}{\phi(2-\rho)},
   \quad \alpha=\dfrac{2-\rho}{\rho-1},
   \quad \gamma=\phi(\rho-1)\mu^{\rho-1}.
\end{align*}
That is, the marginal \myred{distribution} of $Y$, defined above, can be expressed as 
\begin{align}
    f(y|\mu,\phi,\rho) = a(y,\phi,\rho)\cdot \exp\left\{\frac{1}{\phi} \left( \dfrac{y\mu^{1-\rho}}{1-\rho} - \dfrac{\mu^{2-\rho}}{2-\rho} \right)\right\},
    \quad 1 < \rho < 2, \label{Twdist}
\end{align}
where
% \begin{align*}
% a(y,\phi,\rho) = \begin{cases}
%     \dfrac{1}{y} \scsum{j=1}{\infty} \dfrac{y^{j\alpha}}{(\rho-1)^{j\alpha} \phi^{j(1+\alpha)} (2-\rho)^j j! \Gamma(j\alpha) }, \text{ for }y>0,\\
%     1, \text{ for }y=0.
% \end{cases}
% \end{align*}
\begin{equation*}
 a(y,\phi,\rho) = \left\{\begin{aligned}
\dfrac{1}{y} \scsum{j=1}{\infty} \dfrac{y^{j\alpha}}{(\rho-1)^{j\alpha} \phi^{j(1+\alpha)} (2-\rho)^j j! \Gamma(j\alpha) },  &\quad \rm { for }\quad y>0,\\
1,\quad\quad\quad\quad\quad\quad\quad\quad\quad\quad\quad\quad\quad\quad\quad\,\, &\quad \rm{ for } \quad y=0.
\end{aligned}\right.
\end{equation*}

%\subsection{Vanilla Tweedie SBM}
\subsection{Vanilla model}
\label{subsec-vanillaTWsbm}

%%MZ deleted this paragraph below.
%%We propose a new model for the weighted stochastic block model using the Tweedie distribution, henceforth referred to as the \textbf{vanilla Tweedie SBM}. Before discussing the model, we introduce some notation first. 

%%Other than various minor edits, MZ also find the original  presentation logic confusing and changed the orders of things below.
Let $\mathcal{G}=(V,E)$ denote a weighted graph, where $V$ denotes a set of nodes with cardinality $|V|=n$ and $E$ denotes the set of edges between two nodes. 
For SBMs, each node in the network can belong to one of $K$ groups. Let $c_{i}\in \{1,\cdots,K \}$ denote the unobserved community membership of node $i$ and $c_{i}$ follows a multinomial distribution with the probability $\pi=(\pi_1,\cdots,\pi_K)$.

Usually, the set $E$ is represented by an $n \times n$ matrix $Y=[y_{ij}]\in \mathbb{R}^{n\times n}$. In classical SBMs, each $y_{ij}$ is modelled either as a Bernoulli random variable taking on binary values of 0 or 1, or as a Poisson random variable taking on non-negative \emph{integer} values. We first relax this restriction by allowing $y_{ij}$ to take on non-negative \emph{real} values. Since we focus on an undirected weighted network without self-loops, $Y$ is a (for us, non-negative) real-valued symmetric matrix with zero diagonal entries. 

Given the observed data set $D=\{ y_{ij} \}_{1\leq i<j\leq n}$, we assume that each edge value $y_{ij}$ follows a \myred{restricted} Tweedie distribution with power \myred{$\rho \in (1,2)$} and dispersion $\phi$: 
\begin{align}
    y_{ij}\sim \text{Tw}(\mu_{ij},\phi,\rho), 
    \quad \myred{1 < \rho < 2}, \label{TwModel}
\end{align}
where the mean $\mu_{ij}$ is modelled as a positive constant determined by the latent community label of nodes $i$ and $j$ through a log-link function, i.e.,
\begin{align}
\log (\mu_{ij})=\beta_0^{kl},
\quad\text{if}\quad c_i=k \quad\text{and}\quad c_j=l, \label{TSBMV_mu}
\end{align}
where $\beta_0 = [\beta_0^{kl}] \in \real^{K\times K }$ is a symmetric matrix. 
For a constant model, the log-link may not appear to be necessary, but it will become more useful later on as we incorporate covariates into this baseline model. 
%%To keep consistent with the generalized linear Tweedie model that will be introduced later, we parameterize the logarithm of the mean in the above formula.

%%MZ further deleted some already commented-out texts here, which are no longer useful for sure 

\begin{comment}
Let $L_{\text{TSBM}}$ denote the likelihood function of the vanilla \textbf{T}weedie \textbf{S}tochastic \textbf{B}lock \textbf{M}odel. It can be expressed as
%We use $L_{\text{TSBM}}$ to denote the likelihood function of the vanilla Tweedie stochastic block model, the subscript \textit{TSBM} in $L_{\text{TSBM}}$ stands for the vanilla \textit{Tweedie SBM}.
\begin{align}
&  L_{\text{TSBM}} (c,\beta_0,\pi,\phi,\rho|D) = \nonumber \\
 &\ \ \ \ \ \ \ \ \ \ \ \ \ \scprod{i=1}{n} \scprod{k=1}{K} \pi_k^{\indCi} \scprod{1\leq i < j\leq n}{} \scprod{k,l=1}{K} f(y_{ij}|\mu_{ij}(\beta_0,k,l),\phi,\rho)^\indCiCj \label{likelihoodTSBM}
 %a(y_{ij},\phi,\rho) \exp{\dfrac{1}{\phi} \left( \dfrac{y_{ij} \mu_{ij}^{1-\rho}}{1-\rho} - \dfrac{\mu_{ij}^{2-\rho}}{2-\rho}\right)} \nonumber \\
%    &= \scprod{i=1}{n} \scprod{k=1}{K} \pi_k^{c_{ik}} \scprod{1\leq i < j\leq n}{} \scprod{k,l=1}{K} a(y_{ij},\phi,\rho) \exp{\dfrac{c_{ik}c_{jl}}{\phi} \left( \dfrac{y_{ij} \exp[(1-\rho)\beta_0^{kl} ]}{1-\rho} - \dfrac{\exp[(2-\rho)\beta_0^{kl} ]}{2-\rho}\right)}, \label{likelihoodTSBM}
\end{align}
where $f(y_{ij}|\mu_{ij}(\beta_0,k,l),\phi,\rho)$ takes the form of \eqref{Twdist} and $ \mu_{ij}(\beta_0,k,l) = \exp( \beta_0^{kl})$.

The log-likelihood takes the form
\begin{align}
  &  l_{\text{TSBM}}(c,\beta_0,\pi,\phi,\rho|D)  =  \scsum{i=1}{n} \scsum{k=1}{K}  z_{ik}\cdot \log(\pi_k) + \scsum{1\leq i < j\leq n}{} \scsum{k,l=1}{K} \log a(y_{ij},\phi,\rho) +\nonumber \\ 
    &\scsum{1\leq i < j\leq n}{} \scsum{k,l=1}{K} \dfrac{c_{ik}c_{jl}}{\phi} \left( y_{ij} \dfrac{\exp{[(1-\rho)\beta_0^{kl} ]}}{1-\rho} - \dfrac{\exp{[(2-\rho)\beta_0^{kl} ]}}{2-\rho}\right) . \label{TWSBMfullLikelihood}
\end{align}
\end{comment}

%\subsection{Tweedie SBM with covariates}
\subsection{Model with covariates}
\label{subsec-TWsbm}

%%MZ also clarified there are multiple covariate matrices---this aspect was not at all clear in the old notation!
In many real-life situations, we observe additional information about the network. For example, in addition to the relative existence or importance of each edge, a collection of $p$ symmetric covariate matrices $\myred{X^{(1)},...,X^{(p)}} \in \real^{n\times n}$ may also be available, where the $(i,j)$-th entry \myred{$x_{ij}^{(u)}$ of each $X^{(u)}$} represents a pair-wise covariate containing some information about the connection between node $i$ and node $j$, and $x_{ii}^{(u)}=0$ for all $1\leq i \leq n$ and $u=1,...,p$. 
Given a data set $D=\{Y, \myred{X^{(1)}, ..., X^{(p)}}\}$, the vanilla model from Section~\ref{subsec-vanillaTWsbm} above can be easily extended by replacing \eqref{TSBMV_mu} with  
\begin{align}
\log (\mu_{ij})=\beta_0^{kl} + \bm{x}_{ij}^\top \bm{\beta}, 
\quad\text{if}\quad c_i=k \quad\text{and}\quad c_j=l, \label{TSBMC_mu}
\end{align}
so that $\mu_{ij}$ is affected not only by the community labels $c_i, c_j$ but also by the covariates contained in $\bm{x}_{ij}$. Here, both \myred{$\bm{x}_{ij} \equiv (x_{ij}^{(1)},...,x_{ij}^{(p)})^{\top}$} and $\bm{\beta}$ are $p$-dimensional vectors.

\subsection{Time-varying model}
\label{subsec-TVTWsbm}

%%MZ notes some bad word usages before: evolutionary vs evolving, serial vs series.  
Now suppose we observe an evolving network at a series of $T$ discrete time points $\{t_1,\cdots,t_T \}$, with a common set of $n$ nodes. Specifically, our data set is of the form $D=\{ Y(t_1), \dots, Y(t_T); X^{(1)}, \dots, X^{(p)}\}$. Without loss of generality, we may assume each $t_{\nu} \in [0, 1]$. 

%%Begin MZ comment out
%%Similar to the two static models mentioned in Section 2.3 and 2.4, each node $i$ is associated with a time-invariant latent community label $c_i$, and each node pair $i-j$ has a $p-$dimensional time-invariant covariate $x_{ij}$. The data we observe include the temporal networks represented by their edge weights matrices and the time-invariant covariate $D=\{ Y(t), t=t_1,\cdots,t_T, X \}$. 
%%END MZ comment out

To model such data, we assume in this paper that the latent community labels $c_1, \dots, c_n$ are fixed over time but allow the covariate effects to change over time by incorporating a varying-coefficient model. \myred{In reality, the community labels may also change over time, but a fundamentally different set of tools will be required to model these changes and we will study them separately---not in this paper. Here, we simply assume that model \eqref{TwModel} holds pointwise at every time point $t$}, i.e.,
\begin{align}
    y_{ij}(t) \sim \text{Tw}(\mu_{ij}(t),\phi,\rho),
    \quad \myred{1 < \rho < 2},
    \label{TVTwModel}
\end{align}
%%BEGIN MZ comments: MZ again finds most of this below to be unnecessary ...
%%We adopt varying-coefficient models to characterize the time-varying effect of the covariate $X$ and assume the intercept as fixed over time. \blue{For $u=1,\cdots,p$, the $u^{th}$ component in the static $p-$dimensional coefficient $\beta$ in Section~\ref{subsec-TWsbm} is extended to a smooth function $\beta_u (t)$ over $t \in [0, 1]$, which represents the dynamic coefficient of the $u$th covariate. To enhance clarity in our notation, we differentiate the vector form and function form by employing an overbar exclusively for the vector representation. In the subsequent sections of this paper, $\overline{\beta(t)}$ represents a vector of p scalars, with each element signifying the value of $\beta_u(t)$ at time point $t$.}
%%END MZ comments
and
\begin{align}
%%    \log \{\mu_{ij} (t)\} =\beta_0^{kl} + x_{ij}^\top \overline{\beta(t)} ,\text{ if }c_i=k\text{ and }c_j=l
%%MZ dislikes overbar notation above, currently experimenting with the following:
%    \log \{\mu_{ij} (t)\} =\beta_0^{kl} + \underbrace{%
%    \sum_{u=1}^p x_{ij}^{(u)} \beta_u(t)}_{\equiv h_{ij}(t)}, 
    \log \{\mu_{ij} (t)\} =\beta_0^{kl} + \bm{x}_{ij}^{\top} \bm{\beta}(t), 
    \quad\text{if}\quad c_i=k \quad\text{and}\quad c_j=l, \label{TSBMTV_mu}
\end{align}
where $\bm{\beta}(t) \equiv (\beta_1(t),\dots,\beta_p(t))^{\top}$ and each $\beta_{u}(t)$ is a smooth function of time. The full likelihood function corresponding to our time-varying model \eqref{TVTwModel}--\eqref{TSBMTV_mu} is given by
\begin{multline}
L(\beta_0,\bm{\beta}(t),\pi,\phi,\rho; D,c) = \scprod{i=1}{n} \scprod{k=1}{K} \pi_k^{\indCi} \scprod{\nu =1}{T} \scprod{1\leq i < j\leq n}{} \scprod{k,l=1}{K} 
\biggl[ a(y_{ij}(t_{\nu}),\phi,\rho) \times \\ 
\exp\left\{\dfrac{1}{\phi} \left( \dfrac{y_{ij}(t_{\nu}) \exp[(1-\rho)\{\beta_0^{kl}+ \bm{x}_{ij}^\top \bm{\beta}(t_{\nu})\} ]}{1-\rho} - \right.\right. \\
\left.\left.
 \dfrac{\exp[(2-\rho)\{\beta_0^{kl} + \bm{x}_{ij}^\top \bm{\beta}(t_{\nu})\} ]}{2-\rho}\right)\right\} \biggr]^{\indCiCj}. \label{TVlikelihood}
\end{multline}
The likelihood functions for the earlier, simpler models---namely, the vanilla model in Section~\ref{subsec-vanillaTWsbm} and the static model with covairates in Section~\ref{subsec-TWsbm}---are simply special cases of \eqref{TVlikelihood}.

%%BEGIN MZ edits
%%\section{Theory of estimating the covariates coefficients}
\section{Theory}
\label{sec:theory}

%\myred{[MZ has revised this section]}

The resulting log-likelihood based on \eqref{TVlikelihood} contains three additive terms: the first involves only $\pi$; the second involves only $(\phi,\rho)$; and the third is the only one that involves both $\beta_0$ and $\bm{\beta}(t)$. 
Define
\begin{multline}
\ell_n(\bm{\beta}(t), \phi_0, \rho_0; D, z) \equiv \frac{1}{\binom{n}{2}}
\scsum{\nu =1}{T} \scsum{1\leq i < j\leq n}{} \scsum{k,l=1}{K} \dfrac{\ind}{\phi_0} \times \\
\left( 
\dfrac{y_{ij}(t_{\nu}) \exp[(1-\rho_0)\{\hat{\beta}_0^{kl}(\bm{\beta}(t_{\nu}))+ \bm{x}_{ij}^\top \bm{\beta}(t_{\nu})\} ]}{1-\rho_0} - \right. \\
\left.
\dfrac{\exp[(2-\rho_0)\{\hat{\beta}_0^{kl}(\bm{\beta}(t_{\nu})) + \bm{x}_{ij}^\top \bm{\beta}(t_{\nu})\} ]}{2-\rho_0}\right) \label{eq:profileloglik}
\end{multline}
to be the aforementioned third term \emph{after having}
\begin{itemize}
    \item replaced the unknown labels $c=(c_1,\dots,c_n)$ with an arbitrary set of labels $z=(z_1,\dots,z_n)$, where each $z_i$ is independently multinomial$(p_1,\dots,p_K)$;
    \item profiled out the parameter $\beta_0$ by replacing it with $\hat{\beta}_0(\bm{\beta}(t))$, while presuming $\phi=\phi_0$ and $\rho=\rho_0$ to be known and fixed; and
    \item re-scaled it by the total number of pairs, $\binom{n}{2}$.
\end{itemize}
This quantity turns out to be very interesting. Not only does $\hat{\beta}_0(\bm{\beta}(t))$ have an explicit expression, but \eqref{eq:profileloglik} can also be shown to converge to a quantity \emph{not} dependent on $z$ as $n$ tends to infinity. 

In other words, it does \emph{not} matter that $z$ is a set of \emph{arbitrarily} assigned labels! This has immediate computational implications (see Section~\ref{sec:estimation}).
Some high-level details of this theory are spelled out below in Section~\ref{subsec:theory-detail}, while actual proofs are given in the Appendix. 

\subsection{Details}
\label{subsec:theory-detail}

%\myred{[For this subsection, MZ simply extracted various materials---selectively---from the old section, with some slight reorganization and edits.]}

To simplify the notation, we first define two population parameters,
\begin{align*}
  \theta  = \scsum{\nu =1}{T} \E [y_{ij} (t_{\nu}) \exp \{(1-\rho_0)\bm{x}_{ij}^\top \bm{\beta}(t_{\nu})\} ] \quad\text{and}\quad
  \gamma = \scsum{\nu =1}{T} \E [\exp \{(2-\rho_0) \bm{x}_{ij}^\top \bm{\beta}(t_{\nu})\} ].
\end{align*}
For these to be properly defined, we require the following two conditions, which are fairly standard and not fundamentally restrictive. 

\begin{mycondition}\label{cond1}
The covariates $\{ \bm{x}_{ij}, 1\leq i<j \leq n \}$ are i.i.d., and there exists some $\alpha >0$ such that $\P(\exp\{\bm{x}_{ij}^\top \bm{u}\} \geq \delta ) \leq 2\exp(-\delta^2/\alpha)$ 
for any  $\delta > 0$, $i \neq j$ and $\bm{u} \in \mathbb{R}^p$ satisfying $\|\bm{u} \|_2 = \sqrt{u_1^2 + \dots + u_p^2} = 1$. 
\end{mycondition}
 
\begin{mycondition}\label{cond2}
The function $\beta_u(t)$ is continuous on $[0, 1]$, for all $u = 1, \ldots, p$. 
\end{mycondition}

The corresponding empirical versions of $\theta$ and $\gamma$ between any two groups, $k$ and $l$, according to an arbitrary community label assignment, $z$, are given by
\begin{align*}
 \hat{\theta}_{kl}  &=\dfrac{1}{\binom{n}{2}} \scsum{\nu =1}{T} \scsum{1\leq i < j \leq n}{}y_{ij}(t_{\nu}) \exp [(1-\rho_0)\bm{x}_{ij}^\top \bm{\beta}(t_{\nu})]\ind,\\
  \hat{\gamma}_{kl} &=\dfrac{1}{\binom{n}{2}} \scsum{\nu =1}{T} \scsum{1\leq i < j \leq n}{} \exp [(2-\rho_0) \bm{x}_{ij}^\top \bm{\beta}(t_{\nu})]\ind.
\end{align*}
We can then establish the following main theorem.

\begin{theorem}{Theorem 1.}{}\label{theorem1_1}
As $n \rightarrow \infty$ while $K$ remains constant, %\myred{[MZ asks: does $K$ really need to be a constant, or is $K=o(n), o(n^{1/2})$, ... enough?]} 
\begin{multline}
\ell_n (\bm{\beta} (t), \phi_0, \rho_0; D, z) = 
\dfrac{1}{\phi_0} \dfrac{1}{(1-\rho_0)(2-\rho_0)}  \scsum{k,l=1}{K} \hat{\theta}_{kl}^{2-\rho_0}\cdot \hat{\gamma}_{kl}^{\rho_0-1} \\
= \dfrac{1}{\phi_0} \dfrac{1}{(1-\rho_0)(2-\rho_0)} \theta^{2-\rho_0}\cdot \gamma^{\rho_0-1}+o_p(1). \label{eq:theorem}
\end{multline}
\end{theorem}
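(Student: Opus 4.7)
The plan is to prove the two equalities separately. For the first equality, I would profile out $\beta_0$ analytically. Looking at the original log-likelihood (pre-profiling), the only piece involving $\beta_0^{kl}$ is additively separable across $(k,l)$ indices. Differentiating the relevant summand with respect to $\beta_0^{kl}$ and setting it to zero gives
\begin{equation*}
\exp[(1-\rho_0)\beta_0^{kl}] \cdot \binom{n}{2}\hat{\theta}_{kl} = \exp[(2-\rho_0)\beta_0^{kl}] \cdot \binom{n}{2}\hat{\gamma}_{kl},
\end{equation*}
which yields the closed form $\exp(\hat{\beta}_0^{kl}) = \hat{\theta}_{kl}/\hat{\gamma}_{kl}$. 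Substituting back, both terms inside the square bracket of \eqref{eq:profileloglik} collapse to multiples of $\hat{\theta}_{kl}^{2-\rho_0}\hat{\gamma}_{kl}^{\rho_0-1}$, and the coefficients combine as $\frac{1}{1-\rho_0} - \frac{1}{2-\rho_0} = \frac{1}{(1-\rho_0)(2-\rho_0)}$, giving the first equality.

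For the second equality, the key observation is that the arbitrary label assignment $z$ is drawn independently of $\{y_{ij}(t_\nu), \bm{x}_{ij}\}$. Hence, for each $(k,l)$, the indicator $\mathds{1}(z_i=k, z_j=l)$ factors out of the expectation, so in expectation each summand of $\hat{\theta}_{kl}$ contributes $p_k p_l \cdot \sum_\nu \E[y_{ij}(t_\nu)\exp\{(1-\rho_0)\bm{x}_{ij}^\top\bm{\beta}(t_\nu)\}] = p_k p_l \cdot \theta$, and analogously $\E\hat{\gamma}_{kl} = p_k p_l \cdot \gamma$. Then I would invoke a suitable weak law of large numbers (for dependent but exchangeable pair-indexed arrays) to conclude $\hat{\theta}_{kl} \xrightarrow{p} p_k p_l \cdot \theta$ and $\hat{\gamma}_{kl} \xrightarrow{p} p_k p_l \cdot \gamma$. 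Condition~\ref{cond1}, the sub-Gaussian-type tail bound on $\exp\{\bm{x}_{ij}^\top\bm{u}\}$, together with Condition~\ref{cond2}'s continuity (which bounds $\|\bm{\beta}(t_\nu)\|$ uniformly on the compact interval $[0,1]$), is exactly what ensures the necessary moments of the summands are finite.

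Once the pointwise convergence is in hand, the continuous mapping theorem gives
\begin{equation*}
\hat{\theta}_{kl}^{2-\rho_0}\hat{\gamma}_{kl}^{\rho_0-1} \xrightarrow{p} (p_k p_l \theta)^{2-\rho_0}(p_k p_l \gamma)^{\rho_0-1} = (p_k p_l) \cdot \theta^{2-\rho_0}\gamma^{\rho_0-1},
\end{equation*}
since the exponents $(2-\rho_0) + (\rho_0-1) = 1$. Summing over $k,l$ and using $\sum_{k,l=1}^K p_k p_l = (\sum_k p_k)^2 = 1$ closes the argument, and the factor $\frac{1}{\phi_0(1-\rho_0)(2-\rho_0)}$ comes along for the ride.

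The main obstacle I anticipate is justifying the LLN step rigorously, since the summands in $\hat{\theta}_{kl}$ are not independent across pairs---$y_{ij}$ and $y_{ik}$ both depend on node $i$'s true community label $c_i$---so standard i.i.d.\ LLN does not apply directly. The cleanest resolution is probably a two-moment/variance-bound argument treating the sum as a U-statistic-like object: the expectation calculation carries through by independence of $z$ from the data, and the variance is $O(1/n)$ because the dependence structure across overlapping pairs is sparse enough. Condition~\ref{cond1} is precisely tight enough to furnish the second moment bounds needed for this variance calculation.
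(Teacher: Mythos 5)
Your proposal is correct and follows essentially the same route as the paper: the first equality comes from the closed-form profile MLE $\exp(\hat{\beta}_0^{kl})=\hat{\theta}_{kl}/\hat{\gamma}_{kl}$ with the coefficients combining to $1/\{(1-\rho_0)(2-\rho_0)\}$, and the second from $\hat{\theta}_{kl}\xrightarrow{p}p_kp_l\,\theta$ and $\hat{\gamma}_{kl}\xrightarrow{p}p_kp_l\,\gamma$ together with the continuous mapping theorem and $\sum_{k,l}p_kp_l=1$ (the paper packages the same content as two lemmas on the normalized ratios $\hat{\theta}_{kl}/\sum_{k,l}\hat{\theta}_{kl}\to p_kp_l$ and a separate law of large numbers for the totals, which is only a cosmetic reorganization of your argument). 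Your observation that the pair-indexed summands are only identically distributed and exchangeable---not mutually independent, since terms sharing a node share that node's label---so that a variance or U-statistic-type bound is needed rather than a bare i.i.d.\ law of large numbers, is in fact a point the paper's own Lemma~2 glosses over by asserting the summands are i.i.d., so your extra care there strengthens rather than departs from the published argument.
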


\begin{Remark}
So far, we have simply written 
$\hat{\theta}_{kl}$, $\hat{\gamma}_{kl}$, $\theta$ and $\gamma$
in order to keep the notation short. To better appreciate the conclusion of the theorem, however, it is perhaps important for us to emphasize here that these quantities are more properly written as 
$\hat{\theta}_{kl}(\bm{\beta}(t),\rho_0;D,z)$, 
$\hat{\gamma}_{kl}(\bm{\beta}(t),\rho_0;D,z)$, 
$\theta(\bm{\beta}(t),\rho_0;D)$, and
$\gamma(\bm{\beta}(t),\rho_0;D)$.
\end{Remark}

The implication here is that, asymptotically, our inference about $\bm{\beta}(t)$ is not affected by the community labels---nor is it affected by the total number of communities, $K$, since $z$ can follow \emph{any} multinomial$(p_1,\dots,p_K)$ distribution, including those with some $p_k=0$. Thus, even if we got $K$ wrong, our inference about $\bm{\beta}(t)$ would still be correct.

\section{Estimation Method}
\label{sec:estimation}

\subsection{Two-step Estimation}
\label{subsec:computation}

%%BEGIN MZ edit
In this section, we outline an algorithm to fit the restricted Tweedie SBM.
%Because the TV-TSBM is deemed as the most encompassing model, and the vanilla Tweedie SBM and the static Tweedie SBM with covairates are special cases of the TV-TSBM, their estimations can be done by slightly adjusting the estimation presenting in this section. 
%%BEGIN MZ comment out 
%%Since the vanilla Tweedie SBM and the static Tweedie SBM with covairates are special cases of the more comprehensive model TV-TSBM, their estimations can be derived by making slight adjustments to the estimation of TV-TSBM. 
%%END MZ comment out
%%
%%MZ moved materials about \rho here (Doesn't this make more logical sense?) and did some rewriting
Since, for us, the parameter $\rho$ is restricted to the interval $(1,2)$, we find it sufficient to simply perform a grid search~\citep[e.g.,][]{dunn2005series, dunn2008evaluation, lian2023tweedie} over an equally-spaced sequence, say, $1<\rho_1<\dots<\rho_m<2$, to determine its ``optimal'' value. However, our empirical experiences also indicate that a sufficiently accurate estimate of $\rho$ is important for making correct inferences on other quantities of interest, including the latent community labels $c$.

%In this section, we provide a comprehensive exposition on the estimation of the TV-TSBM, which is deemed as the most encompassing model. Furthermore, the estimations of the vanilla Tweedie SBM and the static Tweedie SBM with covairates can be illustrated as special cases.

For any given $\rho_0$ in a pre-specified sequence/grid, we propose an efficient two-step algorithm to estimate the other parameters. 
%%BEGIN MZ rewrite ... ``illustrate the Step 1'', ``demonstrate the Step 2'' are really terrible English
%%In Section~\ref{subsubsec:step1}, we illustrate the Step 1 to estimate the coefficients $\bm{\beta}(t)$ for covariates. In Section~\ref{subsubsec:step2}, we demonstrate the Step 2 to estimate the SBM parameters $c$, $\beta_0$, $\pi$, and the Tweedie parameter $\phi$. 
In Step 1 (Section~\ref{subsubsec:step1}), we obtain an estimate
$\hat{\bm{\beta}}_{\rho_0}(t)$ of $\bm{\beta}(t)$ using an arbitrary set of community labels. This is made possible by the theoretical result earlier in Section~\ref{sec:theory}. 
In Step 2 (Section~\ref{subsubsec:step2}), we obtain estimates of 
the remaining parameters parameters---$\hat{\beta}_0(\rho_0), \hat{\pi}(\rho_o), \hat{\phi}(\rho_0)$---while keeping $\hat{\bm{\beta}}_{\rho_0}(t)$ fixed. 
%%END MZ rewrite
The optimal $\rho$ is then chosen to be
\begin{align*}
\hat{\rho}=\argmax\limits_{\rho_0\in \{\rho_1,\cdots,\rho_m\}} L (\hat{\beta}_0 (\rho_0), \hat{\bm{\beta}}_{\rho_0}(t),
 \hat{\pi} (\rho_0), \hat{\phi} (\rho_0), \rho_0; D,c). 
\end{align*}

\subsubsection{Step 1: Estimation of Covariates Coefficients}
\label{subsubsec:step1}

%\myred{[MZ working here now ... even though PS has not yet endorsed MZ's revision in Sec 3]}

It is clear from our earlier theoretical result in Section~\ref{sec:theory} that, when $\rho=\rho_0$ is given and fixed, the quantity \eqref{eq:profileloglik} can be used directly as a criterion to estimate $\bm{\beta}(t)$. To begin, here one can fix the parameter $\phi$ at $\phi_0=1$, since it only appears as a scaling constant in \eqref{eq:profileloglik} and does not affect the optimum. The main computational saving afforded by Theorem~1 is that we can use an \emph{arbitrary} set of labels $z$ to carry out this step, estimating $\bm{\beta}(t)$ separately without simultaneously concerning ourselves with $\beta_0$ or having to make inference on $c$. Both of those tasks can be temporarily delayed until after $\bm{\beta}(t)$ is estimated.

For our static model (Section~\ref{subsec-TWsbm}), we use the \texttt{optim} function in R to maximize \eqref{eq:profileloglik} directly over $\bm{\beta}$, with $T=1$. For our time-varying model (Section~\ref{subsec-TVTWsbm}), we add (component-wise) smoothness penalties to \eqref{eq:profileloglik} 
%\JJ{to enhance stability and robustness [MZ does not like adding this extra ``reason'' to justify adding the penalty ... he thinks it's not even the right ``reason'']} 
and estimate $\bm{\beta}(t)$ as
\begin{equation}
\hat{\bm{\beta}}_{\rho_0}(t) = \argmax_{\bm{\beta}(t)} \ \ell_n(\bm{\beta}(t),1,\rho_0; D,z)  
-\frac{1}{2} \scsum{u=1}{p} \lambda_u \cdot \int \{ \bm{\beta}_u'' (t) \}^2 dt. \label{eq:hat_betat}
\end{equation}
The penalty parameters $\lambda_1, \dots, \lambda_p$ are chosen by cross-validation (see Section~\ref{subsec:tuning} below). With given penalty parameters, technical details for calculating~\eqref{eq:hat_betat} are provided in the Appendix. 

%\myred{[MZ thinks we may even be able to end this section here ... everything below is the original writing which MZ did not touch ... in fact, if JJ feels strongly about giving details of B-splines (i.e., the $\bm{B}$ and $\bm{\eta}$ part), those details should perhaps be in the supplementary materials rather than the main text?]}

\subsubsection{Step 2: Variational Inference}
\label{subsubsec:step2}
%%\myred{[MZ has worked from here to right after equation~\eqref{ELBO-TWSBM}]}

%%BEGIN MZ edits
In Step 2, with the estimate $\hat{\bm{\beta}}_{\rho_0}(t)$ from Step 1 (and, again, a pre-fixed $\rho=\rho_0$), we estimate the remaining parameters $\beta_0$, $\pi$, and $\phi$, as well as make inferences about the latent label $c$. 
%%MZ moved this disclaimer down below ... 
%%the first time L(...) appears in this section, MZ still wants the full argument list (rather than the shortened list) to appear inside, since the notation L(...) with a shortened list has never appeared before this section 
%%\JJ{For simplicity, in the subsequent discussion, we will omit the notation of $\bm{\beta}(t)$ and $\rho$ as arguments in the expressions, as $\rho=\rho_0$ and $\bm{\beta}(t)=\hat{\bm{\beta}}_{\rho_0}(t)$ are both fixed and not being estimated within this step.}
%%END MZ move

%%Ideally, 
%for a given $\rho_0$ and the estimated $\hat{\beta(t)}$, to infer parameters $\beta_0$, $\pi$ and $\phi$, 
%%we need to optimize the \textcolor{blue}{marginal} 
If we directly optimized the
likelihood function \eqref{TVlikelihood} using the EM algorithm, the E-step would require us to compute $\E_{c|D}(\cdot)$ but, here, the conditional distribution of the latent variable $c$ given $D$ is complicated because $c_i$ and $c_j$ are not conditionally independent in general. We will use a variational approach instead. 

To proceed, it will be more natural for us to emphasize the fact that \eqref{TVlikelihood} is really just the joint distribution of $(D,c)$. Thus, instead of writing it as
$L(\beta_0,\bm{\beta}(t),\pi,\phi,\rho;D,c)$, in this section we will write it simply as 
$\P(D,c;\beta_0,\pi,\phi)$,
%$\P(D,c;\beta_0,\pi,\phi, \hat{\bm{\beta}}_{\rho_0}(t), \rho_0)$
where we have also dropped $\bm{\beta}(t)$ and $\rho$ to keep the notation short because, within this step, $\rho=\rho_0$ and $\bm{\beta}(t)=\hat{\bm{\beta}}_{\rho_0}(t)$ are both fixed and not being estimated. 

Ideally, since the latent variable $c$ is not observable, one may want to work with the marginal distribution of $D$ and estimate $(\beta_0,\pi,\phi)$ as: 
\begin{align}
%\left(\hat{\beta}_0,\hat{\pi},\hat{\phi} \right)&=\argmax\limits_{\beta_0,\pi,\phi} \log\P(D;\beta_0,\pi,\phi,\hat{\bm{\beta}}_{\rho_0}(t), \rho_0) \nonumber \\
%&=\argmax\limits_{\beta_0,\pi,\phi} \ \log\scsum{c\in [K]^n}{} \P (D,c;\beta_0,\pi,\phi,\hat{\bm{\beta}}_{\rho_0}(t), \rho_0),
\left(\hat{\beta}_0,\hat{\pi},\hat{\phi} \right)&=\argmax\limits_{\beta_0,\pi,\phi} 
\log\P(D;\beta_0,\pi,\phi) \nonumber \\
&=\argmax\limits_{\beta_0,\pi,\phi} \ \log\scsum{c\in [K]^n}{} 
\P (D,c;\beta_0,\pi,\phi),
\label{eq:TWsbmOptimization}
\end{align}
but this is difficult due to the summation over $K^n$ terms.
%even for successive maximization along each variable at each iteration.
% as the optimization over the latent variables $z$'s is intractable due to the NP-hardness and the local optima. 
%%MZ has moved EM discussion ahead, so taking out here below
%%Moreover, the complicated conditional distribution of the latent variable $z$ on $D,\beta_0,\hat{\bm{\beta}}(t),\pi,\phi,\rho$ makes EM algorithm and MCMC challenging. 
%%We consider the variational inference as a practical method 
%to approximate the likelihood in~\eqref{eq:TWsbmOptimization} 
%%to circumvent the NP-hardness.
%, and use multiple initial community labels parallelly to mitigate the issue of local optima. 
The key idea of variational inference is to approximate 
%$\P(c|D;\beta_0,\pi,\phi,\hat{\bm{\beta}}_{\rho_0}(t),\rho_0)$ 
$\P(c|D;\beta_0,\pi,\phi)$
with a distribution $q(c)$ from a more tractable family---also referred to as the ``variational distribution'' in this context---and to decompose the \myred{objective} function in~\eqref{eq:TWsbmOptimization}  into two terms:
\begin{align}
% & \log \P(D;\beta_0,\pi,\phi,\hat{\bm{\beta}}_{\rho_0}(t), \rho_0) \nonumber \\
% =&\scsum{c\in [K]^n}{} \left( \log \P(D;\beta_0,\pi,\phi,\hat{\bm{\beta}}_{\rho_0}(t), \rho_0) \right) \cdot q(c) \nonumber \\
% =&\scsum{c\in [K]^n}{} \left( \log \dfrac{ \P(D;\beta_0,\pi,\phi,\hat{\bm{\beta}}_{\rho_0}(t), \rho_0) \cdot q(c)}{\P(D,c; \beta_0,\pi,\phi,\hat{\bm{\beta}}_{\rho_0}(t), \rho_0)} + \log \dfrac{\P(D,c; \beta_0,\pi,\phi,\hat{\bm{\beta}}_{\rho_0}(t), \rho_0)}{q(c)} \right) \cdot q(c) \nonumber \\
% =&\underbrace{\E_q \left[ \log \dfrac{q(c)}{\P(c|D; \beta_0,\pi,\phi,\hat{\bm{\beta}}_{\rho_0}(t), \rho_0)} \right]}_{\text{KL}}  + \underbrace{\E_q \left[ \log \dfrac{\P(D,c; \beta_0,\pi,\phi,\hat{\bm{\beta}}_{\rho_0}(t), \rho_0)}{q(c)} \right]}_{\text{ELBO}}. \label{ELBO}
& \log \P(D;\beta_0,\pi,\phi) \nonumber \\
=&\scsum{c\in [K]^n}{} \left( \log \P(D;\beta_0,\pi,\phi) \right) \cdot q(c) \nonumber \\
=&\scsum{c\in [K]^n}{} \left( \log \dfrac{ \P(D;\beta_0,\pi,\phi) \cdot q(c)}{\P(D,c; \beta_0,\pi,\phi)} + \log \dfrac{\P(D,c; \beta_0,\pi,\phi)}{q(c)} \right) \cdot q(c) \nonumber \\
=&\underbrace{\E_q \left[ \log \dfrac{q(c)}{\P(c|D; \beta_0,\pi,\phi)} \right]}_{\text{KL}}  + \underbrace{\E_q \left[ \log \dfrac{\P(D,c; \beta_0,\pi,\phi)}{q(c)} \right]}_{\text{ELBO}}. \label{ELBO}
\end{align}
The first term in \eqref{ELBO} can be recognized as the Kullback–Leibler (KL) divergence between $q(c)$ and $\P(c|D;\cdot)$, which is non-negative. This makes the second term in~\eqref{ELBO} a lower bound of objective function. It is referred to in the literature as the ``evidence lower bound'' (ELBO), and is equal to the objective function itself when the first term is zero, i.e., when $q(c)=\P(c|D;\cdot)$.  

So, instead of maximizing \eqref{eq:TWsbmOptimization} directly, one maximizes the ELBO term---not only over $(\beta_0, \pi, \phi)$, but also over $q$. Since the original objective function---that is, the left-hand side of \eqref{ELBO}---does not depend on $q$, maximizing the ELBO term over $q$ is also equivalent to minimizing the KL term. And when the KL term is small, not only is the variational distribution $q(c)$ close to $\P(c|D;\cdot)$, but the ELBO term is also automatically close to the original objective, which justifies why this approach often gives a good approximate solution to the otherwise intractable problem \eqref{eq:TWsbmOptimization} and why the variational distribution $q(c)\approx \P(c|D;\cdot)$ can be used to make approximate inferences about $c$.  

%% BEGIN MZ comment out
%%We use the variational expectation-maximisation (VEM) algorithm to estimate $\beta_0$, $\pi$, $\phi$ and the parameters of $q(c)$ by iterating between solving an expectation with variational distributions (VE step) and maximizing the expectation (M step). In the VE step, we update $q(c)$ and plug it in the ELBO in~\eqref{ELBO}. As the left-hand side of~\eqref{ELBO} is a constant of $q(c)$, when $q(c)$ is updated to approximate $\P(c|D;\beta_0,\pi,\phi,\hat{\bm{\beta}}(t))$ better leading to a smaller Kullback–Leibler divergence, the ELBO will be closer to the left-hand side of~\eqref{ELBO}. In the M step, we maximize the ELBO to estimate the remaining parameters.
%end of the box
%%END MZ comment out

Since the decomposition \eqref{ELBO} holds for any $q$, in practice one usually chooses it from a ``convenient'' family of distributions so that $\E_q(\cdot)$ is easy to compute. In particular, we can choose $$q(c) = \scprod{i=1}{n} q_i (c_i)$$ to be a completely factorizable distribution; here, each $q_i$ is simply a standalone multinomial distribution with probability vector $(\tau_{i1},\cdots,\tau_{iK})$. 
%Variational inference approximates the conditional distribution of the latent variables $ L_{\text{TV-TSBM}} (z|D;\beta_0,\pi,\beta(t),\phi,\rho)$ with the mean field approximation, and aims at optimizing a lower bound of \eqref{eq:TWsbmOptimization}. 
Under this choice, $\E_q[\indCi]=\tau_{ik}$, $\E_q[\indCiCj]=\tau_{ik}\tau_{jl}$, and the ELBO term in~\eqref{ELBO} is simply
%\begin{align}
\begin{multline}
%   \text{ELBO}(\tau,\beta_0,\pi,\phi; \hat{\bm{\beta}}_{\rho_0}(t),\rho_0, D) =
\text{ELBO}(\tau,\beta_0,\pi,\phi; D) =
 \scsum{i=1}{n} \scsum{k=1}{K}  \tau_{ik}\cdot \log(\pi_k) +\scsum{\nu=1}{T} \scsum{1\leq i < j\leq n}{} \log a(y_{ij}(t_{\nu}),\phi,\rho_0)  \\ %\nonumber \\
 +\scsum{\nu=1}{T} \scsum{1\leq i < j\leq n}{} \scsum{k,l=1}{K} \dfrac{\tau_{ik}\tau_{jl}}{\phi} \biggl( y_{ij}(t_\nu) \dfrac{\exp{[(1-\rho_0)\{\beta_0^{kl}+ x_{ij}^\top \hat{\bm{\beta}}_{\rho_0}(t_\nu)\} ]}}{1-\rho_0}  \\ %\nonumber \\
 - \dfrac{\exp{[(2-\rho_0)\{\beta_0^{kl}+ x_{ij}^\top \hat{\bm{\beta}}_{\rho_0}(t_\nu)\} ]}}{2-\rho_0}\biggr) 
 -\scsum{i=1}{n} \scsum{k=1}{K}  \tau_{ik}\cdot \log(\tau_{ik}), \label{ELBO-TWSBM}
\end{multline}
%\end{align}
which is easy to maximize in a coordinate-wise fashion, i.e., successively over $ \tau $, $\beta_0$, $\pi$ and $\phi$.
%%END MZ edits 
%%\myred{[MZ stopped editing here, skipping over the rest of this subsection for now]}

The maxima of \eqref{ELBO-TWSBM} with respect to $ \tau$ and $\pi$ is found by the method of Lagrange multipliers respectively, as the according optimization problem is subject to equality constraints $\sum_{k=1}^{K} \tau_{ik} =1$ and $\sum_{k=1}^{K} \pi_k =1$ for any $i$ respectively. Specifically, at iteration step $h$
%Maximization of \eqref{ELBO-TWSBM} with respect to $\beta_0$ and $\beta$ is implemented by Newton Raphson method. 
\begin{align*}
 \tau^{(h)}_{ik}=\dfrac{f_{ik}}{\scsum{k=1}{K}f_{ik}},\ k=1,\cdots,K\text{ and }i=1,\cdots,n, 
\end{align*}
where 
\begin{align*}
    f_{ik}=&\pi_k^{(h-1)} \cdot \exp\biggl[\scsum{\nu=1}{T}\scsum{j\neq i}{} \scsum{l=1}{K} 
\dfrac{\tau^{(h-1)}_{jl}}{\phi^{(h-1)}}  \biggl\{y_{ij}(t) \dfrac{\exp[(1-\rho_0)\{( \beta_0^{kl})^{(h-1)} +x_{ij}^\top \hat{\bm{\beta}}_{\rho_0}(t_\nu)\}]}{1-\rho_0}  \\
& -\dfrac{\exp[(2-\rho_0)\{( \beta_0^{kl})^{(h-1)} +x_{ij}^\top \hat{\bm{\beta}}_{\rho_0}(t_\nu)\}]}{2-\rho_0}  \biggr\} \biggr],
\end{align*}
and
\begin{align*}
\pi_k^{(h)}=\dfrac{\scsum{i=1}{n} \tau^{(h)}_{ik}}{n},\ k=1,\cdots,K.
\end{align*}

The objective function~\eqref{ELBO-TWSBM} is concave down in $\beta_0^{kl}$ for each community label pair $k-l$, which allows the zeros of the first derivative of~\eqref{ELBO-TWSBM} to be its maxima. We update $\beta_0^{(h-1)}$ to $\beta_0^{(h)}$ by solving the equation $\frac{\partial }{\partial \beta_0}\text{ELBO}(\tau^{(h)},\beta_0,\pi^{(h)},\phi;D)=0$ analytically for each pair $k$-$l$, which can be implemented in one step:
\begin{align*}
 ( \beta_0^{kl})^{(h)}=\log \dfrac{\scsum{\nu=1}{T} \scsum{1\leq i < j \leq n}{}y_{ij}(t) \exp [(1-\rho_0)x_{ij}^\top \hat{\bm{\beta}}_{\rho_0}(t_\nu) ]\cdot \tau^{(h)}_{ik}\tau^{(h)}_{jl} }{\scsum{\nu=1}{T} \scsum{1\leq i < j \leq n}{} \exp [(2-\rho_0) x_{ij}^\top \hat{\bm{\beta}}_{\rho_0}(t_\nu)] \cdot \tau^{(h)}_{ik}\tau^{(h)}_{jl} }.
\end{align*}

%%MZ revision
With $\pi_k^{(h)}$, $\tau^{(h)}_{ik}$ and $\beta_0^{(h)}$ fixed, we can now directly maximize the ELBO term \eqref{ELBO-TWSBM} over $\phi$ to update it in principle. However, the function $a(y_{ij}(t_{\nu}),\phi,\rho_0)$ is ``a bit of a headache'' to compute, so we use 
the R package \texttt{tweedie} by \cite{dunn2005series,dunn2008evaluation} that computes \eqref{Twdist} for us,
%%we calculate the Tweedie mean $$
%%\mu_{ij}^{(h)}(t_\nu)=\exp\left\{ \beta_0^{(h)}[c_i^{(h)},c_j^{(h)}]+x_{ij}^\top \hat{\bm{\beta}}_{\rho_0}(t_\nu) \right\},$$ and 
and update $\phi$ by letting $c_i^{(h)} = \argmax_{k} \tau^{(h)}_{ik}$ and maximizing over the original log-likelihood function instead, i.e.,
%%by
    \begin{align}
        \phi^{(h)}&=\argmax\limits_{\phi} 
%%        L(\beta_0^{(h)},\pi^{(h)},\phi; D,c^{(h)})
\ \log \P(D,c^{(h)}; \beta_0^{(h)},\pi^{(h)},\phi).
%%\nonumber \\
%%         & =\argmax\limits_{\phi} \scprod{\nu=1 }{T} \scprod{i<j }{}
%%         f(y_{ij}(t_\nu) \mid \mu_{ij}^{(h)}(t_\nu),\phi,\rho_0), 
         \label{updatePhi}
    \end{align}
%%    where $f$ in~\eqref{updatePhi} is the Tweedie density function~\eqref{Twdist}. 
We do this directly using the R function \texttt{optim}.
%%END MZ revision

\begin{comment}
\begin{align*}
  &  l(Z,\beta_0,\beta(t),\phi,\rho|D)  =  \scsum{i=1}{n} \scsum{k=1}{K}  z_{ik}\cdot \log(\pi_k) +\scsum{t=1}{T} \scsum{1\leq i < j\leq n}{} \scsum{k,l=1}{K} \log a(y_{ij}(t),\phi,\rho) +  \nonumber \\
    &\scsum{t=1}{T} \scsum{1\leq i < j\leq n}{} \scsum{k,l=1}{K} \dfrac{1}{\phi} \left( y_{ij} \dfrac{\exp{[(1-\rho)(\beta_0^{kl} + x_{ij} \beta(t))]}}{1-\rho} - \dfrac{\exp{[(2-\rho)(\beta_0^{kl} + x_{ij} \beta(t))]}}{2-\rho}\right)\cdot z_{ik}z_{jl} .
\end{align*}

\begin{align*}
  & \text{ELBO}(q,\beta_0,\beta,\phi,\rho;D)= \scsum{i=1}{n} \scsum{k=1}{K}  \tau_{ik}\cdot \log(\pi_k) -\scsum{i=1}{n} \scsum{k=1}{K}  \tau_{ik}\cdot \log(\tau_{ik}) \\
  &  +\scsum{t=1}{T} \scsum{1\leq i < j\leq n}{} \scsum{k,l=1}{K} \log a(y_{ij}(t),\phi,\rho)
  \\
    &+\scsum{t=1}{T}  \scsum{1\leq i < j\leq n}{} \scsum{k,l=1}{K} \dfrac{\tau_{ik}\tau_{jl}}{\phi} \left( y_{ij} \dfrac{\exp{[(1-\rho)(\beta_0^{kl} + x_{ij} \beta(t))]}}{1-\rho} - \dfrac{\exp{[(2-\rho)(\beta_0^{kl} + x_{ij} \beta(t))]}}{2-\rho}\right). 
\end{align*}
\end{comment}

\subsection{Tuning Parameter Selection}
\label{subsec:tuning}

We adapt the leave-one-out cross validation to choose the tuning parameter $\lambda$ when fitting our model. 
%We adapt the idea of time series cross-validation \citep{hyndman2018forecasting} and propose the tuning scheme in~\ref{alg:cv}. 
In particular, each time 
we utilize observations made at $T - 1$ time points to train the model and then test
%calculate the loss of 
the trained model on the observations made at the remaining time points. 
%To avoid boundary effects, we repeat the above procedure $T-2$ times by always retaining the observations made at the first and the last time points in the training set. 
%\JJ{To avoid boundary effects, we exclude the first and last time points as testing data. More specifically, we repeat the above procedure $T-2$ times by taking the observations made from the second to the last second time points as the test fold respectively. }
%%MZ sees PS and JJ appear to disagree on how to write the sentence above. MZ will hereby rewrite it as follows:
\myred{To avoid boundary effects, our leave-one-out procedure is repeated for only $T-2$ times (as opposed to the usual $T$ times), because we always retain the observations at times $t_1$ and $t_T$ in the training set---only those at times $t_2, \dots, t_{T-1}$ are used (one at a time) as test points.} 
%%END MZ rewrite
In our implementations, the loss is defined as the negative log-likelihood of the fitted model, and the overall loss 
is taken as the average across the $T-2$ repeats. We select the ``optimal" $\lambda$ that gives rise to the smallest loss.

\section{Simulation}
\label{sec:simulation}

%%BEGIN MZ rewrite
%%This section presents simulation studies for three objectives: 1) to illustrate the usage of Tweedie distribution in the classic problem of Stochastic Block Models (Vanilla Tweedie SBM) in Section~\ref{subsec:simulationVanillaTWsbm}; 2) how considering the nodal-pair covariates helps uncover the latent communities in Vanilla Tweedie SBM (Tweedie SBM with covariates) in Section~\ref{subsec:simulationTWsbm}, and 3) the time-varying effect of the nodal-pair covariates in dynamic weighted SBM problem (Time-varying Tweedie SBM) in Section~\ref{subsec:simulationTVTWsbm}. 
In this section, we present simulation results to validate the performance of \myred{our restricted Tweedie SBM}. We do so in successive steps---from the vanilla model (Section~\ref{subsec:simulationVanillaTWsbm}), to the static model with covariates (Section~\ref{subsec:simulationTWsbm}), and finally, the most general, time-varying version of the model (Section~\ref{subsec:simulationTVTWsbm}). 
%%END MZ rewrite

%%Minor edits by MZ before Sec 5.1, details not noted.
We mainly focus on two aspects of the results, the clustering quality and the accuracy of the estimated covariate effects. 
%%Above, MZ changed ``accuracy of the estimated parameters'' to ``accuracy of the estimated covariate effects'' because we never really measured the accuracy of $\hat{\phi},\hat{\rho}$, right?
We measure the latter by the mean squared error, and the former by a metric called ``normalized mutual information'' (NMI) \citep{danon2005comparing}, which ranges in $[0,1]$, with values closer to $1$ indicating better agreements between the estimated community labels and the true ones.

\begin{comment}
defined as the mutual information between the estimated clustering results $C_{est}$ and the ground true clustering $C_{true}$ normalized by the sum of their entropy,
\begin{align*}
    NMI(C_{est},C_{true})&=
    %\dfrac{2I(C_{est},C_{true})}{H(C_{est})+H(C_{true})}=
    \dfrac{2\sum\limits_{i=1}^{K}\sum\limits_{j=1}^{K} \frac{m_{ij}}{N} \log \frac{m_{ij} N}{ m_{i\cdot} m_{\cdot j} } }{-\sum\limits_{i=1}^{K}\frac{m_{i\cdot}}{N}\log \frac{m_{i\cdot}}{N} -\sum\limits_{j=1}^{K}\frac{m_{\cdot j}}{N}\log \frac{m_{\cdot j}}{N}},
\end{align*}
where 
$M$ is a $K$-by-$K$ contingency table denoting the number of nodes the overlap between the two clustering. Namely, $M_{ij}$ denoting the number of nodes clustered as group $i$ in $C_{est}$ while clustered as group $j$ in $C_{true}$. 
\end{comment}

For all simulations, we fix the true number of communities to be $K=3$, with prior probabilities $\pi=(0.2,0.3,0.5)$. For the true matrix $\beta_0$, we set all diagonal entries $\beta_0^{kk}$ to be equal, and all off-diagonal entries $\beta_0^{kl}$ to be equal as well---so the entire matrix is completely specified by just two numbers. 

To avoid getting stuck at poor local optima, we use multiple initial values in each run. 

%\subsection{Vanilla Tweedie SBM}
\subsection{Simulation of vanilla model}
\label{subsec:simulationVanillaTWsbm}

First, we assess the performance of \myred{our vanilla model (Section~\ref{subsec-vanillaTWsbm})}, and compare it with the Poisson SBM and spectral clustering. 
The Poisson SBM assumes the edges follow Poisson distributions; we simply round each $y_{ij}$ into an integer and use the function \texttt{estimateSimpleSBM} in R package \texttt{sbm} to fit it. 
%%BEGIN MZ comment out
%%Spectral clustering, another popular method for weighted community detection, can leverage the eigenvectors of the weighted graph Laplacian matrix to identify clusters of nodes. 
%%END MZ comment out
To run spectral clustering, we use the function \texttt{reg.SSP} from the R package \texttt{randnet}.
The function \texttt{estimateSimpleSBM} uses results from a bipartite SBM as its initial values. To make a more informative comparison, we use two different initialization strategies to fit our model: (i) starting from 30 sets of randomly drawn community labels and picking the best solution afterwards, and (ii) starting from the Poisson SBM result itself.

We generate $Y$ using nine different combinations of  $(\phi,\rho)$ with $\phi=0.5, 1, 2$ and $\rho=1.2, 1.5, 1.8$, and three different $\beta_0$ matrices:
% \begin{equation*}
% \begin{array}{llcl}
% \text{scenario 1,} & (\beta_0^{kk},\beta_0^{kl})=(1.0,\phantom{-}0.0) & \Rightarrow & \exp(\beta_0^{kk})-\exp(\beta_0^{kl}) \approx 1.72;\\
% \text{scenario 2,} & (\beta_0^{kk},\beta_0^{kl})=(0.5,-0.5) & \Rightarrow & \exp(\beta_0^{kk})-\exp(\beta_0^{kl})\approx 1.04;\\
% \text{scenario 3,} & (\beta_0^{kk},\beta_0^{kl})=(0.0,-1.0) & \Rightarrow & \exp(\beta_0^{kk})-\exp(\beta_0^{kl})\approx 0.63.
% \end{array}
% \end{equation*}
\begin{align*}
\text{scenario 1, }  (\beta_0^{kk},\beta_0^{kl})=(1.0,\phantom{-}0.0)  \Rightarrow & \exp(\beta_0^{kk})-\exp(\beta_0^{kl}) \approx 1.72;\\
\text{scenario 2, }  (\beta_0^{kk},\beta_0^{kl})=(0.5,-0.5)  \Rightarrow & \exp(\beta_0^{kk})-\exp(\beta_0^{kl})\approx 1.04;\\
\text{scenario 3, }  (\beta_0^{kk},\beta_0^{kl})=(0.0,-1.0)  \Rightarrow & \exp(\beta_0^{kk})-\exp(\beta_0^{kl})\approx 0.63.
\end{align*}

According to the discrepancy in $\mu_{ij}$ between $(i,j)$-pairs belonging to the same group and those belonging to different groups, the clustering difficulty of the three designs can be roughly ordered as: scenario 1 $<$ scenario 2 $<$ scenario 3.

Table~\ref{simulation1_tab1},~\ref{simulation1_tab2}, and~\ref{simulation1_tab3} summarize the averages and the standard errors of the NMI metric for different methods over 50 simulation runs, respectively for scenarios 1, 2 and 3. As expected, all methods perform the best in scenario 1 and the worst in scenario 3. Their performances also improve when the sample size $n$ increases, and as the parameter $\phi$ decreases---as the dispersion parameter, a smaller $\phi$ means a reduced variance and an easier problem.

Overall, our \myred{restricted Tweedie SBM} and the Poisson SBM tend to outperform spectral clustering. Among all 54 sets of simulation results, 
%%MZ says "OMG, such a cumbersome way of doing it! Can you imagine how he readers will feel??!!"
%%i.e., (scenarios 1, 2, 3) $\times$ $(\phi=2, 1, 0.5) \times (\rho=1.2, 1.5, 1.8) \times (n=50, 100)$, 
%%END MZ revision
our model with random initialization compares favorably with other methods in 50 of them. 
In the remaining four sets \myred{(marked by a superscript ``$\dagger$'' in the tables)}, 
%%MZ says "OMG, such a cumbersome way of doing it! Can you imagine how he readers will feel??!!"
%%i.e. (scenario 2, $\phi=0.5$, $\rho=1.2$, $n=50$), (scenario 2, $\phi=0.5$, $\rho=1.5$, $n=50$), (scenario 2, $\phi=0.5$, $\rho=1.8$, $n=50$), and (scenario 3, $\phi=0.5$, $\rho=1.2$, $n=50$), 
%%\myred{[MZ: one cannot easily see these from the tables]} 
%%END MZ revision
the Poisson SBM is slightly better, but we could still outperform it in three of them and match it in the other if we initialized our algorithm with the Poisson SBM result itself. It is evident in all cases that \myred{our restricted Tweedie SBM} can further improve the clustering result of the Poisson SBM.

\begin{table}[ht] %htp
\centering
\begin{tabular}{|c|c|c|cccc|}
\hline
 & & & \multicolumn{2}{c}{Restricted Tweedie SBM} & Poisson & Spectral \\
 $\phi$ & $\rho$ & $n$ & Random Init.  &  Poisson Init.       &   SBM     & Clustering    \\ \hline 
 \hline
    \multirow{6}{*}{$2$} &\multirow{2}{*}{$1.2$} & 50 & \shortstack{\\ 0.9097 \\(0.016)}  & \shortstack{\\ 0.8275 \\(0.023)}  & \shortstack{\\ 0.8099 \\(0.022)}  & \shortstack{\\ 0.5547 \\(0.012)}  \\ \cline{3-7}
&& 100 & \shortstack{\\ 0.9958 \\(0.002)}  &  \shortstack{\\ 0.9958 \\(0.002)} & \shortstack{ 0.9950\\(0.002)}  &  \shortstack{\\0.9185 \\(0.019)} \\ \cline{2-7}
&\multirow{2}{*}{$1.5$} & 50 & \shortstack{\\ 0.8647 \\(0.019)}  &  \shortstack{\\ 0.7780 \\(0.02)} &  \shortstack{\\ 0.7275 \\(0.02)} &  \shortstack{\\ 0.5152 \\(0.012)} \\ \cline{3-7}
&& 100 & \shortstack{\\0.9878 \\(0.003)}& \shortstack{\\0.9878 \\(0.003)} & \shortstack{\\0.9865 \\(0.003)} & \shortstack{\\0.769 \\(0.025)}\\ \cline{2-7}
&\multirow{2}{*}{$1.8$} & 50 & \shortstack{\\ 0.7644 \\(0.017)} & \shortstack{\\ 0.7180 \\(0.020)}  & \shortstack{\\ 0.6539 \\(0.02)}  & \shortstack{\\ 0.4857 \\(0.015)} \\  \cline{3-7}
&& 100 &\shortstack{\\0.9828 \\(0.004)}& \shortstack{\\0.9828 \\(0.004)}  & \shortstack{\\0.9826 \\(0.004)} & \shortstack{\\0.6597 \\(0.015)} \\
\hline
 \hline
    \multirow{6}{*}{$1$} &\multirow{2}{*}{$1.2$} & 50 & \shortstack{\\ 0.9918 \\(0.005)}  & \shortstack{\\ 0.9946 \\(0.004)}  & \shortstack{\\ 0.9880 \\(0.004)}  & \shortstack{\\ 0.7529 \\(0.027)}  \\ \cline{3-7}
&& 100 &  \shortstack{\\ 1 \\( 0 )} & \shortstack{\\ 1 \\( 0 )}  & \shortstack{\\ 1 \\( 0 )}  & \shortstack{\\ 1 \\( 0 )} \\ \cline{2-7}
&\multirow{2}{*}{$1.5$} & 50 & \shortstack{\\ 0.9778 \\(0.008)} & \shortstack{\\ 0.9859 \\(0.006)}  & \shortstack{\\ 0.9745 \\(0.008)}  & \shortstack{\\ 0.7034 \\(0.023)} \\ \cline{3-7}
&& 100 &  \shortstack{\\ 1 \\( 0 )} &  \shortstack{\\ 1 \\( 0 )} & \shortstack{\\ 1 \\( 0 )}  & \shortstack{\\0.9991 \\(0.001)} \\ \cline{2-7}
&\multirow{2}{*}{$1.8$} & 50 & \shortstack{\\ 0.9653 \\(0.01)}  & \shortstack{\\ 0.9644 \\(0.01)}  & \shortstack{\\ 0.9512 \\(0.012)}  & \shortstack{\\ 0.6702 \\(0.019)} \\  \cline{3-7}
&& 100 & \shortstack{\\0.9992 \\(0.001)}  &  \shortstack{\\0.9992 \\(0.001)} & \shortstack{\\0.9992 \\(0.001)}  &  \shortstack{\\0.9656 \\(0.013)}\\
\hline
 \hline
\multirow{6}{*}{$0.5$} &\multirow{2}{*}{$1.2$} & 50 & \shortstack{\\ 1 \\( 0 )}  &  \shortstack{\\ 1 \\( 0 )} & \shortstack{\\ 1 \\( 0 )}  & \shortstack{\\ 0.9934 \\(0.007)} \\ \cline{3-7}
&& 100 & \shortstack{\\ 1 \\( 0 )}  &  \shortstack{\\ 1 \\( 0 )} & \shortstack{\\ 1 \\( 0 )}  & \shortstack{\\ 1 \\( 0 )} \\ \cline{2-7}
&\multirow{2}{*}{$1.5$} & 50 &  \shortstack{\\ 1 \\( 0 )} &  \shortstack{\\ 1 \\( 0 )} & \shortstack{\\ 1 \\( 0 )}  & \shortstack{\\ 0.9297 \\(0.019)} \\ \cline{3-7}
&& 100 & \shortstack{\\ 1 \\( 0 )} & \shortstack{\\ 1 \\( 0 )}  & \shortstack{\\ 1 \\( 0 )} & \shortstack{\\ 1 \\( 0 )} \\ \cline{2-7}
&\multirow{2}{*}{$1.8$} & 50 & \shortstack{\\ 1 \\( 0 )}  &  \shortstack{\\ 1 \\( 0 )} &  \shortstack{\\ 0.9985 \\(0.001)} & \shortstack{\\ 0.8307 \\(0.025)} \\  \cline{3-7}
&& 100 &  \shortstack{\\ 1 \\( 0 )} &  \shortstack{\\ 1 \\( 0 )} &  \shortstack{\\ 1 \\( 0 )} & \shortstack{\\ 1 \\( 0 )} \\
\hline
\end{tabular}
\caption{Summary of the NMI in scenario 1, $(\beta_0^{kk},\beta_0^{kl})=(1,0)$, over 50  simulation runs.}
\label{simulation1_tab1}
\end{table}

\begin{table}[ht] %htp
\centering
\begin{tabular}{|c|c|c|cccc|}
\hline
 & & & \multicolumn{2}{c}{Restricted Tweedie SBM} & Poisson & Spectral \\
 $\phi$ & $\rho$ & $n$ & Random Init.  &  Poisson Init.       &   SBM     & Clustering    \\ \hline 
 \hline
    \multirow{6}{*}{$2$} &\multirow{2}{*}{$1.2$} & 50 &\shortstack{\\ 0.7490 \\(0.023)}&   \shortstack{\\ 0.6713 \\(0.024)} & \shortstack{\\ 0.640 \\(0.021)} &  \shortstack{\\ 0.4515 \\(0.014)} \\ \cline{3-7}
&& 100 & \shortstack{\\ 0.9698 \\(0.007)} & \shortstack{\\ 0.9592 \\(0.011)} & \shortstack{\\ 0.9603 \\(0.011)} & \shortstack{\\ 0.6936 \\(0.023)}\\ \cline{2-7}
&\multirow{2}{*}{$1.5$} & 50 &\shortstack{\\ 0.6921 \\(0.023)}&\shortstack{\\ 0.6327 \\(0.021)}& \shortstack{\\ 0.6031 \\(0.021)} & \shortstack{\\ 0.4596 \\(0.018)} \\ \cline{3-7}
&& 100 & \shortstack{\\ 0.9568 \\(0.009)} & \shortstack{\\ 0.9650 \\(0.007)} & \shortstack{\\ 0.9430 \\(0.011)} & \shortstack{\\ 0.6133 \\(0.014)} \\ \cline{2-7}
&\multirow{2}{*}{$1.8$} & 50 &\shortstack{\\ 0.7052 \\(0.022)} & \shortstack{\\ 0.6315 \\(0.023)} & \shortstack{\\ 0.5727 \\(0.020)} &\shortstack{\\ 0.4174 \\(0.02)}\\  \cline{3-7}
&& 100 & \shortstack{\\ 0.9803 \\(0.004)}  & \shortstack{\\ 0.9539 \\(0.013)} & \shortstack{\\ 0.9362 \\(0.013)} & \shortstack{\\ 0.6433 \\(0.012)}\\
\hline
\hline
    \multirow{6}{*}{$1$} &\multirow{2}{*}{$1.2$} & 50 & \shortstack{\\ 0.9490 \\(0.013)} & \shortstack{\\ 0.9284 \\(0.014)}  & \shortstack{\\ 0.9037 \\(0.013)} & \shortstack{\\ 0.6489 \\(0.021)}  \\ \cline{3-7}
&& 100 & \shortstack{\\ 0.9992 \\(0.001)}&\shortstack{\\ 0.9992 \\(0.001)}& \shortstack{\\ 0.9984 \\(0.001)} & \shortstack{\\ 0.9918 \\(0.003)}  \\ \cline{2-7}
&\multirow{2}{*}{$1.5$} & 50 & \shortstack{\\ 0.9330 \\(0.014)} & \shortstack{\\ 0.9193 \\(0.014)} & \shortstack{\\ 0.9127 \\(0.014)} & \shortstack{\\ 0.6304 \\(0.018)} \\ \cline{3-7}
&& 100 & \shortstack{\\ 1 \\( 0 )} &\shortstack{\\ 1 \\( 0 )}& \shortstack{\\ 0.9976 \\(0.001)} &\shortstack{\\ 0.9926 \\(0.003)}\\ \cline{2-7}
&\multirow{2}{*}{$1.8$} & 50 & \shortstack{\\ 0.9288 \\(0.013)} & \shortstack{\\ 0.9235 \\(0.014)} & \shortstack{\\ 0.9103 \\(0.014)} & \shortstack{\\ 0.6437 \\(0.015)} \\  \cline{3-7}
&& 100 &\shortstack{\\ 0.9992 \\(0.001)} & \shortstack{\\ 0.9992 \\(0.001)} &\shortstack{\\ 0.9967 \\(0.002)}&\shortstack{\\ 0.9375 \\(0.017)}\\
\hline
 \hline
\multirow{6}{*}{$0.5$} &\multirow{2}{*}{$1.2$} & 50$^{\dagger}$ & \shortstack{\\ 0.9961 \\(0.004)}  & \shortstack{\\ 1 \\( 0 )} &  \shortstack{\\ 1 \\( 0 )} &  \shortstack{\\ 0.8504 \\(0.027)}  \\ \cline{3-7}
&& 100 & \shortstack{\\ 1 \\( 0 )}  &  \shortstack{\\ 1 \\( 0 )} & \shortstack{\\ 1 \\( 0 )}  & \shortstack{\\ 0.9991 \\(0.001)} \\ \cline{2-7}
&\multirow{2}{*}{$1.5$} & 50$^{\dagger}$ & \shortstack{\\ 0.9847 \\(0.009)}  & \shortstack{\\ 1 \\( 0 )}  & \shortstack{\\ 1 \\( 0 )}  & \shortstack{\\ 0.8193 \\(0.0260)} \\ \cline{3-7}
&& 100 &  \shortstack{\\ 1 \\( 0 )} &  \shortstack{\\ 1 \\( 0 )}& \shortstack{\\ 1 \\( 0 )}& \shortstack{\\ 1 \\( 0 )}\\ \cline{2-7}
&\multirow{2}{*}{$1.8$} & 50$^{\dagger}$ & \shortstack{\\ 0.9879 \\(0.007)}  & \shortstack{\\ 1 \\( 0 )}  &  \shortstack{\\ 0.9973 \\(0.002)} & \shortstack{\\ 0.7947 \\(0.026)}   \\  \cline{3-7}
&& 100 & \shortstack{\\ 1 \\( 0 )}& \shortstack{\\ 1 \\( 0 )}& \shortstack{\\ 1 \\( 0 )}&  \shortstack{\\ 1 \\( 0 )}\\
\hline
\end{tabular}
\caption{Summary of NMI in scenario 2, $(\beta_0^{kk},\beta_0^{kl})=(0.5,-0.5)$, over 50 runs. A superscript ``$\dagger$'' denotes a case in which (restricted Tweedie SBM with random initialization) $<$ (Poisson SBM) $\leq$ (restricted Tweedie SBM with Poisson initialization) in their respective clustering performances.}
\label{simulation1_tab2}
\end{table}

\begin{table}[ht] %htp
\centering
\begin{tabular}{|c|c|c|cccc|}
\hline
 & & & \multicolumn{2}{c}{Restricted Tweedie SBM} & Poisson & Spectral \\
 $\phi$ & $\rho$ & $n$ & Random Init.  &  Poisson Init.       &   SBM     & Clustering    \\ \hline 
 \hline
    \multirow{6}{*}{$2$} &\multirow{2}{*}{$1.2$} & 50 & \shortstack{\\ 0.4385 \\(0.032)} & \shortstack{\\ 0.4340 \\(0.027)} & \shortstack{\\ 0.4243 \\(0.025)} & \shortstack{\\ 0.2889 \\(0.022)} \\ \cline{3-7}
&& 100 & \shortstack{\\ 0.8497 \\(0.013)} & \shortstack{\\ 0.8025 \\(0.019)} & \shortstack{\\ 0.774 \\(0.020)} & \shortstack{\\ 0.5134 \\( 0.016 )}\\ \cline{2-7}
&\multirow{2}{*}{$1.5$} & 50 & \shortstack{\\ 0.5611 \\(0.023)} &\shortstack{\\ 0.5226 \\(0.023)} & \shortstack{\\ 0.5071 \\(0.022)} &\shortstack{\\ 0.3462 \\(0.018)} \\ \cline{3-7}
&& 100 & \shortstack{\\ 0.9097 \\(0.012)} & \shortstack{\\ 0.8606 \\(0.016)} & \shortstack{\\ 0.8146 \\(0.017)} & \shortstack{\\ 0.5737 \\(0.012)} \\ \cline{2-7}
&\multirow{2}{*}{$1.8$} & 50 &\shortstack{\\ 0.6179 \\(0.022)}&\shortstack{\\ 0.5771 \\(0.024)}&\shortstack{\\ 0.522 \\(0.021)}&\shortstack{\\ 0.4102 \\(0.018)}\\  \cline{3-7}
&& 100 &\shortstack{\\ 0.9567 \\(0.009)}&\shortstack{\\ 0.8736 \\(0.02)}&\shortstack{\\ 0.8377 \\(0.020)}& \shortstack{\\ 0.5985 \\(0.013)} \\
\hline
\hline
    \multirow{6}{*}{$1$} &\multirow{2}{*}{$1.2$} & 50 &\shortstack{\\ 0.8710 \\(0.016)}
& \shortstack{\\ 0.7404 \\(0.017)} & \shortstack{\\ 0.7325 \\(0.016)} &\shortstack{\\ 0.5379 \\(0.011)} \\ \cline{3-7}
&& 100 &\shortstack{\\ 0.9893 \\(0.006)}&\shortstack{\\ 0.9967 \\(0.002)} &\shortstack{\\ 0.9842 \\(0.003)}&\shortstack{\\ 0.862 \\(0.022)}\\ \cline{2-7}
&\multirow{2}{*}{$1.5$} & 50 & \shortstack{\\ 0.8709 \\(0.016)} & \shortstack{\\ 0.7763 \\(0.017)} & \shortstack{\\ 0.7684 \\(0.016)} & \shortstack{\\ 0.5601 \\(0.012)} \\ \cline{3-7}
&& 100 & \shortstack{\\ 0.9950 \\(0.004)}& \shortstack{\\ 0.9992 \\(0.001)} & \shortstack{\\ 0.9876 \\(0.003)}& \shortstack{\\ 0.8311 \\(0.022)}  \\ \cline{2-7}
&\multirow{2}{*}{$1.8$} & 50 & \shortstack{\\ 0.8806 \\(0.017)} &\shortstack{\\ 0.8039 \\(0.019)}&\shortstack{\\ 0.7901 \\(0.018)}& \shortstack{\\ 0.6092 \\(0.013)} \\  \cline{3-7}
&& 100 &\shortstack{\\ 0.9992 \\(0.001)}&\shortstack{\\ 0.9992 \\(0.001)}&\shortstack{\\ 0.9934 \\(0.002)}&\shortstack{\\ 0.8876 \\(0.022)}\\
\hline
 \hline
\multirow{6}{*}{$0.5$} &\multirow{2}{*}{$1.2$} & 50 & \shortstack{\\ 0.9414 \\(0.014)} & \shortstack{\\ 0.8998 \\(0.017)} & \shortstack{\\ 0.8817 \\(0.016)} & \shortstack{\\ 0.7379 \\(0.028)}   \\ \cline{3-7}
&& 100$^{\dagger}$ &\shortstack{\\ 0.9956 \\(0.004)} & \shortstack{\\ 1 \\( 0 )} &\shortstack{\\ 1 \\( 0 )}&\shortstack{\\ 0.9983 \\(0.001)}\\ \cline{2-7}
&\multirow{2}{*}{$1.5$} & 50 & \shortstack{\\ 0.9591 \\(0.012)} &\shortstack{\\ 0.9112 \\(0.015)}& \shortstack{\\ 0.8999 \\(0.015)} &\shortstack{\\ 0.7354 \\(0.026)} \\ \cline{3-7}
&& 100 &\shortstack{\\ 1 \\( 0 )}&\shortstack{\\ 1 \\( 0 )}& \shortstack{\\ 1 \\( 0 )}  & \shortstack{\\ 1 \\( 0 )} \\ \cline{2-7}
&\multirow{2}{*}{$1.8$} & 50 & \shortstack{\\ 1 \\(0)} &  \shortstack{\\ 0.9727 \\(0.01)} &\shortstack{\\ 0.9550 \\(0.01)}&\shortstack{\\ 0.7549 \\(0.022)}\\  \cline{3-7}
&& 100 & \shortstack{\\ 1 \\( 0 )} &\shortstack{\\ 1 \\( 0 )}& \shortstack{\\ 1 \\( 0 )}& \shortstack{\\ 1 \\( 0 )}\\
\hline
\end{tabular}
\caption{Summary of NMI in scenario 3, $(\beta_0^{kk},\beta_0^{kl})=(0,-1)$, over 50 simulation runs. A superscript ``$\dagger$'' denotes a case in which (restricted Tweedie SBM with random initialization) $<$ (Poisson SBM) $\leq$ (restricted Tweedie SBM with Poisson initialization) in their respective clustering performances.}
\label{simulation1_tab3}
\end{table}

%\subsection{Simulation of Tweedie SBM with Covariates}
\subsection{Simulation of model with covariates}
\label{subsec:simulationTWsbm}

Next, we study \myred{our static model with covariates (Section~\ref{subsec-TWsbm})}. 
We use exactly the same combination of $\phi$, $\rho$ and $n$ as we did previously in Section~\ref{subsec:simulationVanillaTWsbm}, but only scenario 2 for the matrix $\beta_0$---the one with medium difficulty---for conciseness. 

For the covariates, we take $p=1$ so there is just one scalar covariate $x_{ij}$, which we generate independently for each $(i,j)$-pair from the uniform distributions on $(-1,1)$. The true covariate effect $\beta$ is simulated to be either weak ($\beta=1$) or strong ($\beta=2$). 

%%BEGIN MZ comment out
%%To emphasize that including the covariate effect in modelling Tweedie SBM can improve the clustering performance and the interpretability, we compare the Tweedie SBM with Covariates in Section~\ref{subsec-TWsbm} and the Vanilla Tweedie SBM in Section~\ref{subsec-vanillaTWsbm}. 
%%END MZ comment out

Table~\ref{simulation2_tab1} summarizes the results.
%%BEGIN MZ comment out/rewrite
%%One can see that these two methods have a notable difference in terms of NMI, especially when the underlying true covariate coefficient $\beta$ increases from $1$ to $2$. The performance of the Vanilla Tweedie SBM is less satisfactory.  
%%END MZ comment out/rewrite
Clearly, if there is a covariate $x_{ij}$ affecting the outcome $y_{ij}$, not taking it into account (and simply fitting a vanilla model) will significantly affect the clustering result, as measured by the NMI metric.
On the other hand, the mean and standard error of the estimate $\hat{\beta}$ over repeated simulation runs clearly validate the \myred{correctness of Theorem~\ref{theorem1_1} and the} effectiveness of our two-step algorithm\myred{---the covariate effects can indeed be estimated quite well with arbitrarily assigned community labels}. 

\begin{table}[ht] %htp
\centering
\begin{tabular}{|c|c|c|ccc|ccc|}
\hline
\multicolumn{3}{|c|}{} & \multicolumn{3}{c|}{Weak Effect ($\beta=1$)} & \multicolumn{3}{c|}{Strong Effect ($\beta=2$)} \\ \hline
 $\phi$ & $\rho$ & $n$ & \shortstack{\\NMI\\(excl.~$x_{ij}$)}   &  \shortstack{\\NMI\\(incl.~$x_{ij}$)}  & $\hat{\beta}$ &  \shortstack{\\NMI\\(excl.~$x_{ij}$)}   &  \shortstack{\\NMI\\(incl.~$x_{ij}$)}  & $\hat{\beta}$  \\ \hline
\multirow{6}{*}{$0.5$} &\multirow{2}{*}{$1.2$} & 50 & \shortstack{\\0.9804\\(0.009)}  & \shortstack{\\0.9794\\(0.01)} & \shortstack{\\1.0015\\(0.006)} & \shortstack{\\0.9289\\(0.015)} &  \shortstack{\\1\\(0)}& \shortstack{\\2.0067\\(0.006)}  \\ \cline{3-9}
&& 100 &\shortstack{\\1\\(0)} & \shortstack{\\1\\(0)} & \shortstack{\\0.9979\\(0.002)} &\shortstack{\\0.9976\\(0.001)} &\shortstack{\\1\\(0)}&\shortstack{\\1.9986\\(0.002)}\\ \cline{2-9}
&\multirow{2}{*}{$1.5$} & 50 & \shortstack{\\0.9626\\(0.013)} & \shortstack{\\0.9908\\(0.006)} & \shortstack{\\1.0128\\(0.005)} & \shortstack{\\0.9017\\(0.017)} & \shortstack{\\0.9986\\(0.001)} &  \shortstack{\\1.9969\\(0.005)} \\ \cline{3-9}
&& 100 &\shortstack{\\1\\(0)} &\shortstack{\\1\\(0)} & \shortstack{\\0.9994\\(0.003)} & \shortstack{\\0.9742\\(0.009)}&\shortstack{\\1\\(0)}&\shortstack{\\1.9995\\(0.002)} \\ \cline{2-9}
&\multirow{2}{*}{$1.8$} & 50 & \shortstack{\\0.9667\\(0.013)}  & \shortstack{\\0.9793\\(0.009)} & \shortstack{\\1.0083\\(0.005)} &\shortstack{\\0.8300\\(0.023)} & \shortstack{\\0.9883\\(0.007)} & \shortstack{\\2.0013\\(0.006)}\\  \cline{3-9}
&& 100 &\shortstack{\\1\\(0)} & \shortstack{\\1\\(0)} & \shortstack{\\0.9943\\(0.003)} &\shortstack{\\0.9731\\(0.007)}&\shortstack{\\1\\(0)}&\shortstack{\\1.9940\\(0.003)}\\
\hline \hline
\multirow{6}{*}{$1$} &\multirow{2}{*}{$1.2$} & 50 & \shortstack{\\0.9234\\(0.015)} & \shortstack{\\0.9344\\(0.016)} & \shortstack{\\0.9948\\(0.008)} & \shortstack{\\0.8335\\(0.022)} &  \shortstack{\\0.9846\\(0.008)} & \shortstack{\\1.9889\\(0.007)}  \\ \cline{3-9}
&& 100 & \shortstack{\\0.9984\\(0.001)}& \shortstack{\\1\\(0)} &\shortstack{\\1.0026\\(0.004)} &\shortstack{\\0.9597\\(0.009)}   & \shortstack{\\1\\(0)} &\shortstack{\\1.9974\\(0.003)}\\ \cline{2-9}
&\multirow{2}{*}{$1.5$} & 50 & \shortstack{\\0.8811\\(0.019)} &  \shortstack{\\0.9304\\(0.015)} & \shortstack{\\1.0039\\(0.006)} & \shortstack{\\0.7092\\(0.022)} & \shortstack{\\0.9687\\(0.011)} & \shortstack{\\1.9936\\(0.007)}\\ \cline{3-9}
&& 100 & \shortstack{\\0.9930\\(0.003)}& \shortstack{\\0.9992\\(0.001)} &\shortstack{\\0.9984\\(0.004)}&\shortstack{\\0.9225\\(0.011)}   & \shortstack{\\1\\(0)} &\shortstack{\\1.9948\\(0.004)} \\ \cline{2-9}
&\multirow{2}{*}{$1.8$} & 50 & \shortstack{\\0.8861\\(0.016)} & \shortstack{\\0.9404\\(0.012)} &  \shortstack{\\1.0176\\(0.007)} &\shortstack{\\0.5655\\(0.027)} &\shortstack{\\0.9234\\(0.015)}  & \shortstack{\\2.0155\\(0.008)}\\  \cline{3-9}
&& 100 &\shortstack{\\0.9877\\(0.007)} & \shortstack{\\0.9922\\(0.005)} &\shortstack{\\0.9946\\(0.004)}&\shortstack{\\0.8724\\(0.013)} & \shortstack{\\0.9945\\(0.004)} & \shortstack{\\1.9955\\(0.003)} \\
\hline \hline
\multirow{6}{*}{$2$} &\multirow{2}{*}{$1.2$} & 50 & \shortstack{\\0.7058\\(0.018)} & \shortstack{\\0.7699\\(0.018)} & \shortstack{\\0.9994\\(0.011)} & \shortstack{\\0.6262\\(0.022)} &  \shortstack{\\0.8621\\(0.018)} & \shortstack{\\2.0066\\(0.009)}  \\ \cline{3-9}
&& 100 &\shortstack{\\0.9542\\(0.009)} & \shortstack{\\0.976\\(0.008)} &\shortstack{\\0.9960\\(0.005)} & \shortstack{\\0.9022\\(0.011)}&\shortstack{\\0.9887\\(0.004)}&\shortstack{\\1.9902\\(0.004)}\\ \cline{2-9}
&\multirow{2}{*}{$1.5$} & 50 & \shortstack{\\0.6203\\(0.023)} & \shortstack{\\0.7028\\(0.021)} & \shortstack{\\1.0070\\(0.012)} & \shortstack{\\0.4602\\(0.022)}& \shortstack{\\0.7610\\(0.019)} & \shortstack{\\2.0025\\(0.012)} \\ \cline{3-9}
&& 100 &\shortstack{\\0.9015\\(0.012)} &\shortstack{\\0.9609\\(0.009)} & \shortstack{\\0.9868\\(0.005)}&\shortstack{\\0.7827\\(0.015)}   & \shortstack{\\0.9735\\(0.008)} & \shortstack{\\1.9885\\(0.006)} \\ \cline{2-9}
&\multirow{2}{*}{$1.8$} & 50 &\shortstack{\\0.5353\\(0.025)} &\shortstack{\\0.7114\\(0.024)}& \shortstack{\\1.0068\\(0.012)} & \shortstack{\\0.2581\\(0.028)}& \shortstack{\\0.7250\\(0.022)} & \shortstack{\\2.0236\\(0.011)} \\  \cline{3-9}
&& 100 & \shortstack{\\0.8477\\(0.016)} &\shortstack{\\0.9562\\(0.01)} & \shortstack{\\0.9892\\(0.005)} & \shortstack{\\0.6376\\(0.018)} & \shortstack{\\0.9403\\(0.013)} & \shortstack{\\1.9910\\(0.006)} \\
\hline
\end{tabular}
\caption{Summary of clustering and estimation performance from the static model with covariates over 50 simulation runs, with $(\beta_0^{kk},\beta_0^{kl})=(0.5,-0.5)$. }
\label{simulation2_tab1}
\end{table}

%\subsection{Simulation of the TV-Tweedie SBM}
\subsection{Simulation of time-varying model}
\label{subsec:simulationTVTWsbm}

We now study \myred{the most general, time-varying version of our model (Section~\ref{subsec-TVTWsbm}), having already established empirical evidence for the usefulness of the restricted Tweedie model in its vanilla form (Section~\ref{subsec:simulationVanillaTWsbm}) and the importance of taking covariates into account in a static setting (Section~\ref{subsec:simulationTWsbm}).} 

Instead of different combinations of $(\phi, \rho, n)$, these are now fixed at $\phi=1$, $\rho=1.5$, and $n=50$. 
%%BEGIN MZ rewrite: MZ rewrote below
%%The diagonal and off-diagonal entries, $(\beta_0^{kk},\beta_0^{kl})$, which characterizes the block matrix is generated in six ways: (i) $(1,0)$, (ii) $(0.5,-0.5)$, (iii) $(0,-1)$, (iiii) $(0.5,0)$, (v) $(0.25,-0.25)$, (vi) $(0,-0.5)$. 
%%to:
But we introduce three more scenarios for the true matrix $\beta_0$:
% \begin{equation*}
% \begin{array}{llcl}
% \text{scenario 4,} & (\beta_0^{kk},\beta_0^{kl})=(0.50,\phantom{-}0.00) & \Rightarrow & \exp(\beta_0^{kk})-\exp(\beta_0^{kl}) \approx 0.65;\\
% \text{scenario 5,} & (\beta_0^{kk},\beta_0^{kl})=(0.25,-0.25) & \Rightarrow & \exp(\beta_0^{kk})-\exp(\beta_0^{kl})\approx 0.51;\\
% \text{scenario 6,} & (\beta_0^{kk},\beta_0^{kl})=(0.00,-0.50) & \Rightarrow & \exp(\beta_0^{kk})-\exp(\beta_0^{kl})\approx 0.39.
% \end{array}
% \end{equation*}
\begin{align*}
\text{scenario 4, }  (\beta_0^{kk},\beta_0^{kl})=(0.50,\phantom{-}0.00)  \Rightarrow & \exp(\beta_0^{kk})-\exp(\beta_0^{kl}) \approx 0.65;\\
\text{scenario 5, }  (\beta_0^{kk},\beta_0^{kl})=(0.25,-0.25)  \Rightarrow & \exp(\beta_0^{kk})-\exp(\beta_0^{kl})\approx 0.51;\\
\text{scenario 6, }  (\beta_0^{kk},\beta_0^{kl})=(0.00,-0.50)  \Rightarrow & \exp(\beta_0^{kk})-\exp(\beta_0^{kl})\approx 0.39.
\end{align*}
\myred{These are similar to the earlier scenarios 1, 2 and 3, but respectively more difficult to cluster.} 
%%BEGIN MZ rewrite: MZ rewrote below

We generate one scalar covariate $x_{ij}$ in exactly the same way as we did in Section~\ref{subsec:simulationTWsbm}, except that its effect is now time-varying, with coefficient $\beta(t)$ generated in six different ways: (i) $\beta(t)=2t-1$, (ii) $\beta(t)=\sin (2\pi t)$, (iii) $\beta(t)=2t$, (iiii) $\beta(t)=\sin (2\pi t)+1$, (v) $\beta(t)=0.5(2t-1)$, and (vi) $\beta(t)= 0.5 \sin (2\pi t)$. Finally, the data sets are simulated in such a way that the network is observed at $T=20$ equally spaced time points on $[0, 1]$.

We use 10 different sets of initial values for each simulation run. To evaluate the performance of the estimated $\hat{\beta}(t)$, we calculate the estimation error as
\begin{align*}
    \text{Err}(\hat{\beta}(t))= \frac{1}{20} \scsum{\nu=1}{20} [\hat{\beta}(t_{\nu}) - \beta(t_{\nu})]^2.
\end{align*}

In general, the tuning parameter $\lambda$ is to be selected by cross-validation (see Section~\ref{subsec:tuning}). To reduce computational cost, we simply fix it at $\lambda=0.5$ for the current simulation study. Appendix~\ref{Appendix3:SensitivityTuning} contains a small sensitivity analysis using $\lambda=0.1 < 0.5$ and $\lambda=1.0 > 0.5$, from which one can see that it makes little difference whether $\lambda=0.1$, $0.5$ or $1.0$ is used in this study.

For all simulated cases with different combinations of $\beta_0$ and $\beta(t)$, Table~\ref{sbm-tab:sim-TVTweedie} summarizes the two metrics, NMI and $\text{Err}(\hat{\beta}(t))$, while Figure~\ref{fig:sim_TV_betat} displays the true function $\beta(t)$ together with the pointwise mean and standard deviation of $\hat{\beta}(t)$, over repeated simulation runs. The standard deviation is hard to visualize because it is very small at all $t$.

\myred{Theorem~\ref{theorem1_1} again explains why the varying-coefficient $\beta(t)$ can be estimated so well. Once $\beta(t)$ has been estimated, the community structure is actually easier to detect with time-varying data than it is with static data because, for each pair $(i,j)$, observations at all time points, $\{y_{ij}(t_\nu)\}_{\nu=1}^T$, contain this information, not just a single observation $y_{ij}$.}

\begin{table}[hp] %htp
\centering
\begin{tabular}{c|c|cccccc}
\hline
\multirow{2}{*}{$(\beta_0^{kk},\beta_0^{kl})$} &
&\multicolumn{6}{c}{$\beta(t)$} \\ 
  & &    $2t-1$     &   $\sin (2\pi t)$     &    $2t$   &    $\sin (2\pi t)$ +1       &   $0.5(2t-1)$      &   $ 0.5 \sin (2\pi t)$  \\ \hline \hline
Scenario 1 & NMI & \shortstack{\\1\\(0)} & \shortstack{\\1\\(0)} & \shortstack{\\1\\(0)} & \shortstack{\\1\\(0)} & \shortstack{\\0.996\\(0.004)} & \shortstack{\\1\\(0)} \\ \cline{2-8}
$(1,0)$ &$\text{Err}(\hat{\beta}(t))$ &\shortstack{\\0.004\\(0)}&\shortstack{\\0.026\\(0)} & \shortstack{\\0.004\\(0)} & \shortstack{\\0.025\\(0)} & \shortstack{\\0.004\\(0)} &  \shortstack{\\0.013\\(0)} \\ \hline \hline
Scenario 2 &NMI & \shortstack{\\1\\(0)} & \shortstack{\\1\\(0)} & \shortstack{\\1\\(0)} & \shortstack{\\1\\(0)} & \shortstack{\\1\\(0)} & \shortstack{\\1\\(0)} \\ \cline{2-8}
$(0.5,-0.5)$ &$\text{Err}(\hat{\beta}(t))$ & \shortstack{\\0.005\\(0)} &\shortstack{\\0.031\\(0)} &\shortstack{\\0.005\\(0)}&\shortstack{\\0.029\\(0)}&\shortstack{\\0.005\\(0)}&\shortstack{\\0.016\\(0)}\\ \hline \hline
Scenario 3&NMI & \shortstack{\\1\\(0)} & \shortstack{\\1\\(0)} & \shortstack{\\1\\(0)} & \shortstack{\\0.996\\(0.004)} & \shortstack{\\1\\(0)} & \shortstack{\\1\\(0)} \\ \cline{2-8}
$(0,-1)$&$\text{Err}(\hat{\beta}(t))$ &\shortstack{\\0.005\\(0)}&\shortstack{\\0.037\\(0)} &\shortstack{\\0.005\\(0)}&\shortstack{\\0.035\\(0)}&\shortstack{\\0.006\\(0)}& \shortstack{\\0.019\\(0)} \\ \hline \hline
Scenario 4 &NMI &\shortstack{\\0.996\\(0.004)}&\shortstack{\\1\\(0)}&\shortstack{\\1\\(0)}&\shortstack{\\1\\(0)}&\shortstack{\\1\\(0)}& \shortstack{\\1\\(0)} \\ \cline{2-8}
$(0.5,0)$ &$\text{Err}(\hat{\beta}(t))$ &\shortstack{\\0.004\\(0)}&\shortstack{\\0.029\\(0)}&\shortstack{\\0.005\\(0)}& \shortstack{\\0.027\\(0)} & \shortstack{\\0.005\\(0)} & \shortstack{\\0.015\\(0)} \\ \hline \hline
Scenario 5&NMI & \shortstack{\\1\\(0)} & \shortstack{\\1\\(0)} & \shortstack{\\1\\(0)} & \shortstack{\\1\\(0)} & \shortstack{\\1\\(0)} & \shortstack{\\1\\(0)} \\ \cline{2-8}
$(0.25,-0.25)$&$\text{Err}(\hat{\beta}(t))$ &\shortstack{\\0.005\\(0)}&\shortstack{\\0.031\\(0)}&\shortstack{\\0.005\\(0)} &\shortstack{\\0.03\\(0)}&\shortstack{\\0.005\\(0)}&\shortstack{\\0.016\\(0)}\\ \hline \hline
Scenario 6 &NMI & \shortstack{\\1\\(0)} &  \shortstack{\\0.996\\(0.004)} & \shortstack{\\1\\(0)} &  \shortstack{\\0.996\\(0.004)} & \shortstack{\\1\\(0)} &  \shortstack{\\0.996\\(0.004)} \\ \cline{2-8}
$(0,-0.5)$ & $\text{Err}(\hat{\beta}(t))$ & \shortstack{\\0.005\\(0)} &  \shortstack{\\0.034\\(0)} & \shortstack{\\0.005\\(0)} & \shortstack{\\0.033\\(0)} & \shortstack{\\0.005\\(0)} & \shortstack{\\0.017\\(0)} \\ \hline 
\end{tabular}
\caption{Summary of clustering and estimation performance (using $\lambda=0.5$) from the time-varying model over 50 simulation runs, with $\phi=1$, $\rho=1.5$ and $n=50$.}
\label{sbm-tab:sim-TVTweedie}
\end{table}

\begin{figure}[h]
    \centering
\includegraphics[height=0.32\paperheight]{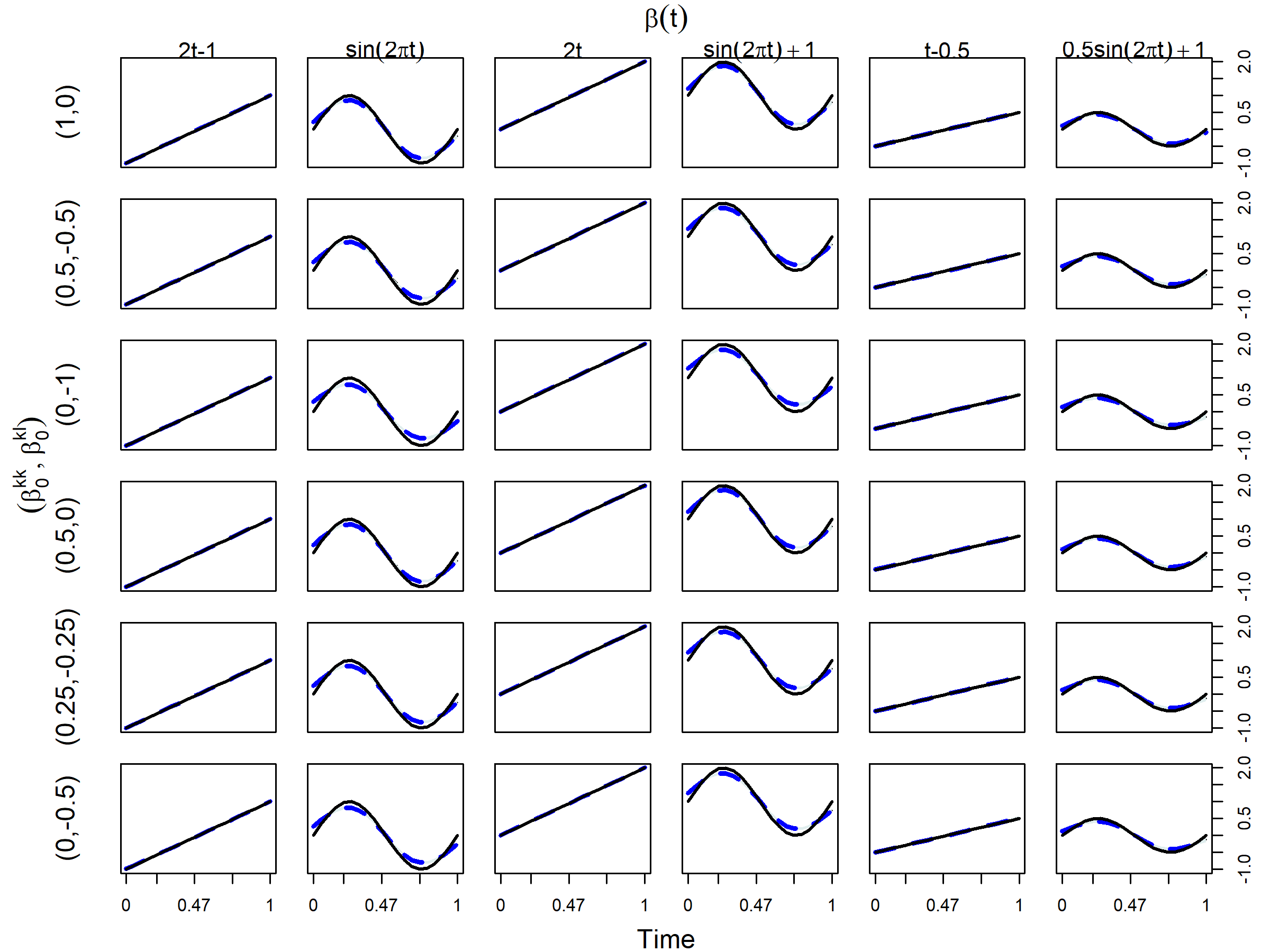}
    \caption{Estimations of $\beta(t)$ using a tuning parameter of $\lambda=0.5$ for all simulated cases with different combinations of $\beta_0$ and $\beta(t)$. In each panel, the black solid line is the true function $\beta(t)$; the blue dashed line is the pointwise mean of $\hat{\beta}(t)$; and the light blue shadow (hardly visible) marks the corresponding pointwise confidence band.}
    \label{fig:sim_TV_betat}
\end{figure}

\begin{comment}
\begin{table}[ht] %htp
\centering
\begin{tabular}{c|c|cccccc}
\hline
\multirow{2}{*}{$(\beta_0^{kk},\beta_0^{kl})$} & &\multicolumn{6}{c}{$\beta(t)$} \\ 
                &  &    $2t-1$     &   $\sin (2\pi t)$     &    $2t$   &    $\sin (2\pi t)$ +1       &   $0.5(2t-1)$      &   $ 0.5 \sin (2\pi t)$  \\ \hline
 \multirow{2}{*}{$1,0$} & NMI & & &  &  &  &  \\ \cline{2-8}
 & $\text{e}(\hat{\beta}(t))$ & & &  &  &  &  \\ \hline
\multirow{2}{*}{$0.5,-0.5$}& &  &  &  &  &  &   \\ \cline{2-8}
& &  &  &  &  &  &   \\ \hline
\multirow{2}{*}{$0,-1$} & & &  &  &   &  &   \\ \cline{2-8}
& &  &  &  &  &  &   \\ \hline
\hline  
\multirow{2}{*}{$0.5,0$}  & & & &  &  &  &  \\ \cline{2-8}
& &  &  &  &  &  &   \\ \hline
\multirow{2}{*}{$1/4,-1/4$} & &  &  &  &  &  &   \\ \cline{2-8}
& &  &  &  &  &  &   \\ \hline
\multirow{2}{*}{$0,-0.5$} & & & &  &  &  &  \\ \cline{2-8} 
& &  &  &  &  &  &   \\ \hline
\end{tabular}
\caption{\textbf{Summary of NMI and the estimation error of $\beta(t)$ in Time-varying Two-Steps Tweedie SBM.} The above statistics are calculated over 50 simulations, with $\phi=1$, $\rho=1.5$, sample size 50 and $\lambda=0.5$. }
\label{sbm-tab:sim-TVTweedie0}
\end{table}
\end{comment}

%\section{Application: Apple Trading Community and Distance Effects}

\section{Application: International Trading}
\label{sec:application}

%%\myred{[MZ has now edited this section]}

In this section, we apply the restricted Tweedie SBM to study international trading relationships among different countries and how these relationships are influenced by geographical distances. 
%%BEGIN MZ comment out
%%International agricultural trade is essential for ensuring food security, promoting economic growth, and identifying environmental impact. Studying the relationships of international agricultural trading between countries and clustering different countries into different groups based on their international agricultural trading intensity can provide valuable insights for market analysis, policy analysis, and risk analysis.
%%END MZ comment out
As an example, we focus on the trading of apples---not only are these data readily available from the World Bank \citep{wits2023},
%%BEGIN MZ comment out
%%one of the most widely consumed fruits worldwide, and international apple trading is essential for ensuring food security and promoting economic growth. By using our methods, we would be able to cluster the countries into different groups so that countries in the same group have similar apple trading patterns, implying their similar demand patterns or consumption preferences. This provides valuable information for market analysis. For example, countries in the same community can collaborate in policy making and risk management. 
%While the trading community is significant to apple trading, 
%%END MZ comment out
but one can also surmise {\it a priori} that geographical distances will likely have a substantial impact on the trading due to the heavyweight and perishable nature of this product.

From the international trading data sets provided by the World Bank \citep{wits2023}, we have collected annual import and export values of edible and fresh apples among $n=66$ countries from $t_1=2002$ to $t_{20}=2021$. In each given year $t_{\nu}$, we observe a 66-by-66 matrix $Y(t_{\nu})$ where each cell $y_{ij}(t_{\nu})$ represents the trading value from country $i$ to country $j$ in thousands of US dollars during that year. We then average $Y(t_{\nu})$ with its transpose to ensure symmetry. 
%Finally a very small proportion entries whose values are in $(0,1)$ are thresholded to $0$, and logarithm is taken for the rest entries. 
Finally, a small number of entries with values ranging from 0 to 1 (i.e., total trading values less than \$1,000) are thresholded to 0, and the remaining entries are logarithmically transformed. For the covariate $x_{ij}$, we use the shortest geographical distance between the two trading countries based on their borders, which we calculate using the R packages \texttt{maps} and \texttt{geosphere}.

We employ the cross-validation procedure outlined in Section~\ref{subsec:tuning} to choose the tuning parameter $\lambda$. Figure~\ref{fig:application_cv} displays the CV error, showing the optimal tuning parameter to be $\lambda^*=0.1$.

Table~\ref{tab:trading-community} shows how the $66$ countries are clustered into three communities by our method.
Figure~\ref{fig:application_country} displays the aggregated matrix, $Y(2002) + Y(2003) + \dots + Y(2021)$, with rows and columns having been permuted according to the inferred community labels. Clearly, countries in the first community trade intensively with each other and with countries in the third community. While both the second and third communities consist of countries that mainly trade with countries in the first community (rather than among themselves or between each other), the trading intensity with the first community is lot higher for the third community than it is for the second.

Figure~\ref{fig:application_betat} displays $\hat{\beta}(t)$, the estimated effect of geographical distances on apple trading over time. We can make three prominent observations. First, the function $\hat{\beta}(t)$ is negative over the entire time period being studied---%,which indicates that a larger distance between two regions will decrease the intensity of the apple trading between them. This is 
not surprising since longer distances can only increase the cost and time of transportation, and negatively impact fresh apple trading. Next, generally speaking the magnitude of $\hat{\beta}(t)$ is decreasing over the twenty-year period, implying that the negative effect of geographical distances is diminishing. This may be attributed to more efficient method and reduced cost of shipment overtime. Finally, two relatively big ``dips'' in $\hat{\beta}(t)$ are clearly visible---one after the financial crisis in 2008, and another after the onset of the Covid-19 pandemic in 2020.

\begin{figure}[ht]
    \centering
    \includegraphics[width=0.6\textwidth]{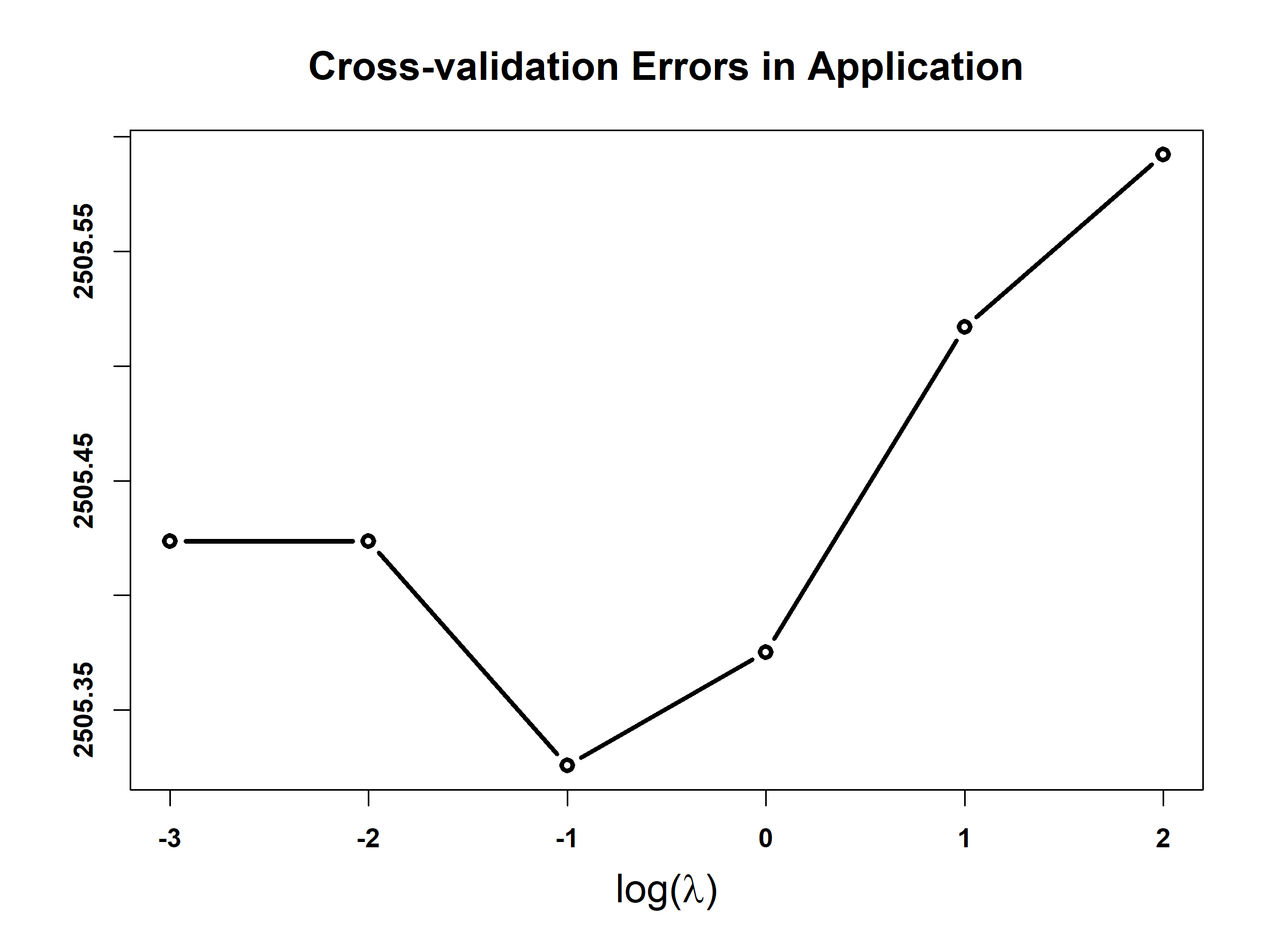}
    \caption{Cross validation errors change across a range
of plausible values for the tuning parameter $\lambda$.}
    \label{fig:application_cv}
\end{figure}

\begin{figure}[ht]
    \centering
    \includegraphics[width=1\textwidth]{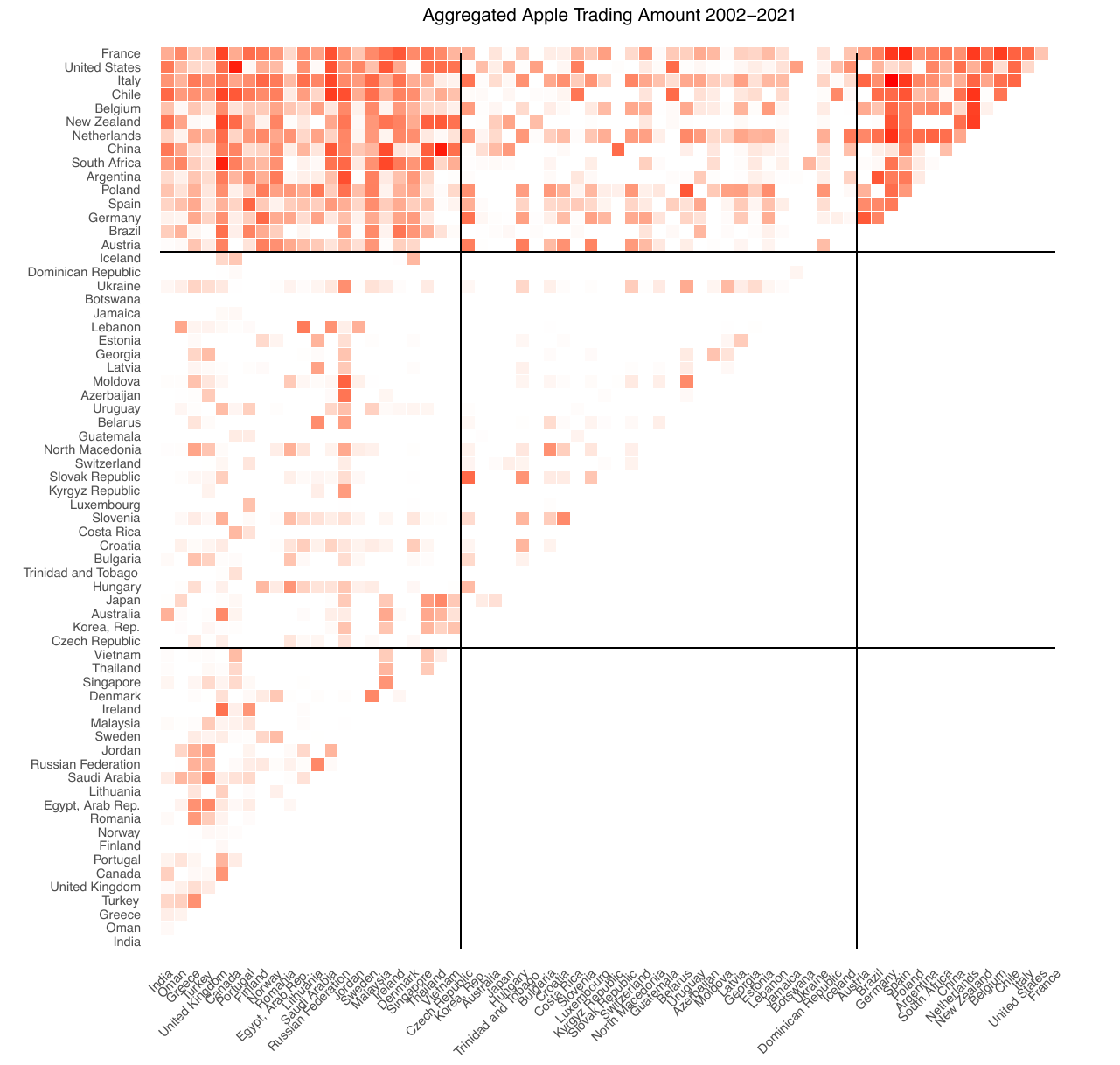}
    \caption{The (aggregated) matrix, $Y(2002)+\dots+Y(2021)$, with rows and columns having been permuted according to the inferred community labels. Due to symmetry, only the upper half of the matrix is shown, with color shadings being proportional to each entry's respective magnitude.}
    \label{fig:application_country}
\end{figure}

\begin{figure}[ht]
    \centering
    \includegraphics[width=0.8\textwidth,angle=270]{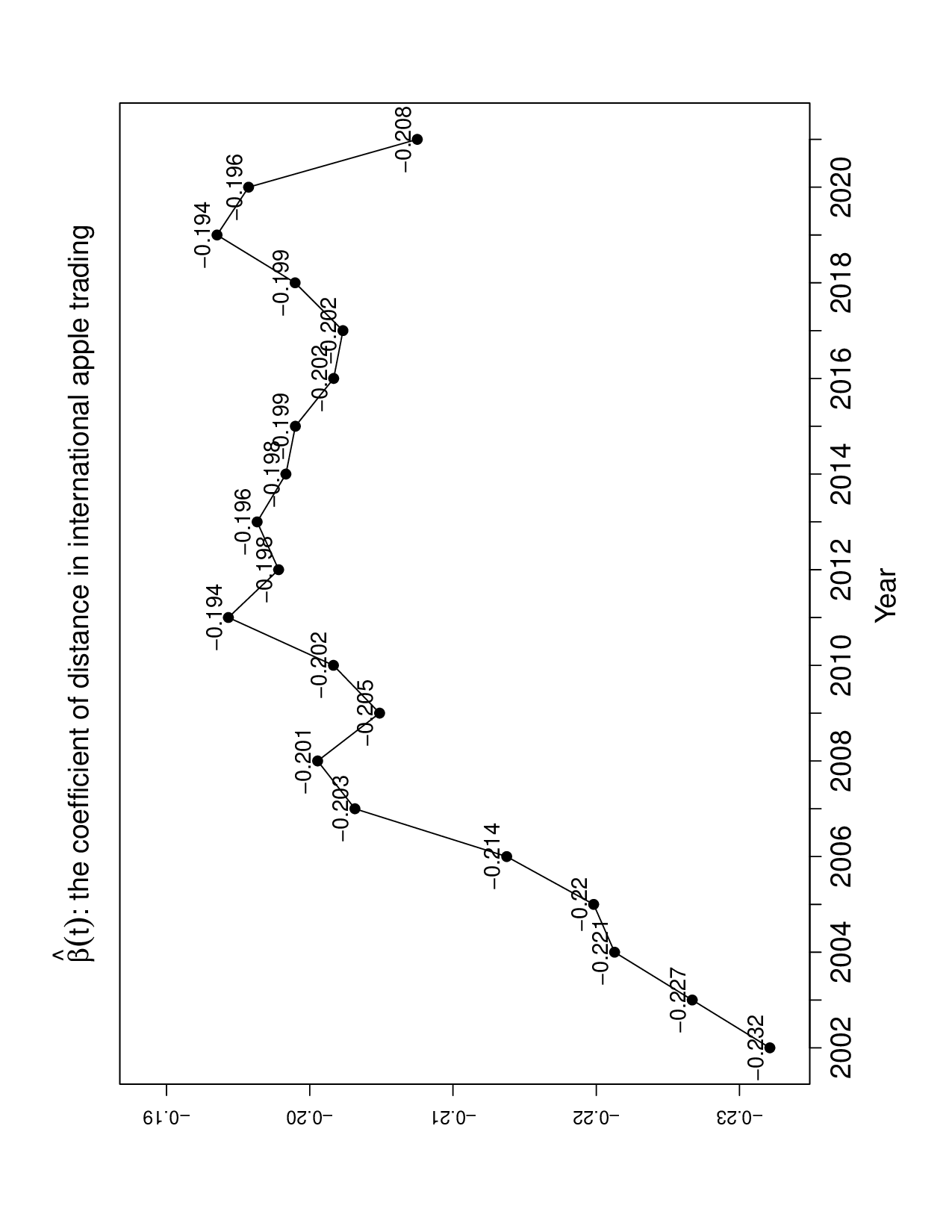}
    \caption{Estimated covariate coefficient $\hat{\beta}(t)$ for $\lambda^* = 0.1$.}
    \label{fig:application_betat}
\end{figure}

\begin{table}[ht] %htp
\centering
\begin{tabular}{p{2cm}|p{9cm} } 
  \hline
Community & \multicolumn{1}{c}{Country} \\ [0.5ex] 
\hline
 1 & France, United States, Italy, Chile, Belgium, New Zealand, Netherlands, China, South Africa, Argentina, Poland, Spain, Germany, Brazil, Austria\\ 
 \hline 
 2 & Iceland, Dominican Republic, Ukraine, Botswana, Jamaica, Lebanon, Estonia, Georgia, Latvia, Moldova, Azerbaijan, Uruguay, Belarus, Guatemala, North, Macedonia, Switzerland, Slovak, Republic, Kyrgyz Republic, Luxembourg, Slovenia, Costa Rica, Croatia, Bulgaria, Trinidad and Tobago, Hungary, Japan, Australia, Korea Rep, Czech Republic\\ 
  \hline 
  3 & Vietnam, Thailand, Singapore, Denmark, Ireland, Malaysia, Sweden, Jordan, Russian, Federation, Saudi, Arabia, Lithuania, Egypt Arab Rep., Romania, Norway, Finland, Portugal, Canada, United Kingdom, Turkey, Greece, Oman, India
  \\ 
  \hline 
\end{tabular}
\caption{Community detection results of 66 countries.}
\label{tab:trading-community}
\end{table}

\section{Discussion}

This paper generalizes the vanilla SBM by replacing the Bernoulli distribution with the restricted Tweedie distribution to accommodate non-negative zero-inflated continuous edge weights. Moreover, our model accounts for dynamic effects of nodal information. We show that as the number of nodes diverges to infinity, estimating the covariates coefficients is asymptotically irrelevant to the community labels when we maximize the likelihood function. This startling finding leads to the efficient two-step algorithm. Applying our framework  to the international apple trading data provides insight into the dynamic effect of the geographic distance between countries in the trading network. 
%Our analysis of the international apple trading data set also leads to some interesting discoveries, as the trading participants are clustered into different communities and the dynamic effect of the distance on the trading is also estimated. 

Moreover, simulation studies in Section \ref{sec:simulation} demonstrates the appealing performance of the proposed framework in clustering. 
This can be attributed to time independent community labels for each node, as the temporal data provide sufficient information for inferring the community labels. However, in many real world dynamic networks, the community label of each node is time dependent; it renders our current framework inapplicable. 
\cite{xu2014dynamic} and \cite{Matias2017} proposed to use a Markov chain to address this problem, but there exist idenfitiability issues for parameters to be resolved in future work. 

\bibliographystyle{plainnat}
\bibliography{sbm}

\begin{thebibliography}{41}
\providecommand{\natexlab}[1]{#1}
\providecommand{\url}[1]{\texttt{#1}}
\expandafter\ifx\csname urlstyle\endcsname\relax
  \providecommand{\doi}[1]{doi: #1}\else
  \providecommand{\doi}{doi: \begingroup \urlstyle{rm}\Url}\fi

\bibitem[Aicher et~al.(2013)Aicher, Jacobs, and Clauset]{aicher2013adapting}
Christopher Aicher, Abigail~Z Jacobs, and Aaron Clauset.
\newblock {Adapting the stochastic block model to edge-weighted networks}.
\newblock In \emph{ICML Workshop on Structured Learning (SLG)}, 2013.

\bibitem[Aicher et~al.(2015)Aicher, Jacobs, and Clauset]{aicher2015learning}
Christopher Aicher, Abigail~Z Jacobs, and Aaron Clauset.
\newblock Learning latent block structure in weighted networks.
\newblock \emph{Journal of Complex Networks}, 3\penalty0 (2):\penalty0
  221--248, 2015.

\bibitem[Bakhthemmat and Izadi(2021)]{bakhthemmat2021communities}
Ali Bakhthemmat and Mohammad Izadi.
\newblock Communities detection for advertising by futuristic greedy method
  with clustering approach.
\newblock \emph{Big Data}, 9\penalty0 (1):\penalty0 22--40, 2021.

\bibitem[Bedi and Sharma(2016)]{bedi2016community}
Punam Bedi and Chhavi Sharma.
\newblock Community detection in social networks.
\newblock \emph{Wiley Interdisciplinary Reviews: Data Mining and Knowledge
  Discovery}, 6\penalty0 (3):\penalty0 115--135, 2016.

\bibitem[Bhattacharjee et~al.(2020)Bhattacharjee, Banerjee, and
  Michailidis]{Bhattacharjee2020}
Monika Bhattacharjee, Moulinath Banerjee, and George Michailidis.
\newblock Change point estimation in a dynamic stochastic block model.
\newblock \emph{Journal of Machine Learning Research}, 21\penalty0
  (107):\penalty0 1--59, 2020.

\bibitem[Chen and Mo(2022)]{chen2022community}
Yan Chen and Dongxu Mo.
\newblock Community detection for multilayer weighted networks.
\newblock \emph{Information Sciences}, 595:\penalty0 119--141, 2022.

\bibitem[Choi et~al.(2012)Choi, Wolfe, and Airoldi]{choi2012stochastic}
David~S Choi, Patrick~J Wolfe, and Edoardo~M Airoldi.
\newblock Stochastic blockmodels with a growing number of classes.
\newblock \emph{Biometrika}, 99\penalty0 (2):\penalty0 273--284, 2012.

\bibitem[Danon et~al.(2005)Danon, Diaz-Guilera, Duch, and
  Arenas]{danon2005comparing}
Leon Danon, Albert Diaz-Guilera, Jordi Duch, and Alex Arenas.
\newblock Comparing community structure identification.
\newblock \emph{Journal of Statistical Mechanics: Theory and Experiment},
  2005\penalty0 (09):\penalty0 P09008, 2005.

\bibitem[Dunn and Smyth(2005)]{dunn2005series}
Peter~K Dunn and Gordon~K Smyth.
\newblock Series evaluation of {Tweedie} exponential dispersion model
  densities.
\newblock \emph{Statistics and Computing}, 15\penalty0 (4):\penalty0 267--280,
  2005.

\bibitem[Dunn and Smyth(2008)]{dunn2008evaluation}
Peter~K Dunn and Gordon~K Smyth.
\newblock Evaluation of tweedie exponential dispersion model densities by
  {Fourier} inversion.
\newblock \emph{Statistics and Computing}, 18\penalty0 (1):\penalty0 73--86,
  2008.

\bibitem[Fu et~al.(2009)Fu, Song, and Xing]{fu2009dynamic}
Wenjie Fu, Le~Song, and Eric~P Xing.
\newblock Dynamic mixed membership blockmodel for evolving networks.
\newblock In \emph{Proceedings of the 26th Annual International Conference on
  Machine Learning}, pages 329--336, 2009.

\bibitem[Gasparetti et~al.(2021)Gasparetti, Sansonetti, and
  Micarelli]{gasparetti2021community}
Fabio Gasparetti, Giuseppe Sansonetti, and Alessandro Micarelli.
\newblock Community detection in social recommender systems: a survey.
\newblock \emph{Applied Intelligence}, 51\penalty0 (6):\penalty0 3975--3995,
  2021.

\bibitem[Green and Silverman(1993)]{green1993nonparametric}
Peter~J Green and Bernard~W Silverman.
\newblock \emph{Nonparametric Regression and Generalized Linear Models: a
  Roughness Penalty Approach}.
\newblock Chapman and Hall, New York, 1993.

\bibitem[Guerrero-Sol{\'e}(2017)]{guerrero2017community}
Frederic Guerrero-Sol{\'e}.
\newblock Community detection in political discussions on twitter: An
  application of the retweet overlap network method to the {Catalan} process
  toward independence.
\newblock \emph{Social Science Computer Review}, 35\penalty0 (2):\penalty0
  244--261, 2017.

\bibitem[Haj et~al.(2022)Haj, Slaoui, Louis, and Khraibani]{haj2022estimation}
Abir~El Haj, Yousri Slaoui, Pierre-Yves Louis, and Zaher Khraibani.
\newblock Estimation in a binomial stochastic blockmodel for a weighted graph
  by a variational expectation maximization algorithm.
\newblock \emph{Communications in Statistics-Simulation and Computation},
  51\penalty0 (8):\penalty0 4450--4469, 2022.

\bibitem[Hastie et~al.(2009)Hastie, Tibshirani, Friedman, and
  Friedman]{hastie2009elements}
Trevor Hastie, Robert Tibshirani, Jerome~H Friedman, and Jerome~H Friedman.
\newblock \emph{The Elements of Statistical Learning: Data Mining, Inference,
  and Prediction}.
\newblock Springer, New York, 2 edition, 2009.

\bibitem[Hoff et~al.(2002)Hoff, Raftery, and Handcock]{hoff2002latent}
Peter~D Hoff, Adrian~E Raftery, and Mark~S Handcock.
\newblock Latent space approaches to social network analysis.
\newblock \emph{Journal of the American Statistical Association}, 97\penalty0
  (460):\penalty0 1090--1098, 2002.

\bibitem[Holland et~al.(1983)Holland, Laskey, and
  Leinhardt]{holland1983stochastic}
Paul~W Holland, Kathryn~Blackmond Laskey, and Samuel Leinhardt.
\newblock Stochastic blockmodels: First steps.
\newblock \emph{Social Networks}, 5\penalty0 (2):\penalty0 109--137, 1983.

\bibitem[Huang et~al.(2023)Huang, Sun, and Feng]{huang2023pairwise}
Sihan Huang, Jiajin Sun, and Yang Feng.
\newblock {PCABM}: Pairwise covariates-adjusted block model for community
  detection.
\newblock \emph{Journal of the American Statistical Association}, 0\penalty0
  (0):\penalty0 1--26, 2023.

\bibitem[Lee and Wilkinson(2019)]{lee2019review}
Clement Lee and Darren~J Wilkinson.
\newblock A review of stochastic block models and extensions for graph
  clustering.
\newblock \emph{Applied Network Science}, 4\penalty0 (1):\penalty0 1--50, 2019.

\bibitem[Lian et~al.(2023)Lian, Yang, Wang, Shi, and Platt]{lian2023tweedie}
Yi~Lian, Archer~Yi Yang, Boxiang Wang, Peng Shi, and Robert~William Platt.
\newblock A {Tweedie} compound {Poisson} model in reproducing kernel {Hilbert}
  space.
\newblock \emph{Technometrics}, 65\penalty0 (2):\penalty0 281--295, 2023.

\bibitem[Lu et~al.(2018)Lu, Wahlstr{\"o}m, and Nehorai]{lu2018community}
Zhenqi Lu, Johan Wahlstr{\"o}m, and Arye Nehorai.
\newblock Community detection in complex networks via clique conductance.
\newblock \emph{Scientific Reports}, 8\penalty0 (1):\penalty0 1--16, 2018.

\bibitem[Ludkin(2020)]{ludkin2020inference}
Matthew Ludkin.
\newblock Inference for a generalised stochastic block model with unknown
  number of blocks and non-conjugate edge models.
\newblock \emph{Computational Statistics \& Data Analysis}, 152:\penalty0
  107051, 2020.

\bibitem[Ma et~al.(2020)Ma, Ma, and Yuan]{ma2017exploration}
Zhuang Ma, Zongming Ma, and Hongsong Yuan.
\newblock Universal latent space model fitting for large networks with edge
  covariates.
\newblock \emph{Journal of Machine Learning Research}, 21\penalty0
  (4):\penalty0 1--67, 2020.

\bibitem[MacDonald et~al.(2022)MacDonald, Levina, and Zhu]{macdonald2022latent}
Peter~W MacDonald, Elizaveta Levina, and Ji~Zhu.
\newblock Latent space models for multiplex networks with shared structure.
\newblock \emph{Biometrika}, 109\penalty0 (3):\penalty0 683--706, 2022.

\bibitem[Mariadassou et~al.(2010)Mariadassou, Robin, and
  Vacher]{mariadassou2010uncovering}
Mahendra Mariadassou, St{\'e}phane Robin, and Corinne Vacher.
\newblock Uncovering latent structure in valued graphs: a variational approach.
\newblock \emph{The Annals of Applied Statistics}, 4\penalty0 (2):\penalty0
  715--742, 2010.

\bibitem[Matias and Miele(2017)]{Matias2017}
Catherine Matias and Vincent Miele.
\newblock Statistical clustering of temporal networks through a dynamic
  stochastic block model.
\newblock \emph{Journal of the Royal Statistical Society. Series B.},
  79\penalty0 (4):\penalty0 1119--1141, 2017.
\newblock ISSN 1369-7412.

\bibitem[Motalebi et~al.(2021)Motalebi, Stevens, and
  Steiner]{NathanielStevens2021}
Narges Motalebi, Nathaniel~T. Stevens, and Stefan~H. Steiner.
\newblock Hurdle blockmodels for sparse network modeling.
\newblock \emph{The American Statistician}, 75\penalty0 (4):\penalty0 383--393,
  2021.

\bibitem[Ng and Murphy(2021)]{ng2021weighted}
Tin Lok~James Ng and Thomas~Brendan Murphy.
\newblock Weighted stochastic block model.
\newblock \emph{Statistical Methods \& Applications}, 30:\penalty0 1365--1398,
  2021.

\bibitem[Peixoto(2018)]{peixoto2018nonparametric}
Tiago~P Peixoto.
\newblock Nonparametric weighted stochastic block models.
\newblock \emph{Physical Review E}, 97\penalty0 (1):\penalty0 012306, 2018.

\bibitem[Roy et~al.(2019)Roy, Atchad{\'e}, and Michailidis]{roy2019likelihood}
Sandipan Roy, Yves Atchad{\'e}, and George Michailidis.
\newblock Likelihood inference for large scale stochastic blockmodels with
  covariates based on a divide-and-conquer parallelizable algorithm with
  communication.
\newblock \emph{Journal of Computational and Graphical Statistics}, 28\penalty0
  (3):\penalty0 609--619, 2019.

\bibitem[Silverman(1985)]{silverman1985some}
Bernhard~W Silverman.
\newblock Some aspects of the spline smoothing approach to non-parametric
  regression curve fitting.
\newblock \emph{Journal of the Royal Statistical Society: Series B
  (Methodological)}, 47\penalty0 (1):\penalty0 1--21, 1985.

\bibitem[Tallberg(2004)]{tallberg2004bayesian}
Christian Tallberg.
\newblock A {Bayesian} approach to modeling stochastic blockstructures with
  covariates.
\newblock \emph{Journal of Mathematical Sociology}, 29\penalty0 (1):\penalty0
  1--23, 2004.

\bibitem[Tweedie(1984)]{tweedie1984index}
Maurice~CK Tweedie.
\newblock An index which distinguishes between some important exponential
  families.
\newblock In \emph{Statistics: Applications and New Directions: Proc. Indian
  Statistical Institute Golden Jubilee International Conference}, volume 579,
  pages 579--604, 1984.

\bibitem[Vu et~al.(2013)Vu, Hunter, and Schweinberger]{vu2013model}
Duy~Q Vu, David~R Hunter, and Michael Schweinberger.
\newblock Model-based clustering of large networks.
\newblock \emph{The Annals of Applied Statistics}, 7\penalty0 (2):\penalty0
  1010, 2013.

\bibitem[{World Integrated Trade Solution}(2023)]{wits2023}
{World Integrated Trade Solution}.
\newblock {International merchandise trade, tariff and non-tariff measures
  (NTM) data}.
\newblock \url{https://wits.worldbank.org/}, 2023.

\bibitem[Xin et~al.(2017)Xin, Zhu, and Chipman]{xin2017continuous}
Lu~Xin, Mu~Zhu, and Hugh Chipman.
\newblock A continuous-time stochastic block model for basketball networks.
\newblock \emph{The Annals of Applied Statistics}, 11\penalty0 (2):\penalty0
  553--597, 2017.

\bibitem[Xing et~al.(2010)Xing, Fu, and Song]{xing2010state}
Eric~P Xing, Wenjie Fu, and Le~Song.
\newblock A state-space mixed membership blockmodel for dynamic network
  tomography.
\newblock \emph{The Annals of Applied Statistics}, 4\penalty0 (2):\penalty0
  535--566, 2010.

\bibitem[Xu and Hero(2014)]{xu2014dynamic}
Kevin~S Xu and Alfred~O Hero.
\newblock Dynamic stochastic blockmodels for time-evolving social networks.
\newblock \emph{IEEE Journal of Selected Topics in Signal Processing},
  8\penalty0 (4):\penalty0 552--562, 2014.

\bibitem[Yang et~al.(2011)Yang, Chi, Zhu, Gong, and Jin]{yang2011detecting}
Tianbao Yang, Yun Chi, Shenghuo Zhu, Yihong Gong, and Rong Jin.
\newblock Detecting communities and their evolutions in dynamic social
  networks—a bayesian approach.
\newblock \emph{Machine Learning}, 82\penalty0 (2):\penalty0 157--189, 2011.

\bibitem[Zhang et~al.(2020)Zhang, Sun, and Li]{zhang2020TVN}
Jingfei Zhang, Will~Wei Sun, and Lexin Li.
\newblock Mixed-effect time-varying network model and application in brain
  connectivity analysis.
\newblock \emph{Journal of the American Statistical Association}, 115\penalty0
  (532):\penalty0 2022--2036, 2020.
\newblock ISSN 0162-1459.

\end{thebibliography}

\newpage

\appendix
\label{appendix}

%\subsection{Profile Log-likelihood w.r.t. Covariates Coefficients}
\section{MLE of $\hat{\beta}_0$}
\label{Appendix0}

In this section, we provide the detailed derivation of $\hat{\beta}_0(\bm{\beta}(t))$ as defined in~\eqref{eq:profileloglik}.
Subsequently, we substitute this resulting maximum likelihood estimate of $\hat{\beta}_0$ back into~\eqref{eq:profileloglik}, demonstrating how the equation presented in the first line of~\eqref{eq:theorem} is established.

%In this section, we provide the details of the calculation of \eqref{profilelog}, the profile log-likelihood with respect to $\beta(t)$. First, we take logarithm of~\eqref{TVlikelihood} to obtain its log-likelihood 

%The log-likelihood based on \eqref{TVlikelihood}  and focusing exclusively on the terms involving $\beta_0$ is given by

\begin{comment}
\begin{align}
 & l (\beta_0,\bm{\beta}(t),\pi,\phi,\rho;D,z) =
\scsum{\nu =1}{T} \scsum{1\leq i < j\leq n}{} \scsum{k,l=1}{K} \ind \times \\ \nonumber
& \biggl\{  \dfrac{1}{\phi} \left( \dfrac{y_{ij}(t_{\nu}) \exp[(1-\rho)\{\beta_0^{kl}+ \bm{x}_{ij}^\top \bm{\beta}(t_{\nu})\} ]}{1-\rho} - \dfrac{\exp[(2-\rho)\{\beta_0^{kl} + \bm{x}_{ij}^\top \bm{\beta}(t_{\nu})\} ]}{2-\rho}\right) \biggr\}. \label{loglikelihood}
\end{align}
\end{comment}

The derivative of~\eqref{eq:profileloglik} with respect to $\beta_0^{kl}$ is given by 
\begin{multline}
    \dfrac{\partial \ell_n(\beta_0,\bm{\beta}(t), \phi_0, \rho_0; D, z) }{\partial \beta_0^{kl}}=\frac{1}{\binom{n}{2}}
\scsum{\nu =1}{T} \scsum{1\leq i < j\leq n}{} \dfrac{\ind}{\phi_0} \times\\
 \Bigl\{ y_{ij}(t_{\nu}) \cdot \exp [(1-\rho_0)\{\beta_0^{kl} + \bm{x}_{ij}^\top \bm{\beta}(t_{\nu})\}]
     -\exp[(2-\rho_0)\{\beta_0^{kl} + \bm{x}_{ij}^\top \bm{\beta}(t_{\nu})\}]  \Bigl\}. \label{appendeq:derivativebeta0}
\end{multline}
Thus the second-order derivative is 
\begin{multline*}
  \dfrac{\partial^2 \ell_n(\beta_0,\bm{\beta}(t), \phi_0, \rho_0; D, z) }{\partial [\beta_0^{kl}]^2}= \\
  \frac{1}{\binom{n}{2}}
\scsum{\nu =1}{T} \scsum{1\leq i < j\leq n}{} \dfrac{\ind}{\phi_0} \times  
    \Bigl\{ (1-\rho_0)\cdot y_{ij}(t_{\nu}) \cdot \exp [(1-\rho_0)\{\beta_0^{kl} + \bm{x}_{ij}^\top \bm{\beta}(t_{\nu})\}]- \\(2-\rho_0) \cdot \exp[(2-\rho_0)\{\beta_0^{kl} + \bm{x}_{ij}^\top \bm{\beta}(t_{\nu})\}]  \Bigl\} <0.
\end{multline*}
Therefore, the MLE of ${\beta}_0^{kl}$ is given by the zero of~\eqref{appendeq:derivativebeta0} as 
\begin{align*}
 \hat{\beta}_0^{kl}(\bm{\beta}(t))&=\log \dfrac{\scsum{\nu =1}{T} \scsum{1\leq i <j \leq n}{} y_{ij}(t_{\nu}) \exp[(1-\rho)\bm{x}_{ij}^\top \bm{\beta}(t_{\nu}) ]\ind }{\scsum{\nu =1}{T} \scsum{1\leq i <j \leq n}{} \exp[(2-\rho)\bm{x}_{ij}^\top \bm{\beta}(t_{\nu}) ] \ind }\\
 &=\log \dfrac{ \hat{\theta}_{kl}}{\hat{\gamma}_{kl}}.
\end{align*}
Plugging $\hat{\beta}_0^{kl}(\bm{\beta}(t))=\log \hat{\theta}_{kl}/\hat{\gamma}_{kl}$ into~\eqref{eq:profileloglik}, we obtain the first line of the equation presented in~\eqref{eq:theorem}:
\begin{align*}
& \ell_n(\bm{\beta}(t), \phi_0, \rho_0; D, z) =\frac{1}{\binom{n}{2}} \scsum{\nu =1}{T} \scsum{1\leq i < j\leq n}{} \scsum{k,l=1}{K}\dfrac{\ind}{\phi_0} \times\\
&\Bigl[ \dfrac{y_{ij}(t_{\nu}) \exp[(1-\rho_0)\{\log \hat{\theta}_{kl}/\hat{\gamma}_{kl}+ \bm{x}_{ij}^\top \bm{\beta}(t_{\nu})\} ]}{1-\rho_0} -  \dfrac{\exp[(2-\rho_0)\{\log \hat{\theta}_{kl}/\hat{\gamma}_{kl} + \bm{x}_{ij}^\top \bm{\beta}(t_{\nu})\} ]}{2-\rho_0} \Bigr]\\
 =&\frac{1}{\binom{n}{2}} \scsum{k,l=1}{K} \frac{1}{1-\rho_0}(\frac{\hat{\theta}_{kl}}{\hat{\gamma}_{kl}})^{1-\rho_0} \scsum{\nu =1}{T} \scsum{1\leq i < j\leq n}{}  \dfrac{\ind}{\phi_0} \cdot
y_{ij}(t_{\nu}) \exp[(1-\rho_0)\{ \bm{x}_{ij}^\top \bm{\beta}(t_{\nu})\} ] - \nonumber \\ 
 & \frac{1}{\binom{n}{2}} \scsum{k,l=1}{K} \frac{1}{2-\rho_0}(\frac{\hat{\theta}_{kl}}{\hat{\gamma}_{kl}})^{2-\rho_0} \scsum{\nu =1}{T} \scsum{1\leq i < j\leq n}{} \dfrac{\ind}{\phi_0} \cdot \exp[(2-\rho_0)\{\bm{x}_{ij}^\top \bm{\beta}(t_{\nu})\} ] \\
  =& \dfrac{1}{\phi_0}\scsum{k,l=1}{K} \frac{1}{1-\rho_0}(\frac{\hat{\theta}_{kl}}{\hat{\gamma}_{kl}})^{1-\rho_0} \cdot \hat{\theta}_{kl} -  \dfrac{1}{\phi_0}\scsum{k,l=1}{K} \frac{1}{2-\rho_0}(\frac{\hat{\theta}_{kl}}{\hat{\gamma}_{kl}})^{2-\rho_0} \cdot \hat{\gamma}_{kl} \\
 %
%=&\dfrac{1}{\phi} \dfrac{1}{(1-\rho)(2-\rho)} \Bigl[ \scsum{\nu =1}{T} \scsum{1\leq i <j \leq n}{} z_{ik} z_{jl} \cdot y_{ij}(t_{\nu}) \exp\{(1-\rho)\bm{x}_{ij}^{\top} \bm{\beta}(t_{\nu}) \} \Bigr]^{2-\rho} \cdot \\
%& \Bigl[ \scsum{\nu =1}{T} \scsum{1\leq i <j \leq n}{} z_{ik} z_{jl}\cdot \exp\{(2-\rho)\bm{x}_{ij}^{\top} \bm{\beta}(t_{\nu}) \} \Bigr]^{\rho-1} \\
= &   \dfrac{1}{\phi_0} \dfrac{1}{(1-\rho_0)(2-\rho_0)} \scsum{k,l=1}{K}  \hat{\theta}_{kl}^{2-\rho_0}\cdot \hat{\gamma}_{kl}^{\rho_0-1}.
\end{align*}

\section{Proof of Theorem~\ref{theorem1_1}}
\label{Appendix1}
In this section, we prove Theorem~\ref{theorem1_1}. Before laying out the main proof, we introduce several lemmas first. 

\begin{Lemma}\label{coro1_1}
Under Conditions \ref{cond1} to \ref{cond2}, 
\begin{align*}
\frac{\hat{\gamma}_{kl}}{\scsum{k,l}{} \hat{\gamma}_{kl}}  = p_k p_l + o_p(1).
\end{align*}
\end{Lemma}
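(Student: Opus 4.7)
The plan is to exploit the independence of the label vector $(z_i)_{i=1}^{n}$ from the covariate collection $(\bm{x}_{ij})_{i<j}$ and decouple $\hat{\gamma}_{kl}$ into an empirical average of exponential weights (which will converge to $\gamma$) times an empirical block proportion (which will converge to $p_k p_l$). Since $\sum_{k,l}\hat{\gamma}_{kl}$ will also converge to $\gamma$, the common $\gamma$ factor will cancel in the ratio and produce $p_k p_l$.

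First I would write $\hat{\gamma}_{kl} = \binom{n}{2}^{-1}\sum_{i<j} A_{ij}\,\mathds{1}(z_i=k, z_j=l)$ with $A_{ij} := \sum_{\nu=1}^{T} \exp\{(2-\rho_0)\bm{x}_{ij}^{\top}\bm{\beta}(t_\nu)\}$. By Condition~\ref{cond1} the $A_{ij}$ are i.i.d.\ across distinct pairs and independent of the labels (which are themselves independent multinomial draws per the setup of Theorem~\ref{theorem1_1}). Condition~\ref{cond2} forces $\bm{\beta}(t)$ to be bounded on $[0,1]$, so applying Condition~\ref{cond1} with $\bm{u}=(2-\rho_0)\bm{\beta}(t_\nu)/\|(2-\rho_0)\bm{\beta}(t_\nu)\|_2$ and then changing variables $\delta\mapsto \delta^{1/s}$ with $s = \|(2-\rho_0)\bm{\beta}(t_\nu)\|_2$ yields a tail bound of the form $\P\bigl(\exp\{(2-\rho_0)\bm{x}_{ij}^{\top}\bm{\beta}(t_\nu)\} \ge \delta\bigr) \le 2\exp(-\delta^{2/s}/\alpha)$, which decays faster than any polynomial and guarantees that all moments of $A_{ij}$ are finite. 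In particular, $\E[A_{ij}] = \gamma < \infty$ and $\mathrm{Var}(A_{ij}) < \infty$.

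Next I would condition on the labels. Writing $n_{kl} := \sum_{i<j}\mathds{1}(z_i=k,z_j=l)$, the independence gives
\[ \E[\hat{\gamma}_{kl} \mid z] \;=\; \gamma \cdot \frac{n_{kl}}{\binom{n}{2}}, \qquad \mathrm{Var}(\hat{\gamma}_{kl} \mid z) \;=\; \mathrm{Var}(A_{12}) \cdot \frac{n_{kl}}{\binom{n}{2}^{2}} \;=\; O(n^{-2}), \]
with off-diagonal covariances vanishing because $A_{ij} \perp A_{i'j'}$ whenever $\{i,j\}\neq\{i',j'\}$. Chebyshev then delivers $\hat{\gamma}_{kl} = \gamma\,n_{kl}/\binom{n}{2} + o_p(1)$. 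The quantity $n_{kl}/\binom{n}{2}$ is a U-statistic of the i.i.d.\ labels with bounded kernel and mean $p_k p_l$, so the standard U-statistic law of large numbers gives $n_{kl}/\binom{n}{2} \xrightarrow{p} p_k p_l$, and hence $\hat{\gamma}_{kl} \xrightarrow{p} p_k p_l\,\gamma$. Specializing the same derivation with $\mathds{1}(z_i=k,z_j=l)$ replaced by $\sum_{k,l}\mathds{1}(z_i=k,z_j=l)\equiv 1$ collapses to an ordinary i.i.d.\ LLN for $\binom{n}{2}^{-1}\sum_{i<j}A_{ij}$ and yields $\sum_{k,l}\hat{\gamma}_{kl} \xrightarrow{p} \gamma > 0$. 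Slutsky's theorem then produces $\hat{\gamma}_{kl}/\sum_{k,l}\hat{\gamma}_{kl} = p_k p_l + o_p(1)$.

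The main obstacle I anticipate is the slightly unusual form of Condition~\ref{cond1}: it only controls the upper tail of $\exp\{\bm{x}^{\top}\bm{u}\}$ and only for unit vectors $\bm{u}$, so the finite-moment claim on $A_{ij}$ for the actual (non-unit) coefficient vector $(2-\rho_0)\bm{\beta}(t_\nu)$ requires the rescaling and tail-integration step above. Once that moment bound is established, the remaining argument is a routine conditioning-plus-LLN calculation together with an application of Slutsky.
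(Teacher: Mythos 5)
Your proof is correct and follows essentially the same route as the paper's: a law of large numbers for the (rescaled) numerator, with limit $\gamma\, p_k p_l$, and for the denominator, with limit $\gamma>0$, followed by taking the ratio. If anything, your version is more careful than the paper's one-line argument---you actually verify the moment bound implied by Condition~\ref{cond1} via the rescaling $\bm{u}=(2-\rho_0)\bm{\beta}(t_\nu)/\|(2-\rho_0)\bm{\beta}(t_\nu)\|_2$, and you correctly treat $n_{kl}/\binom{n}{2}$ as a U-statistic (the indicators $\mathds{1}(z_i=k,z_j=l)$ for distinct pairs share label coordinates, so the paper's blanket ``iid'' claim for the summands is loose, though the conclusion is unaffected).
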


\begin{proof}{Proof}{}
According to Conditions~\ref{cond1} and~\ref{cond2}, $\exp [(2-\rho) x_{ij}^\top \beta(t)]$ and $\ind$ are iid random variables, with mean $\gamma$ and $p_k p_l$ respectively. Specifically, $\gamma$ is a positive constant. 
By the weak law of large numbers, we have 
\begin{align*}
\frac{\hat{\gamma}_{kl}}{\scsum{k,l}{} \hat{\gamma}_{kl}} & =  \frac{2\scsum{\nu =1}{T} \scsum{1\leq i < j \leq n}{} \exp [(2-\rho) \bm{x}_{ij}^\top \bm{\beta}(t_{\nu})]\ind /\{n(n-1)\}}{2\scsum{\nu =1}{T} \scsum{1\leq i < j \leq n}{} \exp [(2-\rho) \bm{x}_{ij}^\top \bm{\beta}(t_{\nu})] /\{n(n-1)\}} \\
    &= \frac{\gamma \cdot p_k p_l +o_p(1)}{\gamma+o_p(1)}\\
     &= p_k p_l + o_p(1).
\end{align*}
\end{proof}

\begin{Lemma}\label{coro2_1}
Under Conditions \ref{cond1} to \ref{cond2}, 
\begin{align*}
 \frac{\hat{\theta}_{kl}}{\scsum{k,l}{} \hat{\theta}_{kl}}  = p_k p_l + o_p(1).
\end{align*}
\end{Lemma}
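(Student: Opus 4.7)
The plan is to mirror the argument of Lemma~\ref{coro1_1} almost verbatim, simply replacing the factor $\exp[(2-\rho_0)\bm{x}_{ij}^\top \bm{\beta}(t_\nu)]$ by the richer factor $y_{ij}(t_\nu)\exp[(1-\rho_0)\bm{x}_{ij}^\top \bm{\beta}(t_\nu)]$. Writing $w_{ij} \equiv \sum_{\nu=1}^T y_{ij}(t_\nu)\exp[(1-\rho_0)\bm{x}_{ij}^\top \bm{\beta}(t_\nu)]$, we have $\hat{\theta}_{kl} = \binom{n}{2}^{-1}\sum_{i<j} w_{ij}\ind$ and, since $\sum_{k,l}\ind = 1$, $\sum_{k,l}\hat{\theta}_{kl} = \binom{n}{2}^{-1}\sum_{i<j} w_{ij}$. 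The goal is to show $\hat{\theta}_{kl} \to \theta\,p_k p_l$ and $\sum_{k,l}\hat{\theta}_{kl} \to \theta$ in probability, and then apply Slutsky's theorem (using $\theta > 0$, which holds as soon as $\P(y_{ij} > 0) > 0$).

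For the denominator, a weak law of large numbers is invoked on the pairwise average $\binom{n}{2}^{-1}\sum_{i<j} w_{ij}$. By iterated expectation and the Tweedie mean formula $\E[y_{ij}(t_\nu)\mid \bm{x}_{ij},c_i,c_j] = \exp(\beta_0^{c_ic_j} + \bm{x}_{ij}^\top\bm{\beta}(t_\nu))$, one obtains $\E[w_{ij}] = \sum_\nu \E[\exp(\beta_0^{c_ic_j})\exp\{(2-\rho_0)\bm{x}_{ij}^\top\bm{\beta}(t_\nu)\}] = \theta$. Finiteness of $\theta$, and indeed of the second moment of $w_{ij}$, follows from Condition~\ref{cond1} (sub-Gaussianity provides finite exponential moments of any unit linear combination of $\bm{x}_{ij}$) together with the boundedness of the entries of $\beta_0$ and the polynomial mean--variance structure of the Tweedie family. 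For the numerator, I would use that the arbitrary labels $z_1,\dots,z_n$ are drawn independently of the data $D$, so $\ind$ is independent of $w_{ij}$ and $\E[w_{ij}\ind] = \E[w_{ij}]\cdot\P(z_i=k,z_j=l) = \theta\,p_k p_l$; the same WLLN reasoning then delivers $\hat{\theta}_{kl} = \theta\,p_k p_l + o_p(1)$.

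The one technical point I expect to be the main obstacle is controlling the variance of the pairwise sum under the dependence arising because two pairs $(i,j)$ and $(i,j')$ share node $i$. This is a standard order-two $U$-statistic calculation: the shared-node contribution to the variance is $O(n^{-1})$ rather than $O(n^{-2})$ and still vanishes. The same issue is tacitly handled in the proof of Lemma~\ref{coro1_1}, so once the finite second moment of $w_{ij}$ is verified through Condition~\ref{cond1}, the argument transfers with no essential change.
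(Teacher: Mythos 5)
Your proposal is correct and follows essentially the same route as the paper's proof: identify the per-pair summand's mean as $\theta$, use the independence of the arbitrary labels $z$ from the data to factor out $p_k p_l$, and apply a weak law of large numbers to the numerator and denominator before taking the ratio. If anything, you are more careful than the paper, which simply asserts that the summands $y_{ij}(t_\nu)\exp[(1-\rho)\bm{x}_{ij}^\top\bm{\beta}(t_\nu)]$ are i.i.d.\ and skips both the positivity/moment checks and the variance control for pairs sharing a node that you flag explicitly.
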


\begin{proof}{Proof}{}
The proof is similar to that of Lemma \ref{coro1_1}. If we can show that, at each time point $t_{\nu}$, $\nu = 1, \ldots, T$,  $y_{ij}(t_{\nu}) \exp [(1-\rho)\bm{x}_{ij}^\top \bm{\beta}(t_{\nu})]$ for $i, j = 1, \ldots, n$ are iid with a nonzero mean, we complete the proof. 
For each node pair $(i, j)$, both their pairwise covariate $\bm{x}_{ij}$ and community labels $c_i$ and $c_j$ are iid. Moreover, $y_{ij}(t_{\nu})$ conditional on $x_{ij}$, $c_i$ and $c_j$ are iid as well. Therefore, $y_{ij}(t_{\nu}) \exp [(1-\rho)\bm{x}_{ij}^\top \bm{\beta}(t_{\nu})]$ for $i, j = 1, \ldots, n$ are iid, with mean
\begin{align*}
  \E[y_{ij}(t_{\nu}) \exp \{(1-\rho)\bm{x}_{ij}^\top \bm{\beta}(t_{\nu})\}] &=   \E \Bigl(  \E [y_{ij}(t_{\nu}) \exp \{(1-\rho)\bm{x}_{ij}^\top \bm{\beta}(t_{\nu})\} \mid \bm{x},\bm{c} ]  \Bigr)  \\
  &=\E \Bigl[  \E \{y_{ij}(t_{\nu})  | \bm{x},\bm{c} \} \cdot \exp \{(1-\rho)\bm{x}_{ij}^\top \bm{\beta}(t_{\nu}) 
 \}\Bigr] \\
  %  &=\E \Bigl[  [\mu_{ij}(t)  | \bm{x},\bm{c} ] \cdot \exp [(1-\rho)x_{ij}^\top \beta(t)]\Bigr] \\
 &=\E \Bigl[  \exp\{\beta_0^{c_i c_j}+\bm{x}_{ij}^\top \bm{\beta}(t_{\nu})\} \cdot \exp \{(1-\rho)\bm{x}_{ij}^\top \bm{\beta}(t_{\nu})\}\Bigr] \\
  &= \scsum{k,l}{} \E \Bigl[  \exp \left\{ \beta_0^{kl}+(2-\rho) \bm{x}_{ij}^\top \bm{\beta}(t_{\nu}) \right\}\Bigr]\cdot p_k p_l.
\end{align*}
Therefore, the expectation is a nonzero constant.
\end{proof}

Next, we prove Theorem~\ref{theorem1_1}. 
\begin{proof}{Proof of Theorem \ref{theorem1_1}}{}
By Lemmas~\ref{coro1_1} and \ref{coro2_1} and the continuous mapping theorem, 
\begin{align}
&  \scsum{k,l}{}  \hat{\theta}_{kl}^{2-\rho}\cdot \hat{\gamma}_{kl}^{\rho-1} \nonumber \\
=&   \Bigl[ \scsum{k,l}{} \Bigl(\dfrac{\hat{\theta}_{kl}}{\scsum{k,l}{}\hat{\theta}_{kl}} \Bigr)^{2-\rho}\cdot \Bigl(\dfrac{\hat{\gamma}_{kl}}{\scsum{k,l}{}\hat{\gamma}_{kl}} \Bigr)^{\rho-1} \Bigr] \cdot \bigl(\scsum{k,l}{}\hat{\theta}_{kl} \bigr)^{2-\rho} \bigl(\scsum{k,l}{}\hat{\gamma}_{kl} \bigr)^{\rho-1} \nonumber \\
=& \Bigl[ \scsum{k,l}{} \Bigl\{(p_k p_l)^{2-\rho} + o_p(1) \Bigr\}\cdot \Bigl\{ (p_k p_l)^{\rho-1} + o_p(1) \Bigr\} \Bigr] \cdot \bigl(\scsum{k,l}{}\hat{\theta}_{kl} \bigr)^{2-\rho} \bigl(\scsum{k,l}{}\hat{\gamma}_{kl} \bigr)^{\rho-1}\nonumber\\
=&   \Bigl[ \scsum{k,l}{} \Bigl(p_k p_l + o_p(1) \Bigr) \Bigr] \cdot \bigl(\scsum{k,l}{}\hat{\theta}_{kl} \bigr)^{2-\rho} \bigl(\scsum{k,l}{}\hat{\gamma}_{kl} \bigr)^{\rho-1}  \nonumber \\
=& \Bigl[ 1 + \frac{K(K+1)}{2}o_p(1)  \Bigr] \cdot \Bigl( \theta + o_p (1) \Bigr)^{2-\rho} \Bigl( \gamma + o_p (1)  \Bigr)^{\rho-1}  
%\nonumber \\=& \Bigl[ 1 + \frac{K(K+1)}{2}o_p(1)  \Bigr] \cdot \Bigl( \theta^{2-\rho} + o_p (1) \Bigr) \Bigl( \gamma^{\rho-1} + o_p (1) \Bigr) 
\label{proof1_wlln_1}  \\
=& \theta^{2-\rho} \gamma^{\rho-1}  + o_p(1). \nonumber 
\end{align}
\eqref{proof1_wlln_1} holds because $\sum_{k,l}{}\hat{\theta}_{kl}=\hat{\theta} = \theta+o_p(1)$ and $\sum_{k,l}{}\hat{\gamma}_{kl} = \hat{\gamma}= \gamma+o_p(1)$ by the weak law of large numbers. Therefore, we have
\begin{align*}
\dfrac{2}{n(n-1)} l_z (\bm{\beta}(t))& =\dfrac{1}{\phi} \dfrac{1}{(1-\rho)(2-\rho)} \scsum{k,l}{}  \hat{\theta}_{kl}^{2-\rho}\cdot \hat{\gamma}_{kl}^{\rho-1}\\
  &= \dfrac{1}{\phi} \dfrac{1}{(1-\rho)(2-\rho)} \Bigl( \theta^{2-\rho}\cdot \gamma^{\rho-1}  +o_p(1) \Bigr)\\
  &=\dfrac{1}{\phi} \dfrac{1}{(1-\rho)(2-\rho)} \theta^{2-\rho}\cdot \gamma^{\rho-1}  +o_p(1).
\end{align*}
\end{proof}

\begin{Remark} By H\"older's inequality, we have
%in~\eqref{proof1_1} 
\begin{align*}
    \scsum{k,l}{} \left(\dfrac{\hat{\theta}_{kl}}{\scsum{k,l}{}\hat{\theta}_{kl}} \right)^{2-\rho}\cdot \left(\dfrac{\hat{\gamma}_{kl}}{\scsum{k,l}{}\hat{\gamma}_{kl}} \right)^{\rho-1} \leq \left( \scsum{k,l}{} \dfrac{\hat{\theta}_{kl}}{\scsum{k,l}{}\hat{\theta}_{kl}} \right)^{2-\rho}\cdot \left( \scsum{k,l}{} \dfrac{\hat{\gamma}_{kl}}{\scsum{k,l}{}\hat{\gamma}_{kl}} \right)^{\rho-1}=1.
\end{align*}
Then, it follows
\begin{align}
\dfrac{2}{n(n-1)}  l_z (\bm{\beta}(t)) \leq \dfrac{1}{\phi} \dfrac{1}{(1-\rho)(2-\rho)} \theta^{2-\rho}\cdot \gamma^{\rho-1}. \label{inequality}
\end{align}
In fact, Lemmas~\ref{coro1_1} and~\ref{coro2_1} establish the asymptotic equality conditions, which sharpen~\eqref{inequality} and lead to the conclusion in Theorem~\ref{theorem1_1}. 
\end{Remark}

\section{Bspline estimation in Step 1}
\label{Appendix2}

In this section, we present the details of estimating the time-varying covariate coefficient $ \hat{\bm{\beta}}(t) $ in accordance to~\eqref{eq:hat_betat} as part of Step 1 in our two-step estimation process.

According to \cite{silverman1985some} and \cite{green1993nonparametric}, each optimal $\bm{\beta}_u (t), u=1,\cdots,p$ is a natural cubic spline with knots at time points where temporal data is observed. 
In practice, 
we use B-spline in the computations of smoothing splines \citep{hastie2009elements}. We use $T+4$ B-spline basis functions $\{ B_m (t) \}_{m=1}^{T+4}$, so we can represent the scalar $\bm{\beta}_u (t_\nu)$ as the $(\nu,u)-$th element in the $T-$by$-p$ matrix $\bm{B} \bm{\eta}$, where 
% \begin{align*}
% \bm{B}_{T\times (T+4) }=
% \begin{bmatrix}
% B_1 (t_1) & \cdots & B_{T+4} (t_1)  \\
% \vdots & \ddots & \vdots \\
% B_1 (t_T) & \cdots & B_{T+4} (t_T) \\
% \end{bmatrix}.
% \end{align*}
\begin{equation*}
\bm{B}_{T\times (T+4) }= 
    	\left[ \begin{aligned}
B_1 (t_1) & \quad \cdots \quad B_{T+4} (t_1)  \\
\vdots & \quad  \ddots \quad \quad \vdots \\
B_1 (t_T) & \quad \cdots \quad B_{T+4} (t_T) \\
\end{aligned} \right]. 
\end{equation*}
and $\bm{\eta}\in \mathbb{R}^{(T+4)\times p}$ is the coefficient matrix that needs to be estimated. 
%\st{The $p$ dimensional vector $\bar{\bm{\beta}} (t)=(\bm{B}_{t\cdot } \bm{\eta})^\top$, where $\bm{B}_{t\cdot } $ represents the $t^{th}$ row of the matrix $\bm{B}$.} 
The $p$ dimensional vector $\bm{\beta} (t_\nu)=(\bm{B}_{\nu \cdot } \bm{\eta})^\top$, where $\bm{B}_{\nu \cdot } $ represents the $\nu^{th}$ row of the matrix $\bm{B}$.

If we define $\bm{\Omega} \in \mathbb{R}^{(T+4)\times(T+4)}$ where $\bm{\Omega}_{ij}=\int  B_i'' (t) B_j'' (t)  dt$ and $\bm{\lambda} = (\lambda_1,\cdots,\lambda_p)^\top$, we can solve the $(T+4) \times p$ matrix $\bm{\eta}$ by plugging $\bm{\beta} (t_\nu)=(\bm{B}_{\nu \cdot } \bm{\eta})^\top$ in~\eqref{eq:hat_betat}:

\begin{comment}
\begin{align}
\bm{\eta}  = \argmax_{\bm{\eta}} \dfrac{1}{\phi} \dfrac{1}{(1-\rho)(2-\rho)} \scsum{k,l}{}  \hat{\theta}_{kl}^{2-\rho}\cdot \hat{\gamma}_{kl}^{\rho-1} - \frac{1}{2} \bm{\lambda}^\top \cdot diag(\bm{\eta}^T \bm{\Omega} \bm{\eta}),
\end{align}
where the $\hat{\theta}_{kl}$ and $\hat{\gamma}_{kl}$ are adjusted as
\begin{align*}
 \hat{\theta}_{kl}  &= \scsum{\nu=1}{T} \scsum{1\leq i < j \leq n}{}y_{ij}(t) \exp [(1-\rho) \bm{B}_{\nu\cdot} \bm{\eta} x_{ij}]\ind,\\
  \hat{\gamma}_{kl} &=\scsum{\nu=1}{T} \scsum{1\leq i < j \leq n}{} \exp [(2-\rho) \bm{B}_{\nu \cdot} \bm{\eta} x_{ij}]\ind.
\end{align*}
\JJ{Or}
\end{comment}

\begin{multline*}
   \hat{\bm{\eta}}  = \argmax_{\bm{\eta}}  \\
    \dfrac{1}{(1-\rho_0)(2-\rho_0)} \scsum{k,l}{}  \left(\scsum{\nu=1}{T} \scsum{1\leq i < j \leq n}{}y_{ij}(t) \exp [(1-\rho_0) \bm{B}_{\nu \cdot} \bm{\eta} x_{ij}]\ind \right)^{2-\rho_0}\times \\
  \left( \scsum{\nu=1}{T} \scsum{1\leq i < j \leq n}{} \exp [(2-\rho_0) \bm{B}_{\nu \cdot} \bm{\eta} x_{ij}]\ind \right)^{\rho_0-1} - \frac{1}{2} \bm{\lambda}^\top \cdot diag(\bm{\eta}^T \bm{\Omega} \bm{\eta})  
\end{multline*}

\begin{comment}
\begin{align*}
& \hat{\bm{\eta}}  = \argmax_{\bm{\eta}} \dfrac{1}{(1-\rho_0)(2-\rho_0)} \scsum{k,l}{}  \left(\scsum{\nu=1}{T} \scsum{1\leq i < j \leq n}{}y_{ij}(t) \exp [(1-\rho_0) \bm{B}_{\nu \cdot} \bm{\eta} x_{ij}]\ind \right)^{2-\rho_0}\cdot \\
& \left( \scsum{\nu=1}{T} \scsum{1\leq i < j \leq n}{} \exp [(2-\rho_0) \bm{B}_{\nu \cdot} \bm{\eta} x_{ij}]\ind \right)^{\rho_0-1} - \frac{1}{2} \bm{\lambda}^\top \cdot diag(\bm{\eta}^T \bm{\Omega} \bm{\eta}),
\end{align*}
\end{comment}

Once we have obtain $\hat{\bm{\eta}}$, we can calculate the estimated $\hat{\bm{\beta}}(t)$ in Step 1 by $\hat{\bm{\beta}}(t) = \bm{B} \hat{\bm{\eta}}$.

%In the vanilla Tweedie without the covariate coefficient, we skip this step. For static Tweedie with covariate, we estimate the static covariate coefficient $\bm{\beta}$ by maximize the log-likelihood of~\eqref{TVlikelihood}, which can be implemented by R function \texttt{optim} easily as the optimization problem is convex.

\section{Additional Simulation Results}
\label{Appendix3}

\subsection{Tweedie Parameters Estimated in Simulation}
\label{Appendix3:Tweedie}

Although our primary interest is to estimate the covariate coefficients and infer the community labels in our model, their estimation is affected by the Tweedie parameters $\phi$ and $\rho$. In this section, we provide the simulation results regarding $\phi$ and $\rho$ in the Section~\ref{sec:simulation}. We report the estimated bias and standard error (SE) of the estimates of $\phi$ and $\rho$ over 50 simulation runs in Table~\ref{Appendix_tab2}, ~\ref{Appendix_tab1}, ~\ref{simulation_covariate_phirho}, ~\ref{sbm-tab:sim-TVTweedie-phi} and ~\ref{sbm-tab:sim-TVTweedie-rho}. To be more specific, we calculate the bias of the estimate $\hat{\phi}$ of $\phi$ with true value $\phi_0$ by $\text{bias}(\hat{\phi})=\sum_{i=1}^{50}(\hat{\phi}_i-\phi_0)/50$ and $\text{SE}(\hat{\phi})=\sqrt{\sum_{i=1}^{50}(\hat{\phi}_i-\bar{\hat{\phi}})^2/49}$ where $\bar{\hat{\phi}}$ is the average of $\hat{\phi}$ over 50 simulation runs. In summary, the simulation results indicate that the estimates of $\phi$ and $\rho$ are highly accurate.

\begin{table}[ht] %htp
\centering
\begin{tabular}{|c|c|c|cc|cc|cc|}
\hline
 & & & \multicolumn{2}{c|}{Scenario 1} & \multicolumn{2}{c|}{Scenario 2} & \multicolumn{2}{c|}{Scenario 3} \\
 $\phi$ & $\rho$ & $n$ & Bias  &  SE &   Bias    & SE  &   Bias    & SE  \\ \hline 
 \hline
    \multirow{6}{*}{$2$} &\multirow{2}{*}{$1.2$} & 50 & 0.012&0.054 &0.014&0.09&0.014&0.082 \\ \cline{3-9}
&& 100 & -0.003&0.027&0.001&0.026&0.004&0.032  \\ \cline{2-9}
&\multirow{2}{*}{$1.5$} & 50 &0.014&0.072&0.018&0.085&0.011&0.085  \\ \cline{3-9}
&& 100 & 0.004&0.025 &0.004&0.034 &0&0.032  \\ \cline{2-9}
&\multirow{2}{*}{$1.8$} & 50 &-0.003&0.062&-0.004&0.07&-0.001&0.056  \\  \cline{3-9}
&& 100 &0&0.03&0.001&0.028&-0.002&0.028  \\
\hline
 \hline
    \multirow{6}{*}{$1$} &\multirow{2}{*}{$1.2$} & 50 & 0.008&0.034&0.008&0.028&0.013&0.048   \\ \cline{3-9}
&& 100 & 0.002&0.014&-0.002&0.015&-0.001&0.012 \\ \cline{2-9}
&\multirow{2}{*}{$1.5$} & 50 &0.008&0.039&0.006&0.033&0.01&0.033 \\ \cline{3-9}
&& 100 & -0.001&0.018&0&0.017&0.002&0.014\\ \cline{2-9}
&\multirow{2}{*}{$1.8$} & 50 & 0.001&0.037&0.008&0.034&0.007&0.034 \\  \cline{3-9}
&& 100 & 0.001&0.016 &0.001&0.016&-0.001&0.016 \\
\hline
 \hline
\multirow{6}{*}{$0.5$} &\multirow{2}{*}{$1.2$} & 50 & 0.004&0.015&0&0.022&0.007&0.023 \\ \cline{3-9}
&& 100 & 0.002&0.007 &0&0.008&0&0.006 \\ \cline{2-9}
&\multirow{2}{*}{$1.5$} & 50 &  0.005&0.021&0.002&0.022&0.009&0.03 \\ \cline{3-9}
&& 100 & 0&0.01 &0&0.01&-0.001&0.01 \\ \cline{2-9}
&\multirow{2}{*}{$1.8$} & 50 &-0.003&0.016&0.008&0.023&0.002&0.031 \\  \cline{3-9}
&& 100 & -0.002&0.009 &-0.001&0.009&-0.001&0.010 \\
\hline
\end{tabular}
\caption{Summary of estimated bias and standard error (SE) of estimated $\phi$ in scenario 1, 2 and 3 over 50 simulation runs.}
\label{Appendix_tab2}
\end{table}

\begin{table}[ht] %htp
\centering
\begin{tabular}{|c|c|c|cc|cc|cc|}
\hline
 & & & \multicolumn{2}{c|}{Scenario 1} & \multicolumn{2}{c|}{Scenario 2} & \multicolumn{2}{c|}{Scenario 3} \\
 $\phi$ & $\rho$ & $n$ & Bias  &  SE &   Bias    & SE  &   Bias    & SE  \\ \hline 
 \hline
    \multirow{6}{*}{$2$} &\multirow{2}{*}{$1.2$} & 50 & 0&0&0.002&0.014&0.002&0.014 \\ \cline{3-9}
&& 100 & 0&0&0&0&0&0  \\ \cline{2-9}
&\multirow{2}{*}{$1.5$} & 50 &0&0&0&0&0&0  \\ \cline{3-9}
&& 100 & 0&0 &0&0&0&0  \\ \cline{2-9}
&\multirow{2}{*}{$1.8$} & 50 & 0&0&0&0&0&0  \\  \cline{3-9}
&& 100 &0&0&0&0&0&0  \\
\hline
 \hline
    \multirow{6}{*}{$1$} &\multirow{2}{*}{$1.2$} & 50 & 0.060&0.120&0&0&0.002&0.014   \\ \cline{3-9}
&& 100 &0&0&0&0&0&0 \\ \cline{2-9}
&\multirow{2}{*}{$1.5$} & 50 & -0.002&0.014&0&0&0&0 \\ \cline{3-9}
&& 100 & 0&0&0&0&0&0 \\ \cline{2-9}
&\multirow{2}{*}{$1.8$} & 50 & -0.002&0.014&0&0&0&0 \\  \cline{3-9}
&& 100 &0&0&0&0&0&0 \\
\hline
 \hline
\multirow{6}{*}{$0.5$} &\multirow{2}{*}{$1.2$} & 50 & 0&0&-0.002&0.014&0.002&0.014 \\ \cline{3-9}
&& 100 & 0&0&0&0&0&0 \\ \cline{2-9}
&\multirow{2}{*}{$1.5$} & 50 &  0.002&0.037&-0.002&0.014&0.002&0.032 \\ \cline{3-9}
&& 100 & 0&0 &0&0&0&0 \\ \cline{2-9}
&\multirow{2}{*}{$1.8$} & 50 & 0.006&0.054 &0.008&0.039&0.002&0.037 \\  \cline{3-9}
&& 100 &  0.002&0.014&0&0&0&0 \\
\hline
\end{tabular}
\caption{Summary of estimated bias and standard error (SE) of estimated $\rho$ in scenario 1, 2 and 3 over 50 simulation runs.}
\label{Appendix_tab1}
\end{table}

\begin{table}[ht] %htp
\centering
\begin{tabular}{|c|c|c|cc|cc|cc|cc|}
\hline
\multicolumn{3}{|c|}{} & \multicolumn{4}{c|}{Weak Effect ($\beta=1$)} & \multicolumn{4}{c|}{Strong Effect ($\beta=2$)} \\ \hline
 $\phi$ & $\rho$ & $n$ & \multicolumn{2}{c|}{$\hat{\phi}$} &\multicolumn{2}{c|}{$\hat{\rho}$}&\multicolumn{2}{c|}{$\hat{\phi}$}& \multicolumn{2}{c|}{$\hat{\rho}$} \\ \cline{4-11}
 & & & Bias   &  SE & Bias & SE &  Bias   &  SE & Bias & SE \\ \hline
\multirow{6}{*}{$2$} &\multirow{2}{*}{$1.2$} & 50 & -0.002&0.063 & 0&0 & 0.002&0.073   & 0.002&0.014 \\ \cline{3-11}
&& 100 & 0.001&0.026  &0&0 & -0.006&0.031 & 0&0  \\ \cline{2-11}
&\multirow{2}{*}{$1.5$} & 50 & 0.016&0.076  &0&0&   0.019&0.072 & 0&0 \\ \cline{3-11}
&& 100 &0.005&0.033 &  0&0  &0.005&0.039 &  0&0 \\ \cline{2-11}
&\multirow{2}{*}{$1.8$} & 50 & 0.003&0.065  & 0&0 &  -0.007&0.066 & 0&0  \\  \cline{3-11}
&& 100 & -0.002&0.032   &  0&0& 0.004&0.035 & 0&0  \\
\hline \hline
\multirow{6}{*}{$1$} &\multirow{2}{*}{$1.2$} & 50 & 0.003&0.03  &0&0 &  0.002&0.032  & 0&0 \\ \cline{3-11}
&& 100 & 0&0.017  &  0&0 & -0.002&0.013 & 0&0  \\ \cline{2-11}
&\multirow{2}{*}{$1.5$} & 50 & 0.003&0.042 & -0.002&0.014  &  0.009&0.052  &  0&0.02  \\ \cline{3-11}
&& 100 & -0.001&0.013 & 0&0  & 0.001&0.016 &  0&0  \\ \cline{2-11}
&\multirow{2}{*}{$1.8$} & 50 &0.001&0.037 &  0&0 & 0.006&0.04 &  0&0  \\  \cline{3-11}
&& 100 & -0.001&0.027  & -0.002&0.014 & 0.003&0.02 & 0&0  \\
\hline \hline
\multirow{6}{*}{$0.5$} &\multirow{2}{*}{$1.2$} & 50 & 0.009&0.026  & 0.004&0.02  &  0&0.014  & 0&0 \\ \cline{3-11}
&& 100 & 0.001&0.008 &  0&0 &0.001&0.007 & 0&0  \\ \cline{2-11}
&\multirow{2}{*}{$1.5$} & 50 & 0.003&0.027 &  -0.006&0.031 & 0.002&0.019  & -0.006&0.024  \\ \cline{3-11}
&& 100 &0&0.009 &  0&0 & 0&0.009 & 0&0  \\ \cline{2-11}
&\multirow{2}{*}{$1.8$} & 50 & 0.005&0.028 & -0.01&0.054 & -0.002&0.025 & -0.006&0.042 \\  \cline{3-11}
&& 100 &-0.002&0.01  & 0&0 & 0.001&0.009 & 0&0  \\
\hline 
\end{tabular}
\caption{Summary of estimated bias and standard error (SE) of estimated $\phi$ and $\rho$ in the static model with covariates over 50 simulation runs, with $(\beta_0^{kk},\beta_0^{kl})=(0.5,-0.5)$. }
\label{simulation_covariate_phirho}
\end{table}

\begin{table}[hp] %htp
\centering
\begin{tabular}{c|c|cccccc}
\hline
\multirow{2}{*}{$(\beta_0^{kk},\beta_0^{kl})$}&\multirow{2}{*}{$\phi$}&\multicolumn{6}{c}{$\beta(t)$} \\ 
 & &    $2t-1$     &   $\sin (2\pi t)$     &    $2t$   &    $\sin (2\pi t)$ +1       &   $0.5(2t-1)$      &   $ 0.5 \sin (2\pi t)$  \\ \hline \hline
Scenario 1  & Bias & 0.002 & 0.004  &  -0.001 & 0.007  & 0.003  & 0.004 \\ \cline{2-8}
$(1,0)$ &   SE & 0.008 &  0.008 & 0.014  &  0.017 & 0.01  & 0.017 \\  \hline \hline
Scenario 2  & Bias & 0.002 & 0.005  &  0.004 &  0.005 & 0.002  & 0.001 \\ \cline{2-8}
$(0.5,-0.5)$ &   SE & 0.008 & 0.006  & 0.026  & 0.008  & 0.008  & 0.007 \\  \hline \hline
Scenario 3  & Bias & 0.002 & 0.005  &  0 & 0.005  & 0.001  & 0.002 \\ \cline{2-8}
$(0,-1)$ &   SE & 0.007 &  0.007 &  0.007 & 0.009  & 0.007  & 0.009 \\  \hline \hline
Scenario 4  & Bias & 0.001 & 0.003  & 0.002  & 0.004  & 0.001  & 0.001 \\ \cline{2-8}
$(0.5,0)$ &   SE & 0.007 & 0.007  & 0.007  & 0.006  & 0.007  &  0.007\\  \hline \hline
Scenario 5  & Bias & 0.001 & 0.004  &  0.002 & 0.005  & 0.002  & 0.001 \\ \cline{2-8}
$(0.25,-0.25)$ &   SE & 0.007 & 0.008  & 0.007  & 0.008  & 0.008  & 0.007 \\  \hline \hline
Scenario 6  & Bias & 0 &  0.004 & 0.001  & 0.006  & 0  & 0.006 \\ \cline{2-8}
$(0,-0.5)$ &   SE & 0.008 & 0.007  & 0.009  & 0.007  &  0.007 & 0.032 \\  \hline \hline
\end{tabular}
\caption{Summary of estimated bias and standard error (SE) of estimated $\phi$ (with true value $1$) in the time-varying model over 50 simulation runs (using $\lambda=0.5$).}
\label{sbm-tab:sim-TVTweedie-phi}
\end{table}

%Scenario 2  $(0.5,-0.5)$
%Scenario 3  $(0,-1)$
%Scenario 4  $(0.5,0)$
%Scenario 5  $(0.25,-0.25)$
%Scenario 6 $(0,-0.5)$

\begin{table}[hp] %htp
\centering
\begin{tabular}{c|c|cccccc}
\hline
\multirow{2}{*}{$(\beta_0^{kk},\beta_0^{kl})$}&\multirow{2}{*}{$\rho$}&\multicolumn{6}{c}{$\beta(t)$} \\ 
 & &    $2t-1$     &   $\sin (2\pi t)$     &    $2t$   &    $\sin (2\pi t)$ +1       &   $0.5(2t-1)$      &   $ 0.5 \sin (2\pi t)$  \\ \hline \hline
Scenario 1  & Bias & 0 &  0 &  -0.002 & 0.002  & 0  & 0.002 \\ \cline{2-8}
$(1,0)$ &   SE & 0 & 0  & 0.014  &  0.014 & 0  & 0.014 \\  \hline \hline
Scenario 2  & Bias & 0 &  0 &  0.002 &  0 & 0  &  0\\ \cline{2-8}
$(0.5,-0.5)$ &   SE & 0 &  0 & 0.014  &  0 &  0 & 0 \\  \hline \hline
Scenario 3  & Bias & 0 &  0 &  0 &  0 &  0 & 0 \\ \cline{2-8}
$(0,-1)$ &   SE & 0 & 0  &  0 &  0 & 0  & 0 \\  \hline \hline
Scenario 4  & Bias & 0 &  0 &  0 &  0 &  0 & 0 \\ \cline{2-8}
$(0.5,0)$ &   SE & 0 &  0 &  0 &  0 &  0 & 0 \\  \hline \hline
Scenario 5  & Bias & 0 &  0 &  0 & 0  & 0  & 0 \\ \cline{2-8}
$(0.25,-0.25)$ &   SE & 0 & 0  &  0 & 0  & 0  & 0 \\  \hline \hline
Scenario 6  & Bias & 0 & 0  &  0 & 0  &  0 & 0.002 \\ \cline{2-8}
$(0,-0.5)$ &   SE & 0 &  0 &  0 &  0 &  0 & 0.014 \\  \hline \hline
\end{tabular}
\caption{Summary of estimated bias and standard error (SE) of estimated $\rho$ (with true value $1.5$) in the time-varying model over 50 simulation runs (using $\lambda=0.5$).}
\label{sbm-tab:sim-TVTweedie-rho}
\end{table}

\subsection{Sensitivity Analysis of Tuning Parameters in TV-TSBM}
\label{Appendix3:SensitivityTuning}

In this section, we apply the TV-TSBM on two $\lambda$ values of 1 and 0.1 respectively to conduct the sensitivity analysis of the simulation in Section~\ref{subsec:simulationTVTWsbm}.

%%MZ moved details below to here in the appendix, but not really happy with the text yet 
By and large, the clustering outcomes across the three distinct tuning parameters measured by the NMI are relatively close, and all indicate high-quality clustering. With increasing $\lambda$ values, the curvature of the estimated $\hat{\beta}(t)$ diminishes. Consequently, when the true curve $\beta(t)$ is linear, larger values of $\lambda$ yield smaller errors in estimating $\hat{\beta}(t)$. Vice versa, a smaller $\lambda$ leads to a better estimation of $\hat{\beta}(t)$ when the underlying curve is a sine function. In summary, the consistent clustering outcomes across various distinct $\lambda$ values, coupled with the choice of a moderately penalized smoothness, substantiates the rationale behind adopting $0.5$ as the preferred value for $\lambda$.

\begin{table}[ht] %htp
\centering
\begin{tabular}{c|c|cccccc}
\hline
 \multirow{2}{*}{$(\beta_0^{kk},\beta_0^{kl})$} &  &\multicolumn{6}{c}{$\beta(t)$} \\ \cline{3-8} 
  &          &    $2t-1$     &   $\sin (2\pi t)$     &    $2t$   &    $\sin (2\pi t)$ +1       &   $0.5(2t-1)$      &   $ 0.5 \sin (2\pi t)$  \\ \hline \hline
Scenario 1 &NMI & \shortstack{\\1\\(0)} & \shortstack{\\1\\(0)} & \shortstack{\\1\\(0)} & \shortstack{\\1\\(0)} & \shortstack{\\0.996\\(0.004)} & \shortstack{\\1\\(0)} \\ \cline{2-8}
$(1,0)$&$\text{Err}(\hat{\beta}(t))$ & \shortstack{\\0.004\\(0)} & \shortstack{\\0.041\\(0)} & \shortstack{\\0.004\\(0)} & \shortstack{\\0.040\\(0)} & \shortstack{\\0.004\\(0)} & \shortstack{\\0.021\\(0)} \\ \hline \hline
Scenario 2 &NMI  & \shortstack{\\1\\(0)} & \shortstack{\\1\\(0)} & \shortstack{\\1\\(0)} & \shortstack{\\1\\(0)} & \shortstack{\\1\\(0)} & \shortstack{\\1\\(0)} \\ \cline{2-8}
$(0.5,-0.5)$ &$\text{Err}(\hat{\beta}(t))$  & \shortstack{\\0.004\\(0)} & \shortstack{\\0.048\\(0)} & \shortstack{\\0.005\\(0)} & \shortstack{\\0.046\\(0)} & \shortstack{\\0.005\\(0)} & \shortstack{\\0.024\\(0)} \\ \hline \hline
Scenario 3 &NMI & \shortstack{\\1\\(0)} & \shortstack{\\1\\(0)} & \shortstack{\\1\\(0)} & \shortstack{\\1\\(0)} & \shortstack{\\1\\(0)} & \shortstack{\\1\\(0)} \\ \cline{2-8}
$(0,-1)$ &$\text{Err}(\hat{\beta}(t))$  & \shortstack{\\0.005\\(0)} & \shortstack{\\0.055\\(0)} & \shortstack{\\0.005\\(0)} & \shortstack{\\0.053\\(0)} & \shortstack{\\0.006\\(0)} & \shortstack{\\0.028\\(0)} \\ \hline \hline
Scenario 4 &NMI  & \shortstack{\\0.996\\(0.004)} & \shortstack{\\1\\(0)}&\shortstack{\\1\\(0)}&\shortstack{\\1\\(0)}&\shortstack{\\1\\(0)}&\shortstack{\\1\\(0)}\\ \cline{2-8}
$(0.5,0)$ &$\text{Err}(\hat{\beta}(t))$  & \shortstack{\\0.004\\(0)} &\shortstack{\\0.045\\(0)}&\shortstack{\\0.004\\(0)}&\shortstack{\\0.043\\(0)}&\shortstack{\\0.004\\(0)}&\shortstack{\\0.023\\(0)}  \\ \hline \hline
Scenario 5 &NMI  & \shortstack{\\1\\(0)} & \shortstack{\\1\\(0)} & \shortstack{\\1\\(0)} & \shortstack{\\1\\(0)} & \shortstack{\\1\\(0)} & \shortstack{\\1\\(0)} \\ \cline{2-8}
$(0.25,-0.25)$ &$\text{Err}(\hat{\beta}(t))$  & \shortstack{\\0.004\\(0)} & \shortstack{\\0.048\\(0)} & \shortstack{\\0.004\\(0)} & \shortstack{\\0.046\\(0)} & \shortstack{\\0.004\\(0)} & \shortstack{\\0.024\\(0)} \\ \hline \hline
Scenario 6 &NMI  & \shortstack{\\1\\(0)} & \shortstack{\\0.996\\(0.004)} & \shortstack{\\1\\(0)} & \shortstack{\\1\\(0)} & \shortstack{\\1\\(0)} & \shortstack{\\0.996\\(0.004)} \\ \cline{2-8}
$(0,-0.5)$ &$\text{Err}(\hat{\beta}(t))$  & \shortstack{\\0.005\\(0)} & \shortstack{\\0.052\\(0)} & \shortstack{\\0.004\\(0)} & \shortstack{\\0.050\\(0)} & \shortstack{\\0.005\\(0)} & \shortstack{\\0.026\\(0)} \\ \hline 
\end{tabular}
\caption{Summary of clustering and estimation performance (using $\lambda=1$) from the time-varying model over 50 simulation runs, with $\phi=1$, $\rho=1.5$ and $n=50$.}
\label{sbm-tab:TVTweedie-sensitivity_1}
\end{table}

\begin{figure}[ht]
    \centering
\includegraphics[height=0.32\paperheight]{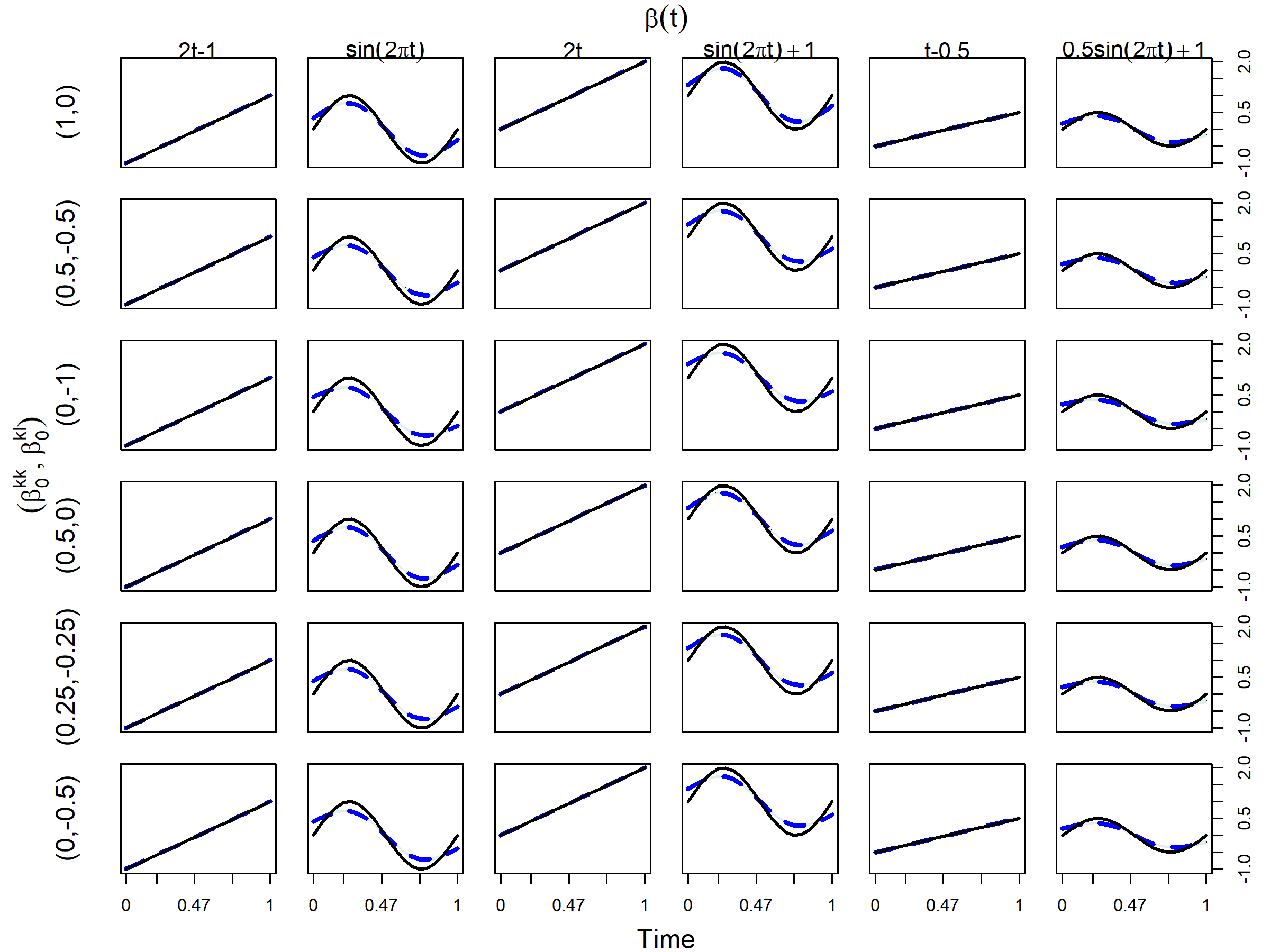}
    \caption{Estimations of the time-varying coefficients for 36 designs with $\lambda=1$, i.e. six block matrices by six functions for $\beta(t)$. In each panel, the black solid line represents the true $\beta(t)$ while the blue dashed line denotes the mean curve of $\hat{\beta}(t)$ and the light blue shadow marks the corresponding confidence band. }
    \label{fig:sim_TV_betat_lam1}
\end{figure}

\begin{table}[ht] %htp
\centering
\begin{tabular}{c|c|cccccc}
\hline
\multirow{2}{*}{$(\beta_0^{kk},\beta_0^{kl})$} &  &\multicolumn{6}{c}{$\beta(t)$} \\ 
  &           &    $2t-1$     &   $\sin (2\pi t)$     &    $2t$   &    $\sin (2\pi t)$ +1       &   $0.5(2t-1)$      &   $ 0.5 \sin (2\pi t)$  \\\hline \hline
Scenario 1 & NMI & \shortstack{\\1\\(0)} & \shortstack{\\1\\(0)} & \shortstack{\\1\\(0)} & \shortstack{\\1\\(0)} & \shortstack{\\0.996\\(0.004)} & \shortstack{\\1\\(0)} \\ \cline{2-8}
$(1,0)$ & $\text{Err}(\hat{\beta}(t))$ &\shortstack{\\0.005\\(0)} & \shortstack{\\0.008\\(0)}&\shortstack{\\0.005\\(0)} &\shortstack{\\0.008\\(0)} & \shortstack{\\0.005\\(0)}& \shortstack{\\0.006\\(0)}\\ \hline \hline
Scenario 2 & NMI  & \shortstack{\\1\\(0)}& \shortstack{\\1\\(0)}& \shortstack{\\1\\(0)}& \shortstack{\\1\\(0)}& \shortstack{\\1\\(0)}& \shortstack{\\1\\(0)} \\ \cline{2-8}
$(0.5,-0.5)$ & $\text{Err}(\hat{\beta}(t))$  &\shortstack{\\0.005\\(0)} &\shortstack{\\0.009\\(0)} & \shortstack{\\0.006\\(0)} &\shortstack{\\0.009\\(0)} &\shortstack{\\0.006\\(0)} & \shortstack{\\0.007\\(0)}\\ \hline \hline
Scenario 3 & NMI &\shortstack{\\1\\(0)} &\shortstack{\\1\\(0)} &\shortstack{\\1\\(0)} &\shortstack{\\0.996\\(0.004)} &\shortstack{\\1\\(0)} &\shortstack{\\1\\(0)}  \\ \cline{2-8}
$(0,-1)$ & $\text{Err}(\hat{\beta}(t))$  &\shortstack{\\0.006\\(0)} &\shortstack{\\0.012\\(0)} &\shortstack{\\0.006\\(0)} &\shortstack{\\0.011\\(0)} & \shortstack{\\0.007\\(0)}&\shortstack{\\0.008\\(0)} \\ \hline \hline
Scenario 4 & NMI  & \shortstack{\\0.996\\(0.004)}&\shortstack{\\1\\(0)} &\shortstack{\\1\\(0)} &\shortstack{\\1\\(0)} &\shortstack{\\1\\(0)} &\shortstack{\\1\\(0)} \\ \cline{2-8}
$(0.5,0)$ & $\text{Err}(\hat{\beta}(t))$  &\shortstack{\\0.005\\(0)} &\shortstack{\\0.009\\(0)} & \shortstack{\\0.006\\(0)}&\shortstack{\\0.008\\(0)} & \shortstack{\\0.006\\(0)}& \shortstack{\\0.007\\(0)}\\ \hline \hline
Scenario 5 & NMI  &\shortstack{\\1\\(0)} & \shortstack{\\1\\(0)}&\shortstack{\\1\\(0)} &\shortstack{\\1\\(0)} &\shortstack{\\1\\(0)} &\shortstack{\\1\\(0)} \\ \cline{2-8}
$(0.25,-0.25)$ & $\text{Err}(\hat{\beta}(t))$  &\shortstack{\\0.006\\(0)} &\shortstack{\\0.009\\(0)} & \shortstack{\\0.005\\(0)}& \shortstack{\\0.009\\(0)}&\shortstack{\\0.005\\(0)} &\shortstack{\\0.007\\(0)} \\ \hline \hline
Scenario 6 & NMI   &\shortstack{\\1\\(0)} &\shortstack{\\1\\(0)} & \shortstack{\\1\\(0)}& \shortstack{\\0.996\\(0.004)}&\shortstack{\\1\\(0)} & \shortstack{\\0.996\\(0.004)} \\ \cline{2-8}
$(0,-0.5)$ & $\text{Err}(\hat{\beta}(t))$  &\shortstack{\\0.006\\(0)} &\shortstack{\\0.010\\(0)} &\shortstack{\\0.006\\(0)} &\shortstack{\\0.010\\(0)} & \shortstack{\\0.006\\(0)}& \shortstack{\\0.007\\(0)} \\ \hline 
\end{tabular}
\caption{Summary of clustering and estimation performance (using $\lambda=0.1$) from the time-varying model over 50 simulation runs, with $\phi=1$, $\rho=1.5$ and $n=50$.}
\label{sbm-tab:TVTweedie-sensitivity_0p1}
\end{table}

\begin{figure}[ht]
    \centering
\includegraphics[height=0.32\paperheight]{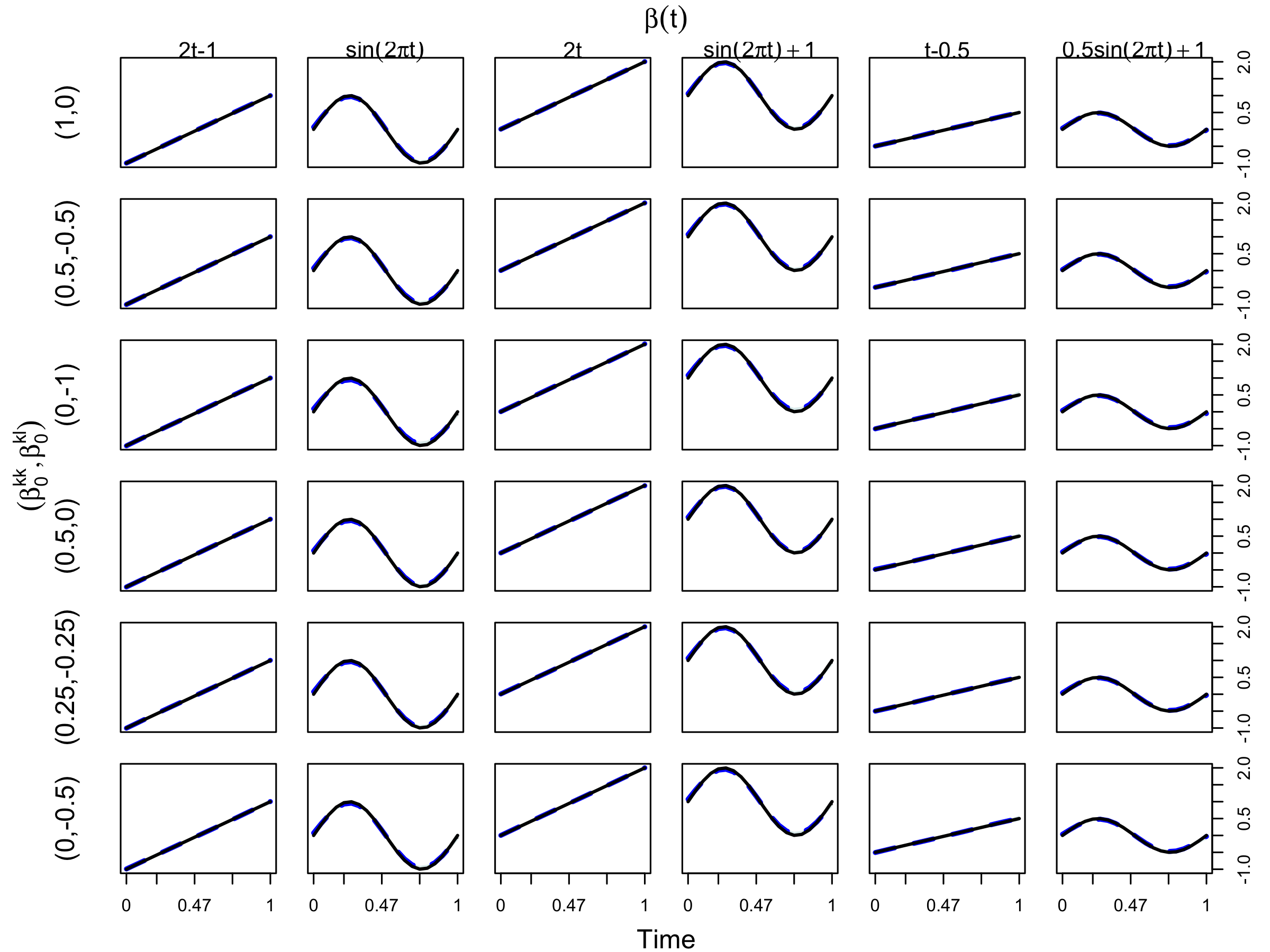}
    \caption{Estimations of the time-varying coefficients for 36 designs with $\lambda=0.1$, i.e. six block matrices by six functions for $\beta(t)$. In each panel, the black solid line represents the true $\beta(t)$ while the blue dashed line denotes the mean curve of $\hat{\beta}(t)$ and the light blue shadow marks the corresponding confidence band. }
    \label{fig:sim_TV_betat_lam0p1}
\end{figure}

\end{document}